\documentclass[11pt]{article}

\usepackage{microtype}
\usepackage{graphicx}
\usepackage{booktabs}

\usepackage{hyperref}
\usepackage[table]{xcolor}
\usepackage{float}

\usepackage[T1]{fontenc}
\usepackage{lmodern}

\usepackage{multirow}

\newcommand{\declarecolor}[2]{\definecolor{#1}{RGB}{#2}\expandafter\newcommand\csname #1\endcsname[1]{\textcolor{#1}{##1}}}
\declarecolor{White}{255, 255, 255}
\declarecolor{Black}{0, 0, 0}
\declarecolor{Maroon}{128, 0, 0}
\declarecolor{Coral}{255, 127, 80}
\declarecolor{Red}{182, 21, 21}
\declarecolor{LimeGreen}{50, 205, 50}
\declarecolor{DarkGreen}{0, 80, 0}
\declarecolor{Purple}{180, 60, 100}
\declarecolor{Navy}{0, 0, 128}
\declarecolor{LightBlue}{84, 101, 202}

\definecolor{mydarkblue}{rgb}{0.10,0.18,0.42}

\declarecolor{DarkCrimson}{120, 8, 28}
\declarecolor{DarkTeal}{0, 85, 85}
\declarecolor{DeepBurgundy}{140, 20, 40}
\declarecolor{SteelBlue}{55, 70, 130}
\declarecolor{Gold}{212, 175, 55}
\declarecolor{MutedBronze}{140, 120, 83}
\declarecolor{GoldInk}{51, 42, 12}
\declarecolor{BurnishedGold}{179, 119, 0}
\hypersetup{
    pdftitle={},
    pdfkeywords={},
    pdfborder=0 0 0,
    pdfpagemode=UseNone,
    colorlinks=true,
    linkcolor=darkred,
    citecolor=deepteal,
    filecolor=darkred,
    urlcolor=Purple,
}

\usepackage[longnamesfirst]{natbib}

\usepackage{amsmath}
\usepackage{amssymb}
\usepackage{mathtools}
\usepackage{amsthm}
\usepackage{bbm}
\usepackage{bm}
\usepackage{amsfonts}
\usepackage{frcursive}
\usepackage{alphabeta} 

\usepackage{booktabs}
\usepackage[table]{xcolor}
\usepackage{siunitx}

\usepackage{tocloft}

\usepackage{enumitem}
\setlist[enumerate]{itemsep=1mm,parsep=0mm,topsep=.5mm}
\setlist[itemize]{itemsep=1mm,parsep=0mm,topsep=.5mm}

\usepackage{nicefrac}
\usepackage{fullpage}

\usepackage[capitalize,noabbrev,nameinlink]{cleveref}

\usepackage{url}
\usepackage[ruled,vlined,linesnumbered]{algorithm2e}
\usepackage{tabularx}
\usepackage{array}
\usepackage{titletoc}

\definecolor{deepteal}{RGB}{0,95,93}
\definecolor{darkred}{RGB}{139,0,0}
\definecolor{darkgreen}{RGB}{0,80,0}
\definecolor{figgreen}{RGB}{26,74,46}
\definecolor{figwine}{RGB}{107,26,42}
\definecolor{figgreenl}{RGB}{109,184,138}

\theoremstyle{plain}
\newtheorem{theorem}{Theorem}[section]
\newtheorem{lemma}[theorem]{Lemma}
\newtheorem{proposition}[theorem]{Proposition}

\theoremstyle{definition}
\newtheorem{assumption}{Assumption}
\newtheorem{definition}{Definition}
\newtheorem{remark}{Remark}

\newcolumntype{Y}{>{\raggedright\arraybackslash}X}
\newcolumntype{L}[1]{>{\raggedright\arraybackslash}p{#1}}

\newcommand{\E}{\mathbb{E}}
\newcommand{\R}{\mathbb{R}}
\newcommand{\pr}{\mathbb{P}}
\newcommand{\indep}{\perp\!\!\!\perp}
\DeclareMathOperator{\rank}{rank}
\DeclareMathOperator{\diag}{diag}
\DeclareMathOperator{\range}{\mathcal{R}}
\DeclareMathOperator{\spec}{spec}

\title{The Spectral Structure of Latent Treatment Effects}

\author{
  \large Hamza Virk$^{1}$\thanks{Corresponding Author.} \qquad \large Bijan Mazaheri$^{1,2}$ \qquad \large Yihren Wu$^{3}$ \\[0.95em]
  \normalsize $^{1}$Dartmouth College \\[0.1em]
  \normalsize $^{2}$Broad Institute of MIT and Harvard \\[0.1em]
  \normalsize $^{3}$Hofstra University \\[0.2em]
  \normalsize \texttt{\{hamza.a.virk.th, bijan.h.mazaheri\}@dartmouth.edu, yihren.wu@hofstra.edu}
}

\date{}

\begin{document}

\maketitle

\begin{abstract}
Identifying heterogeneous treatment effects under unobserved confounding is central in observational causal inference. In proxy models with a discrete latent confounder, prior Synthetic Potential Outcomes (SPO) \citep{mazaheri2024synthetic} recover the mixture of treatment effects through recursively constructed scalar moments. We show that this sequence is one projection of a more fundamental object. Under the same population factorization assumptions, there is an exact compressed observable operator: after projecting onto the shared proxy signal subspace, the difference of two treatment-arm quotient operators is similar to the diagonal matrix of latent treatment effects. Its eigenvalues are the latent effects; its lifted left eigenvectors, after anchor normalization, recover the target-proxy feature matrix and then the latent mixture proportions. Every scalar SPO moment is a bilinear functional of a power of this operator. The resulting estimator handles overcomplete proxy systems, replaces high-order scalar inversion with finite-dimensional spectral analysis, and admits high-probability first-order perturbation bounds for treatment effects, feature rows, and simplex-projected mixture weights.
\end{abstract}

\section{Introduction}

\begin{figure}[t]
    \centering
    \includegraphics[width=\textwidth]{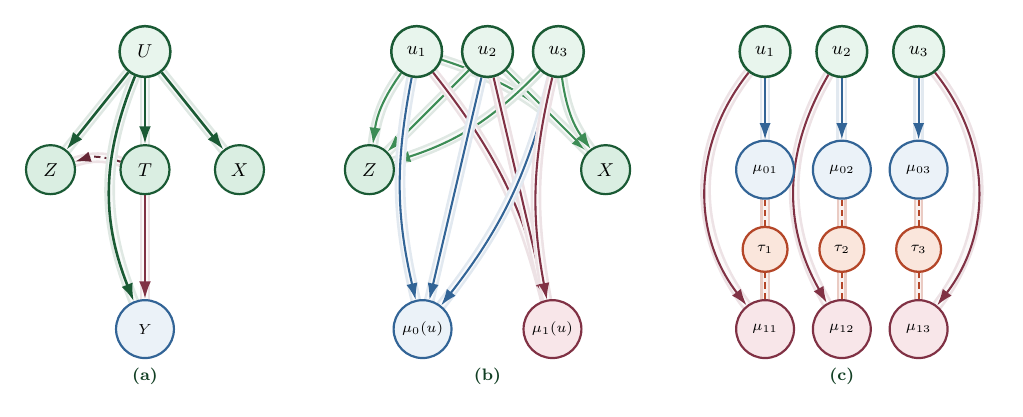}
    \caption[Causal triptych for the latent proxy mixture model]{
    Causal triptych for the latent proxy mixture model. (a) depicts the proxy causal structure: a latent confounder \(U\) drives treatment \(T\), the target proxy \(X\), the reference proxy \(Z\), and the observed outcome \(Y\). The dashed arrow from \(T\) to \(Z\) indicates that the reference-proxy feature law may be treatment-arm-specific, as encoded by \(\mathbf A_t(:,u)=\mathbb E[Z\mid U=u,T=t]\). (b) unrolls the latent classes \(u_j\), each of which generates shared target-proxy features, treatment-arm-specific reference-proxy features, and conditional potential-outcome means \(\mu_t(u_j)=\mathbb E[Y^{(t)}\mid U=u_j]\). (c) shows the classwise counterfactual pairing recovered by the compressed spectral operator: each latent treatment effect is the paired contrast \(\tau_j=\mu_1(u_j)-\mu_0(u_j)\).
    }
    \label{fig:causal_triptych}
\end{figure}

\textcursive{A brief glimpse into the past}.\vspace{-0.2em} The problem of drawing causal conclusions from non-experimental data is
as old as empirical science itself. In the 1840s, John Snow traced the
Broad Street cholera outbreak to a contaminated water pump in London, a
landmark feat of causal reasoning conducted entirely through
observational data and without a randomized trial in sight
\citep{snow1855mode}. Decades later, Karl Pearson formalized the
correlation coefficient and yet famously conflated correlation with
causation, a confusion that would haunt empirical research for
generations \citep{pearson1896regression}. The mid-twentieth century
placed the distinction between association and causation on rigorous
mathematical footing. Jerzy Neyman's potential-outcomes framework
\citep{neyman1923application} and, independently, Donald Rubin's later
formalization \citep{rubin1974estimating} gave statisticians a language
for asking what would have happened had the treatment been different.
The simultaneous-equations tradition in econometrics, crystallized at
the Cowles Commission \citep{haavelmo1944probability}, attacked the
same problem from the angle of structural identification and produced
the instrumental-variable methods still central to applied economics
today. Pearl's graphical reformulation \citep{pearl2009causality}
unified the potential-outcomes and structural-equations traditions into
the modern theory of causal inference. One adversary survived all of
these frameworks. The unobserved confounder, a hidden variable that
simultaneously influences who receives treatment and what outcome they
experience, biases estimates and produces the statistical paradoxes
that observational reasoning has spent more than a century trying to
neutralize \citep{simpson1951interpretation}.

Modern proximal causal inference confronts this adversary structurally.
Auxiliary proxy variables, often called negative controls of the
exposure or outcome, are leveraged to bypass unmeasured confounding
without ever observing the confounder itself \citep{lipsitch2010negative,
miao2018identifying, tchetgen2020introduction}. On the marginalized
Average Treatment Effect (ATE), this program has largely succeeded.
Contemporary applications such as precision medicine, targeted economic
policy, and vaccine-response heterogeneity demand more than an average.
They demand to know how treatment effects vary across unobserved
strata. Recovering this Mixture of Treatment Effects (MTE) is strictly
harder, because the strata themselves are latent. The target is a
distribution over conditional potential outcomes whose indexing
variable is never observed.

\citet{mazaheri2024synthetic} resolved the central identifiability
question for this regime. Under proxy conditional independencies and a
discrete latent confounder, they showed that the MTE distribution is
recoverable from observable cross-moments alone, with no need to
reconstruct the joint mixture. Their Synthetic Potential Outcomes (SPO)
framework realizes this identifiability through a constructive route.
A scalar sequence of synthetic treatment-effect moments is built
recursively from observable matrices, and the latent treatment-effect
support is then extracted from that sequence by a Hankel pencil.

We argue that the scalar moment sequence is not the structure of the
problem. It is one projection of it. The same causal model admits an
exact observable-operator representation under the same assumptions,
and the identification result lives most naturally inside it. After
extracting the $k$-dimensional shared signal subspace of the proxy
system, the treated and control moment factorizations collapse into
compressed operators that share a common similarity basis $\mathbf{R}$. Their difference
$\Delta\widetilde{\mathbf{Q}} = \widetilde{\mathbf{Q}}_1 - \widetilde{\mathbf{Q}}_0$ is similar to the diagonal of latent treatment effects, and its
eigenvalues are exactly $\{\tau(u)\}_{u=1}^{k}$. The matching of
$\E[Y^{(1)} \mid U = u]$ with $\E[Y^{(0)} \mid U = u]$ on the same
latent stratum happens automatically inside this shared similarity,
with no support-recovery problem to solve. The construction also
extends the scope of MTE-level SPO identification beyond the square
proxy regime. The recursive scalar moment algorithm of
\citet{mazaheri2024synthetic} inverts a $k \times k$ proxy moment
matrix at every step of its recursion and therefore requires
$d_x = d_z = k$, whereas the spectral framework projects onto the
$k$-dimensional signal subspace of a stacked observable matrix
before forming any inverse, and so accepts overcomplete proxy
systems $d_x, d_z \ge k$ directly.

The original scalar moments sequence \citet{mazaheri2024synthetic} follows
from this representation. For
observable vectors $\mathbf{a}$ and $\mathbf{c}$,
\[
m_{\ell} = \E[\tau(U)^{\ell}]
= \mathbf{a}^\top (\Delta \widetilde{\mathbf{Q}})^{\ell}\mathbf{c},
\qquad \ell \ge 0.
\]
Every moment used in the recursive construction is a bilinear
functional of a power of the same $k \times k$ operator. The compressed difference operator therefore identifies the latent treatment-effect support through its eigenvalues. Its lifted left eigenvectors, together with the anchor coordinate \(X_1=1\) and the marginal mean \(\E[X]\), identify the target-proxy feature matrix and the latent mixture proportions. Equivalently, the operator and the observable anchor pair induce the spectral measure whose atoms and masses recover the MTE distribution. The operator is the canonical object, and the moment sequence is its trace through one particular pairing.

\paragraph{Contributions.}
\textbf{First}, we give a spectral identification theorem for the latent MTE on overcomplete proxy systems. The latent treatment effects appear as the eigenvalues of a single \(k \times k\) compressed difference operator
\(\Delta\widetilde{\mathbf{Q}}\). Under spectral separation, the corresponding lifted left eigenvectors, after normalization by the anchor coordinate \(X_1=1\), recover the target-proxy feature matrix \(\mathbf{B}\); the latent mixture proportions then follow from the observable marginal equation \(\E[X]=\mathbf{B}\bm p\). Unlike the inverse-based SPO recursion, which is naturally formulated in the square proxy regime \(d_x=d_z=k\), this construction accepts overcomplete proxy systems \(d_x,d_z\ge k\) directly.

\textbf{Second}, we prove an operator-moment equivalence theorem. The scalar moment sequence underlying recursive synthetic constructions is generated by powers of the compressed difference operator, so the operator together with the observable anchor pair determines every polynomial functional of the latent treatment-effect law

\textbf{Third}, we give a population geometric characterization of the model. Exact rank and row-space identities turn latent dimensionality, strict positivity, and treatment-effect homogeneity into structural properties of observable matrices, rather than assumptions invoked from outside the data.

\textbf{Lastly}, we provide finite-sample guarantees and empirical confirmation. We prove high-probability \(n^{-1/2}\) perturbation bounds for the recovered eigenvalues, lifted eigenspaces, and also for the actual algorithmic outputs obtained after anchor normalization and simplex projection: the feature matrix \(\widehat{\mathbf B}\) and mixture weights \(\widehat{\bm p}\). Our experiments show that the compressed spectral estimator is substantially more stable than the recursive scalar implementation.

\section{Related Work}
\label{app:related_work}

We delineate our theoretical contributions relative to the following streams of literature.

\paragraph{Proximal Causal Inference and Negative Controls.}
The use of proxy variables to adjust for unmeasured confounding has accelerated following the formalization of proximal causal learning \citep{miao2018identifying, tchetgen2020introduction}. Central to this framework is the identification of a treatment-inducing proxy (negative control exposure) and an outcome-inducing proxy (negative control outcome) \citep{lipsitch2010negative, shi2020selective}. These approaches point-identify causal effects through Fredholm integral equations, kernel instrumental-variable estimators, and modern saddle-point, spectral, and density-ratio-free proxy formulations for bridge-function estimation \citet{singh2019kernel,mastouri2021proximal,miao2018confounding,bozkurt2025densityratio}. \citep{guo2026comparingproxymethodscausal} compare bridge-equation and array-decomposition proxy methods; ours is the latter. Closest in spectral spirit, \citet{pmlr-v258-sun25d} use low-rank conditional-expectation representations for IV and proxy causal learning. Whereas most proximal inference targets marginal effects, we point-identify the treatment-effect mixture across unobserved strata under a discrete-confounder assumption.

\paragraph{Spectral Latent Variable Models and Observable Operators.}
Our methodology is deeply connected to the literature on spectral learning for latent variable models and Observable Operator Models (OOMs) \citet{jaeger2000observable, hsu2013learning, anandkumar2014tensor}. In theoretical machine learning, simultaneous diagonalization and spectral tensor methods are classically deployed to recover the hidden transition dynamics of Hidden Markov Models (HMMs) or symmetric mixtures of products. However, our work adapts these algebraic tools to a strictly causal regime. Rather than recovering a sequence of temporal transitions or a symmetric joint probability tensor, we show that the asymmetric directed acyclic graph (DAG) of proximal causal inference permits the construction of a targeted low-rank quotient operator. After isolating the signal subspace, the spectrum of this operator recovers the counterfactual differences without requiring reconstruction of the full high-dimensional joint distribution. This distinguishes our causal operator from generic OOM applications.

\paragraph{Latent Mixture Identifiability and Tensor Decompositions.}
The identifiability of finite mixture models is a foundational problem in theoretical statistics. \citet{kruskal1977three} and \citet{allman2009identifiability} established the geometric conditions under which multi-way arrays uniquely decompose, identifying latent structural parameters. Within causal inference, \citet{wang2019blessings} proposed using latent factor models to deconfound multiple treatments (the ``blessings of multiple causes''), though this approach was critically examined regarding strict identifiability guarantees \citet{ogburn2019comment, ogburn2020counterexamples}. Modern algorithms have vastly improved sample complexity for specific mixture graphs and related finite-class confounding models \citet{gordon2023causal,gordon2023identification,gordon2021source,rabani2014learning,mazaheri2023latentclassconfounding}. While Canonical Polyadic (CP) tensor decompositions generally aim to recover the full joint probability distribution and often scale poorly with condition number dependencies, our work demonstrates that targeting causal MTEs structurally bypasses the need for full tensor reconstruction, enabling identification strictly through low-dimensional subspace projections.

\paragraph{Heterogeneous Treatment Effects (HTEs) and Latent Subgroups.}
A vast literature exists for estimating conditional average treatment effects (CATEs) when the adjustment set is fully observed \citep{imai2011estimation, xie2012estimating}. Modern approaches utilize flexible machine learning meta-learners \citep{kunzel2019causaltoolbox, nie2021quasi, wendling2018comparing} and causal forests \citep{wager2018estimation}. Recent work also studies treatment-effect uncertainty beyond mean CATEs under unobserved confounding \citep{xu2026aleatoric}. Our work addresses a strictly harder regime where the heterogeneity is governed by a variable $U$ that is entirely unobserved. In this regime, standard distance-based clustering or nearest-neighbor matching fails fundamentally because the observable proxy distributions overlap topologically. Identifying these effects requires structural algebraic methods to mathematically untangle the counterfactual states prior to analyzing heterogeneity \citep{pearl2022detecting, loh2022evaluating, lyu2023estimating, kim2015mixture, suk2021hybridizing}.

\paragraph{Synthetic Interventions and Polynomial Moment Methods.}
Our methodology shares conceptual lineage with synthetic controls
\citet{abadie2010synthetic, abadie2021using} and synthetic interventions
\citet{squires2022causal, agarwal2023causal}, which construct unobserved
counterfactuals via linear combinations of observed units. Our direct
structural predecessor, \citet{mazaheri2024synthetic}, established MTE
identifiability through Synthetic Potential Outcomes (SPOs). Their
constructive route expresses the latent treatment-effect distribution
through recursively synthesized scalar moments and recovers its support
by a Hankel pencil. We show that this scalar sequence is one projection
of a deeper object. Every latent treatment-effect moment is a bilinear
functional of a power of a single compressed observable operator
$\Delta\widetilde{\mathbf{Q}}$, so the operator alone determines the
entire SPO identification target. Working with this operator directly
reduces high-order scalar inversion to finite-dimensional spectral
analysis over second- and third-order observable matrices, and
naturally accommodates rectangular proxy systems through shared-subspace
compression.

\section{Preliminaries}

We consider a latent Mixture of Treatment Effects (MTE) setting. Let $T \in \{0, 1\}$ denote a binary treatment and $Y \in \R$ denote the observed continuous or discrete outcome. We assume the presence of an unobserved, discrete latent confounder $U \in \{1, \dots, k\}$. For the population identification results below, the latent dimension $k$ is treated as known. In applications where $k$ is unknown, one may select it using a separate rank-selection procedure applied to the stacked observable moment matrix.

We observe two vectors of proxy features: a reference proxy $Z \in \R^{d_z}$ and a target proxy $X \in \R^{d_x}$, written explicitly as
\[
Z=(Z_1,\ldots,Z_{d_z})^\top,
\qquad
X=(X_1,\ldots,X_{d_x})^\top.
\]
We anchor the probability scale by defining the first element of the target proxy as a constant,
\[
X_1=1.
\]
We explicitly allow for overcomplete feature spaces, such that $d_x, d_z \ge k$.

\subsection{Structural Causal Assumptions}

To isolate the causal parameters from spurious observational dependence, we assume the conditional independencies used by the proxy-factorization framework of \citet{mazaheri2024synthetic}. These relations are exactly the ones needed to derive the observable matrix factorizations below.

\begin{assumption}[Proxy Causal Structure] \label{ass:graph}
The observed variables satisfy
\begin{equation}
Z \indep (X,Y) \mid (T,U),
\qquad
X \indep (Y,T) \mid U.
\end{equation}
In addition, we assume causal consistency, $Y=Y^{(T)}$, and latent ignorability,
\[
Y^{(t)} \indep T \mid U,
\qquad t \in \{0,1\}.
\]
\end{assumption}

Equivalently, conditional on $(T,U)$, the reference proxy $Z$ is separated from both the outcome and the target proxy, while the target proxy $X$ depends on treatment and outcome only through the latent state $U$. These relations yield the factorizations in Equations \ref{eq:fac1} and \ref{eq:fac2} by repeated application of the law of total expectation.

\begin{figure}[t]
    \centering
    \includegraphics[width=\textwidth]{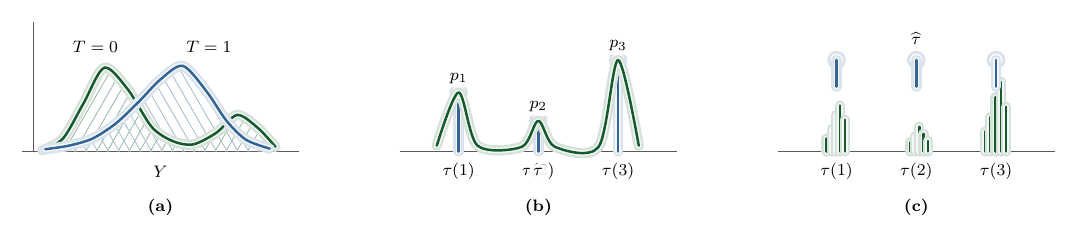}
    \caption{
    Distributional view of latent treatment effects.
    (a) The observed arm-specific outcome distributions need not reveal the latent causal support.
    (b) The latent treatment-effect law is discrete, supported on $\{\tau(u)\}_{u=1}^k$ with masses $\Pr(U=u)$.
    (c) Spectral recovery targets this latent support through the recovered values $\widehat{\tau}$.
    }
    \label{fig:latent_distribution}
\end{figure}

\subsection{Matrix Definitions and Factorization}

Our goal is to recover unobserved causal parameters from strictly observable data. For each treatment arm $t \in \{0, 1\}$, we define the empirically observable cross-moment matrix $\mathbf{M}_{ZX|t} \in \R^{d_z \times d_x}$ and the outcome-weighted cross-moment matrix $\mathbf{M}_{ZXY|t} \in \R^{d_z \times d_x}$ as:
\begin{align}
    \mathbf{M}_{ZX|t} &= \E[Z X^\top \mid T=t], \\
    \mathbf{M}_{ZXY|t} &= \E[Z X^\top Y \mid T=t].
\end{align}
To build our geometric framework, we structure the corresponding unobservable latent parameters as matrices. Define $\mathbf{A}_t \in \R^{d_z \times k}$ and $\mathbf{B} \in \R^{d_x \times k}$ such that their $u$-th columns represent the conditional expectations of the proxy features given the latent state $U=u$:

Writing $Z=(Z_1,\ldots,Z_{d_z})^\top$ and $X=(X_1,\ldots,X_{d_x})^\top$,
\[
\mathbf{A}_t
=
\begin{bmatrix}
\E[Z_1 \mid U=1,T=t] & \E[Z_1 \mid U=2,T=t] & \cdots & \E[Z_1 \mid U=k,T=t] \\
\E[Z_2 \mid U=1,T=t] & \E[Z_2 \mid U=2,T=t] & \cdots & \E[Z_2 \mid U=k,T=t] \\
\vdots & \vdots & \ddots & \vdots \\
\E[Z_{d_z} \mid U=1,T=t] & \E[Z_{d_z} \mid U=2,T=t] & \cdots & \E[Z_{d_z} \mid U=k,T=t]
\end{bmatrix},
\]
and
\[
\mathbf{B}
=
\begin{bmatrix}
\E[X_1 \mid U=1] & \E[X_1 \mid U=2] & \cdots & \E[X_1 \mid U=k] \\
\E[X_2 \mid U=1] & \E[X_2 \mid U=2] & \cdots & \E[X_2 \mid U=k] \\
\vdots & \vdots & \ddots & \vdots \\
\E[X_{d_x} \mid U=1] & \E[X_{d_x} \mid U=2] & \cdots & \E[X_{d_x} \mid U=k]
\end{bmatrix}.
\]
Because $X \indep T \mid U$, the target feature matrix $\mathbf{B}$ does not depend on the treatment arm. Let $\mathbf{D}_{U|t} \in \R^{k \times k}$ and $\mathbf{D}_{Y|t} \in \R^{k \times k}$ denote diagonal matrices containing the latent conditional probabilities and conditional potential outcomes, respectively:

That is,
\[
\mathbf{D}_{U|t}
=
\begin{bmatrix}
\pr(U=1 \mid T=t) & 0 & \cdots & 0 \\
0 & \pr(U=2 \mid T=t) & \cdots & 0 \\
\vdots & \vdots & \ddots & \vdots \\
0 & 0 & \cdots & \pr(U=k \mid T=t)
\end{bmatrix},
\]
and
\[
\mathbf{D}_{Y|t}
=
\begin{bmatrix}
\E[Y^{(t)} \mid U=1] & 0 & \cdots & 0 \\
0 & \E[Y^{(t)} \mid U=2] & \cdots & 0 \\
\vdots & \vdots & \ddots & \vdots \\
0 & 0 & \cdots & \E[Y^{(t)} \mid U=k]
\end{bmatrix}.
\]
By systematically applying the law of total expectation and leveraging the stated conditional independencies, the observable proxy matrices factorize algebraically into the unobservable parameters as follows:
\begin{align}
    \mathbf{M}_{ZX|t} &= \mathbf{A}_t \mathbf{D}_{U|t} \mathbf{B}^\top, \label{eq:fac1} \\
    \mathbf{M}_{ZXY|t} &= \mathbf{A}_t \mathbf{D}_{U|t} \mathbf{D}_{Y|t} \mathbf{B}^\top. \label{eq:fac2}
\end{align}

Figure~\ref{fig:proxy_factorization} depicts this factorization.

To ensure the causal retrieval problem is structurally well-posed and the latent classes are linearly distinguishable, we formalize the necessary rank condition underlying the proxy framework.

\begin{figure}[t]
    \centering
    \includegraphics[width=1.0\textwidth]{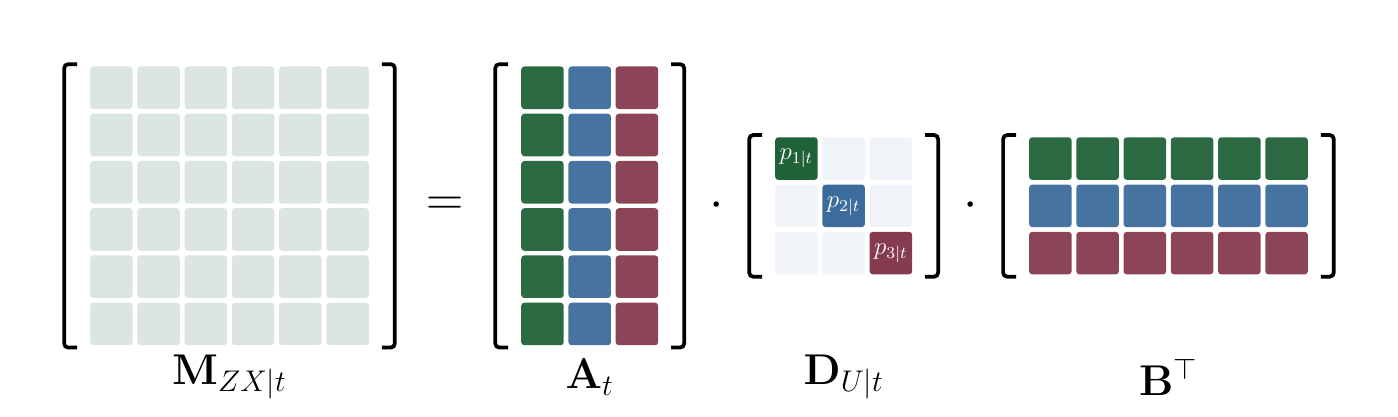}
    \caption{Visual representation of the proxy matrix factorization $\mathbf{M}_{ZX|t} = \mathbf{A}_t \mathbf{D}_{U|t} \mathbf{B}^\top$. The diagram illustrates how the observable cross-moment matrix is constrained by the $k$-dimensional latent space.}
    \label{fig:proxy_factorization}
\end{figure}

\begin{assumption}[Proxy Richness] \label{ass:richness}
        The proxy feature spaces are sufficiently informative such that $d_z \ge k$ and $d_x \ge k$. Furthermore, the unobservable conditional expectation matrices $\mathbf{A}_0$, $\mathbf{A}_1$, and $\mathbf{B}$ have full column rank $k$.
\end{assumption}

\section{The Geometry of Latent Confounding}
\label{sec:geometry}

Before formulating the estimation algorithm, it is instructive to consider the geometric implications of our causal assumptions on the observable moment spaces. Classical algebraic methods for estimating MTEs recursively substitute the factorizations in Equation \ref{eq:fac1} and Equation \ref{eq:fac2} into scalar moment sequences. In this section, we abandon scalar sequence generation entirely. Instead, we analyze the structural properties of these factorizations directly through the lens of matrix geometry. By understanding the geometric subspaces spanned by the proxy matrices, we establish powerful population-level diagnostics for causal complexity and distributional overlap.

\subsection{Intrinsic Causal Dimensionality}

In observational practice, the true number of latent confounding classes $k$ is rarely known a priori. Before executing any algorithm, a practitioner must have a principled way to ascertain this dimension. If Assumption \ref{ass:richness} holds, we demonstrate that the intrinsic dimension $k$ manifests exactly as the rank of the observable matrices, provided the latent classes are actively populated.

\begin{theorem}[Intrinsic Causal Dimensionality] \label{thm:dimensionality}
Let Assumption \ref{ass:richness} hold. Assume that every latent class exists with strictly positive probability under at least one treatment arm (i.e., $\max_{t \in \{0,1\}} \pr(U=u \mid T=t) > 0$ for all $u$). The intrinsic dimensionality $k$ of the unobserved confounding space is exactly identified by the algebraic rank of the concatenated cross-moment matrix:
\begin{equation}
    k = \rank \left( \begin{bmatrix} \mathbf{M}_{ZX|0} \\ \mathbf{M}_{ZX|1} \end{bmatrix} \right).
\end{equation}
\end{theorem}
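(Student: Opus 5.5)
We will use the factorization \eqref{eq:fac1} to write the stacked matrix as a product of three factors:
\[
\mathbf{M} := \begin{bmatrix} \mathbf{M}_{ZX|0} \\ \mathbf{M}_{ZX|1} \end{bmatrix}
= \underbrace{\begin{bmatrix} \mathbf{A}_0 & \mathbf{0} \\ \mathbf{0} & \mathbf{A}_1 \end{bmatrix}}_{\mathbf{G}\in\R^{2d_z\times 2k}}
\underbrace{\begin{bmatrix} \mathbf{D}_{U|0} \\ \mathbf{D}_{U|1} \end{bmatrix}}_{\mathbf{S}\in\R^{2k\times k}}
\mathbf{B}^\top .
\]
The plan is to show that each factor preserves rank, so that $\rank(\mathbf{M}) = \rank(\mathbf{S}) = k$.

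First, $\mathbf{G}$ has full column rank $2k$, because each diagonal block $\mathbf{A}_t$ has full column rank $k$ by Assumption~\ref{ass:richness}. Left multiplication by a matrix with full column rank is injective, so it preserves rank. Hence $\rank(\mathbf{M}) = \rank(\mathbf{S}\mathbf{B}^\top)$.

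Second, $\mathbf{S}$ has full column rank $k$. Its $u$-th column is supported only on the coordinates $u$ and $k+u$, with entries $\pr(U=u\mid T=0)$ and $\pr(U=u\mid T=1)$. By the positivity hypothesis $\max_t \pr(U=u\mid T=t)>0$, every column is nonzero. The columns have disjoint supports, so they are linearly independent. Equivalently, $\mathbf{S}^\top\mathbf{S} = \mathbf{D}_{U|0}^2 + \mathbf{D}_{U|1}^2$ is a diagonal matrix with strictly positive entries, and is therefore invertible.

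Third, $\mathbf{B}^\top \in \R^{k\times d_x}$ has full row rank $k$ by Assumption~\ref{ass:richness}. Since $\mathbf{S}$ is injective, $\rank(\mathbf{S}\mathbf{B}^\top) = \rank(\mathbf{B}^\top) = k$. Combining the three steps gives $\rank(\mathbf{M}) = k$.

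There is no real obstacle here. The only point needing care is the second step, which is exactly where the positivity hypothesis enters. It explains why stacking the two arms is necessary: a single $\mathbf{M}_{ZX|t}$ can have rank below $k$ if some class has probability zero in arm $t$. We will also state explicitly that $\mathbf{A}_t$ is only required to have full column rank. This is what makes the block-diagonal factor $\mathbf{G}$ injective without any condition on how $\mathbf{A}_0$ and $\mathbf{A}_1$ relate to each other.
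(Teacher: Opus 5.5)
Your proof is correct and is essentially the paper's argument. The paper shows that $\mathbf{C}_{\mathrm{stack}} = \bigl[\mathbf{A}_0\mathbf{D}_{U|0};\,\mathbf{A}_1\mathbf{D}_{U|1}\bigr]$ has trivial kernel by first cancelling each $\mathbf{A}_t$ with $\mathbf{A}_t^\dagger$ and then using positivity coordinatewise, which amounts to your injectivity of $\mathbf{G}$ followed by injectivity of $\mathbf{S}$, just without writing the factorization $\mathbf{C}_{\mathrm{stack}} = \mathbf{G}\mathbf{S}$ explicitly.
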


Theorem \ref{thm:dimensionality} identifies the latent causal dimension exactly through the rank of the stacked observable cross-moment matrix. This gives a natural spectral target for empirical rank selection.

\subsection{Structural Signatures of Positivity}

Estimating valid causal effects requires the strict overlap condition $0 < \pr(T=1 \mid U=u) < 1$. In the present factorized proxy model, positivity induces an exact row-space statement at the population level. Let $\range(\mathbf{M})$ denote the column space (image) of a matrix.

\begin{theorem}[Geometric Signature of Positivity] \label{thm:positivity}
Under Assumption \ref{ass:richness} and assuming marginal support $\pr(U=u) > 0$ for all $u$:
\begin{enumerate}
    \item If strict latent positivity holds, i.e., $\pr(T=t \mid U=u) > 0$ for all $u \in \{1,\dots,k\},\; t \in \{0,1\}$, then
    \begin{equation}
        \range(\mathbf{M}_{ZX|1}^\top) = \range(\mathbf{M}_{ZX|0}^\top) = \range(\mathbf{B}).
    \end{equation}
    \item If positivity fails for some pair $(u^*,t^*)$, i.e., $\pr(T=t^* \mid U=u^*) = 0$, then
    \begin{equation}
        \rank(\mathbf{M}_{ZX|t^*}) < k,
        \qquad
        \range(\mathbf{M}_{ZX|t^*}^\top) \subsetneq \range(\mathbf{B}).
    \end{equation}
    In particular, the treated and control row spaces cannot both coincide with $\range(\mathbf{B})$.
\end{enumerate}
\end{theorem}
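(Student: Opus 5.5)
The plan is to work directly with the factorization \eqref{eq:fac1}, transposed:
\[
\mathbf{M}_{ZX|t}^\top = \mathbf{B}\,\mathbf{D}_{U|t}\,\mathbf{A}_t^\top .
\]
Everything reduces to tracking ranks through the three factors. First, since $\mathbf{M}_{ZX|t}^\top = \mathbf{B}(\mathbf{D}_{U|t}\mathbf{A}_t^\top)$, every column of $\mathbf{M}_{ZX|t}^\top$ is a linear combination of columns of $\mathbf{B}$. This gives the unconditional inclusion $\range(\mathbf{M}_{ZX|t}^\top)\subseteq\range(\mathbf{B})$ for both arms. By Assumption~\ref{ass:richness}, $\range(\mathbf{B})$ has dimension exactly $k$. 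Both parts of the theorem then become statements about $\rank(\mathbf{M}_{ZX|t})$: when an inclusion of subspaces is given, equality holds iff the dimensions agree.

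\textbf{Part 1.} By Bayes' rule,
\[
\pr(U=u\mid T=t)=\frac{\pr(T=t\mid U=u)\,\pr(U=u)}{\pr(T=t)} .
\]
Strict positivity together with marginal support makes every numerator positive. Positivity also forces $\pr(T=t)>0$, so the conditioning is well defined. Hence $\mathbf{D}_{U|t}$ is invertible for each $t$.

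Next I compute the rank. Since $\mathbf{A}_t$ has full column rank $k$, the matrix $\mathbf{A}_t^\top\in\R^{k\times d_z}$ has full row rank and is surjective onto $\R^k$. Composing with the invertible $\mathbf{D}_{U|t}$ keeps $\mathbf{D}_{U|t}\mathbf{A}_t^\top$ surjective onto $\R^k$. Therefore
\[
\range(\mathbf{B}\mathbf{D}_{U|t}\mathbf{A}_t^\top)=\mathbf{B}\,\R^k=\range(\mathbf{B}).
\]
This holds for $t=0$ and $t=1$, which yields both equalities at once. Equivalently, the rank is $k$ because $\mathbf{B}$ is injective on $\R^k$.

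\textbf{Part 2.} If $\pr(T=t^*\mid U=u^*)=0$, Bayes' rule gives $\pr(U=u^*\mid T=t^*)=0$. (I assume $\pr(T=t^*)>0$ so that $\mathbf{M}_{ZX|t^*}$ is defined; this should be noted explicitly.) So $\mathbf{D}_{U|t^*}$ has a zero diagonal entry and rank at most $k-1$. By submultiplicativity of rank,
\[
\rank(\mathbf{M}_{ZX|t^*})\le\rank(\mathbf{D}_{U|t^*})\le k-1<k .
\]
Combined with the general inclusion into the $k$-dimensional $\range(\mathbf{B})$, the inclusion must be strict. The final sentence of the part follows immediately: the arm $t^*$ has row space strictly smaller than $\range(\mathbf{B})$, so the two row spaces cannot both equal it.

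There is no serious obstacle here; the argument is pure linear algebra. The only delicate points are bookkeeping:
\begin{itemize}
\item translating the latent-positivity hypotheses on $\pr(T\mid U)$ into statements about the diagonal of $\mathbf{D}_{U|t}=\diag(\pr(U\mid T=t))$ via Bayes' rule, using marginal support;
\item checking that the well-definedness of conditioning on $T=t$ is implied, or is harmless in the degenerate case.
\end{itemize}
I would make the surjectivity-of-$\mathbf{A}_t^\top$ step explicit, since that is where full column rank of $\mathbf{A}_t$, and not merely of $\mathbf{B}$, is used.
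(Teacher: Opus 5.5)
Your proof is correct and follows the paper's argument: transpose the factorization, use Bayes' rule with marginal support to read off the diagonal of $\mathbf{D}_{U|t}$, and compare dimensions inside the $k$-dimensional $\range(\mathbf{B})$. The paper first proves $\range(\mathbf{M}_{ZX|t}^\top)=\range(\mathbf{B}\mathbf{D}_{U|t})$ using full row rank of $\mathbf{A}_t^\top$. In Part 2 you instead bound the rank directly by $\rank(\mathbf{D}_{U|t^*})$, which incidentally does not use the rank of $\mathbf{A}_{t^*}$, and your remark that $\pr(T=t^*)>0$ is implicitly required is a fair addition.
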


Theorem \ref{thm:positivity} identifies positivity through an exact population row-space identity induced by the proxy factorization. Under the model, overlap is therefore equivalent to full participation of every latent class in both treatment-specific proxy row spaces.

\section{Spectral Identification of Potential Outcomes}
\label{sec:spectral_outcomes}

We now turn to the central task: identifying the latent conditional potential outcomes $\E[Y^{(t)} \mid U]$ and the induced treatment-effect distribution. Rather than recursively generating scalar moments, we construct an observable operator whose similarity class coincides with the latent diagonal potential-outcome operator. This formulation recovers the spectral support directly and, as shown below, generates the full synthetic moment sequence through operator powers. We first define the mapping in the ambient feature space to expose the algebraic cancellation, and then compress it to the shared signal subspace to obtain the intrinsic $k \times k$ representation. Let $\dagger$ denote the Moore--Penrose pseudo-inverse
\citep{penrose1955generalized,benisrael2003generalized}. 

\subsection{The Ambient Quotient Operator}

We begin by exhibiting the observable operator that encodes the latent diagonal structure.

\begin{definition}[Ambient Spectral Operator] \label{def:quotient}
For $t \in \{0, 1\}$, define the ambient spectral operator $\mathbf{Q}_t \in \R^{d_x \times d_x}$ as:
\begin{equation}
    \mathbf{Q}_t \triangleq \mathbf{M}_{ZX|t}^\dagger \mathbf{M}_{ZXY|t}.
\end{equation}
\end{definition}

\begin{theorem}[Ambient Spectral Structure] \label{thm:spectral_identification}
Assume Assumption~\ref{ass:richness} and strict latent positivity hold. The ambient quotient operator satisfies
\begin{equation} \label{eq:clean_Q}
    \mathbf{Q}_t = (\mathbf{B}^\top)^\dagger \mathbf{D}_{Y|t} \mathbf{B}^\top.
\end{equation}
Moreover, $\mathbf{Q}_t (\mathbf{B}^\top)^\dagger = (\mathbf{B}^\top)^\dagger \mathbf{D}_{Y|t}$, so the $k$-dimensional subspace $\range(\mathbf{B}) = \range((\mathbf{B}^\top)^\dagger)$ is $\mathbf{Q}_t$-invariant. The restriction of $\mathbf{Q}_t$ to that subspace is similar to $\mathbf{D}_{Y|t}$, and its spectral values there are exactly $\left\{\E[Y^{(t)} \mid U=u]\right\}_{u=1}^k$ counted with multiplicity. Dually, the rows of $\mathbf{B}^\top$ form a left-eigenbasis for the signal component, with eigenvalues given by the diagonal entries of $\mathbf{D}_{Y|t}$.
\end{theorem}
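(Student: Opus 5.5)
The plan is to exploit the pseudo-inverse identity for a product of a full-column-rank factor and a full-row-rank factor. Write $\mathbf{M}_{ZX|t} = \mathbf{F}_t \mathbf{B}^\top$ with $\mathbf{F}_t \triangleq \mathbf{A}_t \mathbf{D}_{U|t} \in \R^{d_z\times k}$.

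\textbf{Step 1: rank of the factors.} Strict latent positivity and $\pr(U=u)>0$ give $\pr(U=u\mid T=t)>0$ for every $u$ and $t$. Hence $\mathbf{D}_{U|t}$ is invertible. By Assumption~\ref{ass:richness}, $\mathbf{F}_t$ then has full column rank $k$, and $\mathbf{B}^\top$ has full row rank $k$.

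\textbf{Step 2: the pseudo-inverse formula.} Next I invoke the standard fact that $(\mathbf{F}\mathbf{G})^\dagger = \mathbf{G}^\dagger \mathbf{F}^\dagger$ whenever $\mathbf{F}$ has full column rank and $\mathbf{G}$ has full row rank. I would verify this directly through the four Penrose conditions, using $\mathbf{F}^\dagger\mathbf{F}=\mathbf{I}_k$ and $\mathbf{G}\mathbf{G}^\dagger=\mathbf{I}_k$, with explicit forms $\mathbf{F}^\dagger=(\mathbf{F}^\top\mathbf{F})^{-1}\mathbf{F}^\top$ and $\mathbf{G}^\dagger=\mathbf{G}^\top(\mathbf{G}\mathbf{G}^\top)^{-1}$. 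The two products $\mathbf{G}^\dagger\mathbf{G}$ and $\mathbf{F}\mathbf{F}^\dagger$ are orthogonal projectors and hence symmetric, which settles the symmetry conditions.

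This gives $\mathbf{M}_{ZX|t}^\dagger = (\mathbf{B}^\top)^\dagger \mathbf{F}_t^\dagger$. Combining with \eqref{eq:fac2} yields
\[
\mathbf{Q}_t = (\mathbf{B}^\top)^\dagger \mathbf{F}_t^\dagger \mathbf{F}_t \mathbf{D}_{Y|t}\mathbf{B}^\top = (\mathbf{B}^\top)^\dagger \mathbf{D}_{Y|t}\mathbf{B}^\top,
\]
which is \eqref{eq:clean_Q}. The only genuine point of care in the whole argument is this step: the reverse-order law for pseudo-inverses fails in general, so the full-rank conditions from Step~1 must be checked explicitly.

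\textbf{Step 3: invariance and similarity.} Since $\mathbf{B}^\top(\mathbf{B}^\top)^\dagger = \mathbf{I}_k$ by full row rank, right-multiplying gives $\mathbf{Q}_t(\mathbf{B}^\top)^\dagger = (\mathbf{B}^\top)^\dagger\mathbf{D}_{Y|t}$. The explicit form $(\mathbf{B}^\top)^\dagger = \mathbf{B}(\mathbf{B}^\top\mathbf{B})^{-1}$ shows that $\range((\mathbf{B}^\top)^\dagger)=\range(\mathbf{B})$, a space of dimension $k$.

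Let $\mathbf{R}=(\mathbf{B}^\top)^\dagger$. Its columns form a basis of $\range(\mathbf{B})$, and the intertwining relation $\mathbf{Q}_t\mathbf{R}=\mathbf{R}\mathbf{D}_{Y|t}$ shows that $\range(\mathbf{B})$ is $\mathbf{Q}_t$-invariant. It also shows that the matrix of the restriction in this basis is exactly $\mathbf{D}_{Y|t}$. The restriction is therefore similar to $\mathbf{D}_{Y|t}$, so its eigenvalues with multiplicity are $\E[Y^{(t)}\mid U=u]$.

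\textbf{Step 4: left eigenvectors.} Using $\mathbf{B}^\top(\mathbf{B}^\top)^\dagger=\mathbf{I}_k$ on the left gives $\mathbf{B}^\top\mathbf{Q}_t = \mathbf{D}_{Y|t}\mathbf{B}^\top$. Row by row, $\mathbf{b}_u^\top\mathbf{Q}_t = \E[Y^{(t)}\mid U=u]\,\mathbf{b}_u^\top$. So the $k$ linearly independent rows of $\mathbf{B}^\top$ are left eigenvectors with the stated eigenvalues.

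For completeness I would also note that $\mathbf{Q}_t$ annihilates $\range(\mathbf{B})^\perp=\ker\mathbf{B}^\top$. This identifies the remaining spectrum as zero, and it is what justifies calling the rows of $\mathbf{B}^\top$ a basis for the signal component.
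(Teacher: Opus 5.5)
Your proof is correct and follows the paper's argument essentially step for step. You apply the reverse-order pseudo-inverse law to the full-column-rank factor $\mathbf{A}_t\mathbf{D}_{U|t}$ times the full-row-rank $\mathbf{B}^\top$, cancel $\mathbf{F}_t^\dagger\mathbf{F}_t=\mathbf{I}_k$, and then right- and left-multiply by $(\mathbf{B}^\top)^\dagger$ and $\mathbf{B}^\top$ to obtain the intertwining and left-eigenvector relations. Your extra remark that $\mathbf{Q}_t$ vanishes on $\ker\mathbf{B}^\top$, so the remaining ambient spectrum is zero, is a small complement that the paper leaves implicit.
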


\subsection{Shared Subspace Compression and Counterfactual Alignment}
\label{sec:shared_subspace}

While Theorem \ref{thm:spectral_identification} isolates the potential outcomes at the population level, the ambient representation is not intrinsic: it acts on $\R^{d_x}$ even though the causal dynamics have rank $k$. The shared-subspace compression below removes ambient null directions, places the treated and control operators in the same $k$-dimensional coordinate system, and yields the canonical representative on which counterfactual pairing is exact.

Let
\[
\mathbf{M}_{\mathrm{stack}}
\triangleq
\begin{bmatrix}
\mathbf{M}_{ZX|0}\\
\mathbf{M}_{ZX|1}
\end{bmatrix}
\in \R^{2d_z \times d_x}
\]
be the stacked proxy cross-moment matrix. By Theorems \ref{thm:dimensionality} and \ref{thm:positivity}, this matrix has rank $k$ and satisfies
\[
\range(\mathbf{M}_{\mathrm{stack}}^\top)=\range(\mathbf{B}).
\]
Hence its top-$k$ right singular subspace is exactly the row space of $\mathbf{B}^\top$. Let $\mathbf{V}_k \in \R^{d_x \times k}$ be any orthonormal basis of this subspace. Then there exists a nonsingular matrix $\mathbf{R} \in \R^{k \times k}$ such that:
\begin{equation} \label{eq:R_basis}
    \mathbf{B}^\top = \mathbf{R} \mathbf{V}_k^\top.
\end{equation}
By compressing our empirical data onto $\mathbf{V}_k$, we define the \textit{compressed causal operator} strictly within this stable, lower-dimensional $k \times k$ manifold.

\begin{definition}[Compressed Spectral Operator] \label{def:compressed_operator}
For $t \in \{0, 1\}$, define the compressed operator $\widetilde{\mathbf{Q}}_t \in \R^{k \times k}$ as the pseudo-inverse projection of the targeted moments onto the shared basis:
\begin{equation}
    \widetilde{\mathbf{Q}}_t \triangleq (\mathbf{M}_{ZX|t} \mathbf{V}_k)^\dagger (\mathbf{M}_{ZXY|t} \mathbf{V}_k).
\end{equation}
\end{definition}

\begin{assumption}[Spectral Separation] \label{ass:separation}
    The conditional treatment effects are mutually distinct across latent classes. That is, for all $u \neq v$, $\tau(u) \triangleq \E[Y^{(1)} \mid U=u] - \E[Y^{(0)} \mid U=u] \neq \tau(v)$.
\end{assumption}

\begin{theorem}[Compressed Counterfactual Pairing] \label{thm:pairing}
Assume Assumption~\ref{ass:richness} and strict latent positivity. Let $\mathbf{V}_k$ be any orthonormal basis of the shared row space of $\mathbf{M}_{\mathrm{stack}}$, and let $\mathbf{R}$ be defined by $\mathbf{B}^\top = \mathbf{R}\mathbf{V}_k^\top$.
Then, for each $t \in \{0,1\}$, $\widetilde{\mathbf{Q}}_t = \mathbf{R}^{-1}\mathbf{D}_{Y|t}\mathbf{R}$.
Consequently,
\begin{equation}
\Delta \widetilde{\mathbf{Q}}
\triangleq
\widetilde{\mathbf{Q}}_1-\widetilde{\mathbf{Q}}_0
=
\mathbf{R}^{-1}(\mathbf{D}_{Y|1}-\mathbf{D}_{Y|0})\mathbf{R}
=
\mathbf{R}^{-1}\mathbf{D}_{\tau}\mathbf{R},
\end{equation}
where $\mathbf{D}_{\tau}\triangleq \diag(\tau(1),\dots,\tau(k))$.
Hence the multiset of eigenvalues of $\Delta \widetilde{\mathbf{Q}}$ is exactly $\{\tau(u)\}_{u=1}^k$, counted with multiplicity. Under Assumption~\ref{ass:separation}, these eigenvalues are simple, and the corresponding left eigenvectors identify the rows of $\mathbf{R}$ up to row-wise scaling and common permutation.
\end{theorem}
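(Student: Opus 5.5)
Substituting $\mathbf{B}^\top=\mathbf{R}\mathbf{V}_k^\top$ into the factorizations \eqref{eq:fac1}--\eqref{eq:fac2}, and using $\mathbf{V}_k^\top\mathbf{V}_k=\mathbf{I}_k$, gives
\[
\mathbf{M}_{ZX|t}\mathbf{V}_k=\mathbf{A}_t\mathbf{D}_{U|t}\mathbf{R},
\qquad
\mathbf{M}_{ZXY|t}\mathbf{V}_k=\mathbf{A}_t\mathbf{D}_{U|t}\mathbf{D}_{Y|t}\mathbf{R}.
\]
Write $\mathbf{F}_t\triangleq\mathbf{A}_t\mathbf{D}_{U|t}\mathbf{R}\in\R^{d_z\times k}$. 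Then $\mathbf{M}_{ZXY|t}\mathbf{V}_k=\mathbf{F}_t\,\mathbf{R}^{-1}\mathbf{D}_{Y|t}\mathbf{R}$, because the diagonal matrices commute in the sense that $\mathbf{A}_t\mathbf{D}_{U|t}\mathbf{D}_{Y|t}\mathbf{R}=\mathbf{A}_t\mathbf{D}_{U|t}\mathbf{R}\,(\mathbf{R}^{-1}\mathbf{D}_{Y|t}\mathbf{R})$. Hence
\[
\widetilde{\mathbf{Q}}_t=\mathbf{F}_t^\dagger\mathbf{F}_t\,\mathbf{R}^{-1}\mathbf{D}_{Y|t}\mathbf{R}.
\]
The whole claim then reduces to showing $\mathbf{F}_t^\dagger\mathbf{F}_t=\mathbf{I}_k$, which holds exactly when $\mathbf{F}_t$ has full column rank $k$.

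\emph{Full column rank (the main step).} By Assumption~\ref{ass:richness}, $\mathbf{A}_t$ has full column rank. Strict latent positivity together with $\pr(U=u)>0$ gives $\pr(U=u\mid T=t)=\pr(T=t\mid U=u)\pr(U=u)/\pr(T=t)>0$, so $\mathbf{D}_{U|t}$ is invertible. Finally, $\mathbf{R}$ is nonsingular because $\mathbf{B}^\top$ has rank $k$ and $\range(\mathbf{B})=\range(\mathbf{V}_k)$; this identity follows from Theorem~\ref{thm:positivity} applied to $\mathbf{M}_{\mathrm{stack}}$, as noted before \eqref{eq:R_basis}. So $\mathbf{F}_t$ is a full-column-rank matrix times two nonsingular $k\times k$ matrices and has rank $k$. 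For such a matrix $\mathbf{F}_t^\dagger=(\mathbf{F}_t^\top\mathbf{F}_t)^{-1}\mathbf{F}_t^\top$, so $\mathbf{F}_t^\dagger\mathbf{F}_t=\mathbf{I}_k$. Therefore $\widetilde{\mathbf{Q}}_t=\mathbf{R}^{-1}\mathbf{D}_{Y|t}\mathbf{R}$.

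This step is the only place the hypotheses actually enter, and it also explains why the compression is needed. The uncompressed pseudo-inverse $\mathbf{M}_{ZX|t}^\dagger$ does not factor through products in general. After compression the left factor has full column rank, so its pseudo-inverse is a genuine left inverse and the product $\mathbf{F}_t^\dagger\mathbf{F}_t$ collapses to the identity.

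\emph{Difference operator and eigenvalues.} Subtracting the two arms, $\mathbf{R}$ is common to both because $\mathbf{B}$ and $\mathbf{V}_k$ do not depend on $t$. This gives $\Delta\widetilde{\mathbf{Q}}=\mathbf{R}^{-1}(\mathbf{D}_{Y|1}-\mathbf{D}_{Y|0})\mathbf{R}=\mathbf{R}^{-1}\mathbf{D}_\tau\mathbf{R}$. Similarity preserves the characteristic polynomial, so the eigenvalues of $\Delta\widetilde{\mathbf{Q}}$ are $\{\tau(u)\}_{u=1}^k$ counted with multiplicity.

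\emph{Left eigenvectors.} Let $\mathbf{r}_u^\top$ denote the $u$-th row of $\mathbf{R}$. Since $\mathbf{R}\,\Delta\widetilde{\mathbf{Q}}=\mathbf{D}_\tau\mathbf{R}$, we have $\mathbf{r}_u^\top\Delta\widetilde{\mathbf{Q}}=\tau(u)\,\mathbf{r}_u^\top$. Under Assumption~\ref{ass:separation} every eigenvalue is simple, so each left eigenspace is one-dimensional. Any left eigenvector for $\tau(u)$ is therefore a nonzero multiple of $\mathbf{r}_u^\top$. This identifies the rows of $\mathbf{R}$ up to row-wise scaling, and up to the permutation induced by labeling the eigenvalues, which is common to all rows.
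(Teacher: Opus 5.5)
Your proof is correct and follows the paper's argument step for step. The one difference is that you write $\mathbf{M}_{ZXY|t}\mathbf{V}_k=\mathbf{F}_t(\mathbf{R}^{-1}\mathbf{D}_{Y|t}\mathbf{R})$ and use $\mathbf{F}_t^\dagger\mathbf{F}_t=\mathbf{I}_k$ directly, where the paper computes $(\mathbf{C}_t\mathbf{R})^\dagger=\mathbf{R}^{-1}\mathbf{C}_t^\dagger$ via Lemma~\ref{lem:pinv_product}; your route is marginally more economical.

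One caveat on your aside about why compression is needed: in this model the ambient pseudo-inverse does factor, $\mathbf{M}_{ZX|t}^\dagger=(\mathbf{B}^\top)^\dagger\mathbf{C}_t^\dagger$ (Theorem~\ref{thm:spectral_identification}). So compression is not what makes the similarity identity hold. What it does is remove the $d_x-k$ ambient null directions and place both arms in a common $k\times k$ basis.
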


\begin{remark}[Zero Treatment Effects on the Signal Subspace]
By confining the spectral dynamics entirely within the verified $k \times k$ signal subspace, the matrix $\Delta \widetilde{\mathbf{Q}}$ contains no extraneous ambient null directions. Consequently, a zero treatment effect appears as a genuine spectral component of the compressed causal operator rather than being conflated with ambient null-space directions. When several latent classes share the same treatment effect, including several zero-effect classes, the method recovers the associated invariant subspace on the signal space. This is a big practical win, because mixtures of treatment effects \textbf{\underline{often have zero treatment effect}}.
\end{remark}

To fully specify the underlying causal model and allow marginalization into the ATE, the unobserved mixture proportions $\pr(U)$ must be identified. This requires lifting the compressed eigenvectors back into the ambient feature space.

\begin{proposition}[Feature and Probability Recovery] \label{prop:latent_prob}
Assume spectral separation holds. Let $\mathbf{W} \in \R^{k \times k}$ be the matrix of left eigenvectors of $\Delta\widetilde{\mathbf{Q}}$, ordered consistently with the treatment-effect eigenvalues. Then there exists a nonsingular diagonal matrix $\mathbf{S}$ such that $\mathbf{W} = \mathbf{S}\mathbf{R}$. Lifting back to the ambient proxy space gives
\[
\mathbf{B}_{\mathrm{unscaled}}^\top \triangleq \mathbf{W}\mathbf{V}_k^\top = \mathbf{S}\mathbf{B}^\top.
\]
By normalizing each row of $\mathbf{B}_{\mathrm{unscaled}}^\top$ by its first coordinate, and using the anchor $X_1 = 1$, one exactly recovers $\mathbf{B}^\top$. The latent mixture proportions are then identified from the marginal mean by
\[
\bm{p} = \mathbf{B}^\dagger \E[X].
\]
\end{proposition}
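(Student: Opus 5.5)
We build directly on Theorem~\ref{thm:pairing}, which gives $\Delta\widetilde{\mathbf{Q}} = \mathbf{R}^{-1}\mathbf{D}_\tau\mathbf{R}$ with $\mathbf{R}$ nonsingular. The proof has three steps: identify the left eigenvectors up to scaling, lift and normalize them using the anchor, and solve the marginal equation for $\bm p$.

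\textbf{Step 1 (left eigenvectors).} Multiplying the similarity on the left by $\mathbf{R}$ gives $\mathbf{R}\,\Delta\widetilde{\mathbf{Q}} = \mathbf{D}_\tau \mathbf{R}$. Reading this row by row, the $u$-th row $\mathbf{r}_u^\top$ of $\mathbf{R}$ satisfies $\mathbf{r}_u^\top \Delta\widetilde{\mathbf{Q}} = \tau(u)\,\mathbf{r}_u^\top$, so it is a left eigenvector with eigenvalue $\tau(u)$. Under Assumption~\ref{ass:separation} the $k$ eigenvalues are distinct, so each left eigenspace is one-dimensional. Hence any left eigenvector $\mathbf{w}_u$ chosen for $\tau(u)$ equals $s_u \mathbf{r}_u$ for some scalar $s_u \neq 0$. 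Stacking these rows in the order of the eigenvalues gives $\mathbf{W} = \mathbf{S}\mathbf{R}$ with $\mathbf{S} = \diag(s_1,\dots,s_k)$ nonsingular.

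\textbf{Step 2 (lifting and anchor normalization).} From equation~\eqref{eq:R_basis},
\[
\mathbf{W}\mathbf{V}_k^\top = \mathbf{S}\mathbf{R}\mathbf{V}_k^\top = \mathbf{S}\mathbf{B}^\top .
\]
The anchor $X_1 = 1$ gives $\E[X_1 \mid U=u] = 1$ for every $u$. So the first column of $\mathbf{B}^\top$ is the all-ones vector, and the first coordinate of the $u$-th row of $\mathbf{S}\mathbf{B}^\top$ is exactly $s_u \neq 0$. Dividing each row by its first coordinate therefore removes $\mathbf{S}$ and recovers $\mathbf{B}^\top$ exactly. The step that needs care here is that this first coordinate is nonzero. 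That holds because the anchor fixes it to be $s_u$, and $s_u \neq 0$ by Step 1; this is the only place the anchor enters.

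\textbf{Step 3 (mixture proportions).} By the law of total expectation, $\E[X] = \sum_u \pr(U=u)\,\E[X \mid U=u] = \mathbf{B}\bm p$; this uses no treatment indexing, since $\mathbf{B}$ is arm-invariant. Assumption~\ref{ass:richness} gives $\mathbf{B}$ full column rank, so $\mathbf{B}^\dagger\mathbf{B} = \mathbf{I}_k$. Multiplying both sides by $\mathbf{B}^\dagger$ then yields $\bm p = \mathbf{B}^\dagger \E[X]$.

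I expect no step to be a serious obstacle. The one delicate point is the ordering in Step 1: the permutation implicit in $\mathbf{W}$ must match the indexing of $\mathbf{D}_\tau$. This is handled by ordering the eigenvectors by their eigenvalues, which are distinct, so the labeling of $\mathbf{B}$'s columns and of $\bm p$ is consistent with the labeling of $\tau$ up to a common permutation.
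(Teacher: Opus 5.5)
Your proposal is correct and follows essentially the same route as the paper's proof. You read $\mathbf{R}\,\Delta\widetilde{\mathbf{Q}} = \mathbf{D}_\tau\mathbf{R}$ row by row and use simplicity of the eigenvalues to get $\mathbf{W}=\mathbf{S}\mathbf{R}$; you then lift via $\mathbf{B}^\top=\mathbf{R}\mathbf{V}_k^\top$, divide by the anchor entry $S_{uu}\neq 0$, and invert $\E[X]=\mathbf{B}\bm p$ using $\mathbf{B}^\dagger\mathbf{B}=\mathbf{I}_k$, exactly as the paper does.
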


\begin{remark}[Simplex Projection as Estimation Post-Processing]
At the population level, the identification formula is exactly $\bm{p} = \mathbf{B}^\dagger \E[X]$. In finite samples, however, the plug-in vector $\widehat{\bm{p}}_{\mathrm{raw}} \triangleq \widehat{\mathbf{B}}^\dagger \widehat{\bm{\mu}}_X$ need not lie exactly on the probability simplex because of perturbation error. Projecting $\widehat{\bm{p}}_{\mathrm{raw}}$ onto $\Delta^{k-1}$ is therefore a post-processing step that enforces nonnegativity and unit-sum constraints without changing the population identification formula. The finite-sample analysis in Theorem~\ref{thm:B_p_recovery} shows that this projection is nonexpansive in Euclidean norm, and hence preserves the $n^{-1/2}$ recovery rate for the mixture proportions.
\end{remark}

\begin{theorem}[Operator-Moment Equivalence and Spectral Measure Recovery] \label{thm:moment_equivalence}
Let $\mathbf{D}_{\tau} \triangleq \mathbf{D}_{Y|1}-\mathbf{D}_{Y|0} = \diag(\tau(1),\dots,\tau(k))$, so that $\Delta \widetilde{\mathbf{Q}} = \mathbf{R}^{-1}\mathbf{D}_{\tau}\mathbf{R}$. Define
\[
\mathbf{a}^\top \triangleq \E[X]^\top \mathbf{V}_k \in \R^{1 \times k},
\qquad
\mathbf{c} \triangleq \mathbf{V}_k^\top e_1 \in \R^k,
\]
where $e_1 \in \R^{d_x}$ is the first canonical basis vector corresponding to the anchor coordinate $X_1 = 1$. Then for every integer $\ell \ge 0$, $m_{\ell} \triangleq \E[\tau(U)^\ell] = \mathbf{a}^\top (\Delta \widetilde{\mathbf{Q}})^\ell \mathbf{c}$.
Consequently, the ordinary generating function of the latent treatment-effect distribution is
\[
\sum_{\ell=0}^{\infty} m_{\ell} z^\ell
=
\mathbf{a}^\top (\mathbf{I}_k - z \Delta \widetilde{\mathbf{Q}})^{-1}\mathbf{c},
\qquad
|z| < \|\Delta \widetilde{\mathbf{Q}}\|_2^{-1}.
\]
In particular, the compressed causal operator together with the observable anchor pair \((\mathbf a,\mathbf c)\) determines every polynomial functional of the latent treatment-effect distribution and contains the full scalar moment sequence used in recursive synthetic-moment constructions.
\end{theorem}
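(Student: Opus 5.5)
Using Theorem~\ref{thm:pairing}, powers of the compressed difference operator telescope: $(\Delta\widetilde{\mathbf{Q}})^\ell = \mathbf{R}^{-1}\mathbf{D}_\tau^\ell\mathbf{R}$ for every $\ell \ge 0$, with $\ell=0$ giving $\mathbf{I}_k$. The bilinear form therefore splits as
\[
\mathbf{a}^\top(\Delta\widetilde{\mathbf{Q}})^\ell\mathbf{c} = (\mathbf{a}^\top\mathbf{R}^{-1})\,\mathbf{D}_\tau^\ell\,(\mathbf{R}\mathbf{c}).
\]
It then suffices to show that $\mathbf{R}\mathbf{c}$ is the all-ones vector $\mathbf{1}_k$ and that $\mathbf{a}^\top\mathbf{R}^{-1} = \bm p^\top$, where $\bm p = (\pr(U=1),\dots,\pr(U=k))^\top$. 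With these two identities the form equals $\sum_u p_u\tau(u)^\ell = \E[\tau(U)^\ell] = m_\ell$.

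For the right vector, $\mathbf{R}\mathbf{c} = \mathbf{R}\mathbf{V}_k^\top e_1 = \mathbf{B}^\top e_1$ by \eqref{eq:R_basis}. This is the first row of $\mathbf{B}$ written as a column, namely $(\E[X_1\mid U=u])_u = \mathbf{1}_k$ because of the anchor $X_1=1$.

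For the left vector, write $\E[X] = \sum_u \pr(U=u)\E[X\mid U=u] = \mathbf{B}\bm p$ by the law of total expectation. Then
\[
\mathbf{a}^\top = \bm p^\top\mathbf{B}^\top\mathbf{V}_k = \bm p^\top\mathbf{R}\mathbf{V}_k^\top\mathbf{V}_k = \bm p^\top\mathbf{R},
\]
using $\mathbf{V}_k^\top\mathbf{V}_k=\mathbf{I}_k$. Multiplying on the right by $\mathbf{R}^{-1}$, which exists since $\mathbf{R}$ is nonsingular, gives $\mathbf{a}^\top\mathbf{R}^{-1} = \bm p^\top$. The step most likely to need care is this one: it needs the marginal factorization of $\E[X]$, which does not involve $T$ because $X\indep T\mid U$, and it needs $\mathbf{B}^\top$ to factor through the orthonormal $\mathbf{V}_k$ as in \eqref{eq:R_basis}. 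Both facts are available from the setup and Theorems~\ref{thm:dimensionality}--\ref{thm:positivity}.

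For the generating function, sum the identity over $\ell$. The spectral radius $\rho(\Delta\widetilde{\mathbf{Q}})$ is at most $\|\Delta\widetilde{\mathbf{Q}}\|_2$. For $|z|<\|\Delta\widetilde{\mathbf{Q}}\|_2^{-1}$ we have $\|z\Delta\widetilde{\mathbf{Q}}\|_2<1$, so the Neumann series $\sum_\ell z^\ell(\Delta\widetilde{\mathbf{Q}})^\ell$ converges absolutely to $(\mathbf{I}_k - z\Delta\widetilde{\mathbf{Q}})^{-1}$. Bilinearity and continuity then let us exchange the sum with $\mathbf{a}^\top(\cdot)\mathbf{c}$.

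The final claim follows by linearity. Any polynomial functional $\E[P(\tau(U))]$ is a finite combination of the moments $m_\ell$, and so equals $\mathbf{a}^\top P(\Delta\widetilde{\mathbf{Q}})\mathbf{c}$. In particular, every scalar moment used by the recursive synthetic construction is recovered from $\Delta\widetilde{\mathbf{Q}}$ and the pair $(\mathbf{a},\mathbf{c})$.
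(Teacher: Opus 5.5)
Your proposal is correct and follows the paper's proof essentially step for step. You show $\mathbf{R}\mathbf{c}=\mathbf{B}^\top e_1=\mathbf{1}_k$ from the anchor and $\mathbf{a}^\top=\bm p^\top\mathbf{R}$ from $\E[X]=\mathbf{B}\bm p$ and $\mathbf{B}^\top=\mathbf{R}\mathbf{V}_k^\top$, then telescope $(\mathbf{R}^{-1}\mathbf{D}_\tau\mathbf{R})^\ell$, sum the Neumann series, and extend to polynomial functionals by linearity (one minor point: $\E[X]=\mathbf{B}\bm p$ is just total expectation over $U$ because $\mathbf{B}$ is defined through $\E[X\mid U]$, so the appeal to $X\indep T\mid U$ is not needed there).
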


Theorem \ref{thm:moment_equivalence} shows that the spectral operator \textbf{\underline{is not}} an alternative estimator for the same target; it is in fact a finite-dimensional generator of the entire synthetic moment hierarchy.

\section{Invariance and Homogeneity}
\label{sec:transportability}

We now record two direct consequences of the similarity representation: invariance of spectral content across environments under stable latent mechanisms, and an exact operator characterization of latent causal homogeneity.

\begin{theorem}[Environment-Invariant Spectral Content] \label{thm:environment}
Let $e \in \mathcal{E}$ index environments. Suppose that in each environment and treatment arm,
\[
\mathbf{M}_{ZX|t}^{(e)} = \mathbf{A}_t^{(e)} \mathbf{D}_{U|t}^{(e)} \mathbf{B}^\top,
\qquad
\mathbf{M}_{ZXY|t}^{(e)} = \mathbf{A}_t^{(e)} \mathbf{D}_{U|t}^{(e)} \mathbf{D}_{Y|t} \mathbf{B}^\top,
\]
where $\mathbf{B}$ and $\mathbf{D}_{Y|t}$ are common across $e$, whereas $\mathbf{A}_t^{(e)}$ and $\mathbf{D}_{U|t}^{(e)}$ may vary with $e$ and remain full column rank. Let $\mathbf{V}_k^{(e)}$ span the shared row space in environment $e$, and write $\mathbf{B}^\top = \mathbf{R}^{(e)} (\mathbf{V}_k^{(e)})^\top$. Then $\widetilde{\mathbf{Q}}_t^{(e)} = (\mathbf{R}^{(e)})^{-1}\mathbf{D}_{Y|t}\mathbf{R}^{(e)}$ and $\Delta \widetilde{\mathbf{Q}}^{(e)} = (\mathbf{R}^{(e)})^{-1}\mathbf{D}_{\tau}\mathbf{R}^{(e)}$. Hence, for every environment $e$,
\[
\spec(\widetilde{\mathbf{Q}}_t^{(e)}) = \left\{\E[Y^{(t)} \mid U=u]\right\}_{u=1}^k,
\qquad
\spec(\Delta \widetilde{\mathbf{Q}}^{(e)}) = \{\tau(u)\}_{u=1}^k,
\]
counted with multiplicity.
\end{theorem}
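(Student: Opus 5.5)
The plan is to rerun the argument of Theorem~\ref{thm:pairing} inside a fixed environment $e$. That argument uses only three ingredients: the factorization form, full column rank of the arm-specific left factor $\mathbf{A}_t\mathbf{D}_{U|t}$, and the fact that $\mathbf{B}^\top$ is shared across arms. Since $\mathbf{B}$ and $\mathbf{D}_{Y|t}$ do not depend on $e$, the conclusion will read off directly.

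\textbf{Step 1: the compressed basis.} Fix $e$ and put $\mathbf{L}_t^{(e)} \triangleq \mathbf{A}_t^{(e)}\mathbf{D}_{U|t}^{(e)} \in \R^{d_z\times k}$. The hypothesis that the environment-specific factors remain full column rank implies that $\mathbf{A}_t^{(e)}$ has rank $k$ and that $\mathbf{D}_{U|t}^{(e)}$ is nonsingular; the latter is positivity in environment $e$. Hence $\mathbf{L}_t^{(e)}$ has rank $k$. Together with full column rank of $\mathbf{B}$ (Assumption~\ref{ass:richness}), the argument of Theorems~\ref{thm:dimensionality} and \ref{thm:positivity} gives $\range((\mathbf{M}^{(e)}_{ZX|t})^\top)=\range(\mathbf{B})$. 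Therefore the stacked matrix in environment $e$ has row space $\range(\mathbf{B})$, so $\mathbf{V}_k^{(e)}$ is well defined. Because $\mathbf{B}^\top$ has rank $k$ and its rows lie in $\range(\mathbf{V}_k^{(e)})$, the matrix $\mathbf{R}^{(e)}=\mathbf{B}^\top\mathbf{V}_k^{(e)}$ is nonsingular and satisfies $\mathbf{B}^\top=\mathbf{R}^{(e)}(\mathbf{V}_k^{(e)})^\top$.

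\textbf{Step 2: the compressed operators.} Substituting the factorization gives
\[
\mathbf{M}^{(e)}_{ZX|t}\mathbf{V}_k^{(e)}=\mathbf{L}_t^{(e)}\mathbf{R}^{(e)},
\qquad
\mathbf{M}^{(e)}_{ZXY|t}\mathbf{V}_k^{(e)}=\mathbf{L}_t^{(e)}\mathbf{D}_{Y|t}\mathbf{R}^{(e)}.
\]
The matrix $\mathbf{L}_t^{(e)}\mathbf{R}^{(e)}$ has full column rank $k$, so its pseudo-inverse is a left inverse:
\[
(\mathbf{L}_t^{(e)}\mathbf{R}^{(e)})^\dagger=\bigl((\mathbf{L}_t^{(e)}\mathbf{R}^{(e)})^\top\mathbf{L}_t^{(e)}\mathbf{R}^{(e)}\bigr)^{-1}(\mathbf{L}_t^{(e)}\mathbf{R}^{(e)})^\top .
\]
Applying it to the second matrix yields $(\mathbf{R}^{(e)})^{-1}(\mathbf{L}_t^{(e)})^{\dagger}\mathbf{L}_t^{(e)}\mathbf{D}_{Y|t}\mathbf{R}^{(e)}=(\mathbf{R}^{(e)})^{-1}\mathbf{D}_{Y|t}\mathbf{R}^{(e)}$. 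This is the main technical point. The pseudo-inverse of a product does not factor in general, so I will avoid writing $(\mathbf{L}\mathbf{R})^\dagger=\mathbf{R}^{-1}\mathbf{L}^\dagger$ blindly. Instead I will check directly that $\mathbf{R}^{-1}(\mathbf{L}^\top\mathbf{L})^{-1}\mathbf{L}^\top$ is a left inverse of $\mathbf{L}\mathbf{R}$. Since a full-column-rank matrix has a unique left inverse with row space equal to its column space, and this candidate has row space $\range(\mathbf{L})=\range(\mathbf{L}\mathbf{R})$, it equals the pseudo-inverse.

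\textbf{Step 3: the difference and the spectra.} Both arms use the same $\mathbf{R}^{(e)}$, because $\mathbf{V}_k^{(e)}$ is shared across arms within the environment. Subtracting gives $\Delta\widetilde{\mathbf{Q}}^{(e)}=(\mathbf{R}^{(e)})^{-1}\mathbf{D}_\tau\mathbf{R}^{(e)}$. Similarity preserves the characteristic polynomial, so the spectra with multiplicity are the diagonal entries of $\mathbf{D}_{Y|t}$ and of $\mathbf{D}_\tau$. Neither of these depends on $e$, which gives the environment-invariance claim. No spectral separation is needed, since only the multisets of eigenvalues are asserted.
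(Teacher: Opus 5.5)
Your proposal is correct and follows essentially the same route as the paper's proof: fix $e$, use $\mathbf{B}^\top\mathbf{V}_k^{(e)}=\mathbf{R}^{(e)}$ to write the compressed moments as $\mathbf{A}_t^{(e)}\mathbf{D}_{U|t}^{(e)}\mathbf{R}^{(e)}$ and $\mathbf{A}_t^{(e)}\mathbf{D}_{U|t}^{(e)}\mathbf{D}_{Y|t}\mathbf{R}^{(e)}$, cancel with the left inverse, subtract the arms, and read off the spectra by similarity invariance. Your direct check that $(\mathbf{L}\mathbf{R})^\dagger=\mathbf{R}^{-1}\mathbf{L}^\dagger$ is exactly the special case of Lemma~\ref{lem:pinv_product} that the paper invokes, and your Step~1 (well-definedness of $\mathbf{V}_k^{(e)}$ and invertibility of $\mathbf{R}^{(e)}$) only makes explicit what the paper takes as part of the hypothesis.
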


\begin{proposition}[Operator Characterization of Homogeneity] \label{prop:homogeneity}
Under the standing assumptions, $\tau(1)=\cdots=\tau(k)=\tau_\star \quad\Longleftrightarrow\quad \Delta \widetilde{\mathbf{Q}} = \tau_\star \mathbf{I}_k.$ Equivalently, latent causal homogeneity holds if and only if the compressed difference operator is a scalar matrix. In that case, $\tau_\star = \mathbf{a}^\top \Delta \widetilde{\mathbf{Q}} \,\mathbf{c},$ with $\mathbf{a}$ and $\mathbf{c}$ as defined in Theorem \ref{thm:moment_equivalence}.
\end{proposition}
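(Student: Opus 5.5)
The plan is to work entirely through the similarity representation of Theorem~\ref{thm:pairing}, $\Delta\widetilde{\mathbf{Q}} = \mathbf{R}^{-1}\mathbf{D}_{\tau}\mathbf{R}$ with $\mathbf{R}$ nonsingular, which holds under the standing assumptions (Assumption~\ref{ass:richness} and strict latent positivity).

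\emph{Forward direction.} Suppose $\tau(1)=\cdots=\tau(k)=\tau_\star$. Then $\mathbf{D}_\tau=\tau_\star\mathbf{I}_k$. Hence
\[
\Delta\widetilde{\mathbf{Q}}=\mathbf{R}^{-1}(\tau_\star\mathbf{I}_k)\mathbf{R}=\tau_\star\mathbf{I}_k .
\]

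\emph{Converse.} Suppose $\Delta\widetilde{\mathbf{Q}}=\tau_\star\mathbf{I}_k$. Conjugating by $\mathbf{R}$ gives
\[
\mathbf{D}_\tau=\mathbf{R}\,\Delta\widetilde{\mathbf{Q}}\,\mathbf{R}^{-1}=\tau_\star\mathbf{I}_k ,
\]
so every diagonal entry satisfies $\tau(u)=\tau_\star$. Equivalently, the eigenvalue multiset from Theorem~\ref{thm:pairing} is $\{\tau_\star,\dots,\tau_\star\}$. The eigenvalue statement alone would not suffice, since a non-diagonalizable matrix can have a single repeated eigenvalue without being scalar. I therefore rely on the explicit conjugation above rather than on the spectrum. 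This also explains why the proposition must drop Assumption~\ref{ass:separation}: homogeneity contradicts it whenever $k\ge2$. The similarity itself does not use separation, so the argument is unaffected.

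\emph{Formula for $\tau_\star$.} I will invoke Theorem~\ref{thm:moment_equivalence} with $\ell=1$, which gives $\mathbf{a}^\top\Delta\widetilde{\mathbf{Q}}\,\mathbf{c}=m_1=\E[\tau(U)]$. Under homogeneity this equals $\tau_\star\sum_u\pr(U=u)=\tau_\star$.

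As a sanity check, the $\ell=0$ case gives $\mathbf{a}^\top\mathbf{c}=m_0=1$. Directly,
\[
\mathbf{a}^\top\mathbf{c}=\E[X]^\top\mathbf{V}_k\mathbf{V}_k^\top e_1 .
\]
Since $\E[X]=\mathbf{B}\bm p\in\range(\mathbf{B})=\range(\mathbf{V}_k)$, the projector acts trivially on $\E[X]$, so this equals $\E[X]^\top e_1=\E[X_1]=1$. Combined with $\Delta\widetilde{\mathbf{Q}}=\tau_\star\mathbf{I}_k$, this gives $\mathbf{a}^\top\Delta\widetilde{\mathbf{Q}}\,\mathbf{c}=\tau_\star\,\mathbf{a}^\top\mathbf{c}=\tau_\star$, an independent route to the same formula.

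\emph{Main obstacle.} The argument is essentially immediate. The only delicate point is making sure the converse uses the full similarity and not merely the spectrum, as noted above.
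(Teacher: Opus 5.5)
Your proposal is correct and follows the paper's proof essentially step for step: you substitute $\mathbf{D}_\tau=\tau_\star\mathbf{I}_k$ into $\mathbf{R}^{-1}\mathbf{D}_\tau\mathbf{R}$ for the forward direction, conjugate by $\mathbf{R}$ for the converse, and specialize Theorem~\ref{thm:moment_equivalence} to $\ell=1$ for the formula; your direct check $\mathbf{a}^\top\mathbf{c}=1$ via the $\ell=0$ case is a valid extra. One small correction to your commentary: the non-diagonalizability caveat belongs to the forward direction, where equal eigenvalues alone do not force a scalar matrix and the similarity to $\mathbf{D}_\tau$ is what is needed, whereas for the converse the spectrum argument through Theorem~\ref{thm:pairing} would in fact suffice.
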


\section{Finite-Sample Estimation and Guarantees}
\label{sec:algorithm}

At the population level, the causal information is represented by the
$k$-dimensional compressed operators $\widetilde{\mathbf Q}_t$. In finite
samples, both the empirical moment matrices and the estimated signal subspace contribute perturbation error. We therefore estimate the shared row space by a
truncated singular value decomposition before constructing the compressed operators.

\begin{remark}[The Complex-Bifurcation Diagnostic]
At the population level, $\Delta\widetilde{\mathbf Q}=\mathbf R^{-1}\mathbf D_\tau\mathbf R$ is diagonalizable over $\mathbb R$. In finite samples, $\Delta\widehat{\widetilde{\mathbf Q}}$ is a real but generally non-symmetric perturbation of this operator. Writing the perturbation in diagonal coordinates as $\mathbf S\Delta\widehat{\widetilde{\mathbf Q}}\mathbf S^{-1}=\mathbf D_\tau+\mathbf F$, the localization argument in Lemma~\ref{lem:diag_localization} implies that, whenever $2\|\mathbf F\|_2<\delta_u$, the empirical eigenvalue associated with the simple population effect $\tau(u)$ is isolated in the $\delta_u/2$-neighborhood of $\tau(u)$. Since the matrix is real, an isolated single eigenvalue in a real-centered disc cannot form a non-real conjugate pair; hence it is real. Thus, under spectral separation, sufficiently small finite-sample perturbations preserve real empirical eigenvalues. Conversely, when the perturbation is large enough that these isolation discs overlap, real eigenvalues may collide and bifurcate into complex-conjugate pairs. Macroscopic imaginary components therefore indicate that the estimator is outside the local separated-eigenvalue regime, due either to limited sample size, ill-conditioning, rank misspecification, or model error. Residual imaginary parts at numerical precision, e.g.\ $10^{-15}$, can be safely discarded.
\end{remark}

\subsection{High-Probability Perturbation Bounds}

As discussed, our estimator operates directly on observable second-order cross-moments and outcome-weighted third-order moments after compression to the shared signal subspace. We quantify the resulting finite-sample behavior through a local perturbation analysis of the compressed operator and the associated lifted eigenvectors.

\begin{assumption}[Finite-Sample Regularity]
\label{ass:finite_sample}
Throughout this section $k,d_x,d_z$ are fixed. Assume that, for finite constants
$L_X,L_{ZX},L_{ZXY}>0$,
\[
\|X\|_2\le L_X,\qquad
\|ZX^\top\|_2\le L_{ZX},\qquad
\|YZX^\top\|_2\le L_{ZXY}
\qquad\text{almost surely}.
\]
Let $\pi\triangleq\min_{t\in\{0,1\}}\Pr(T=t)>0$.
Let $\mathbf V_k$ denote the population $k$-dimensional right singular subspace of
$\mathbf M_{\mathrm{stack}}$, and assume the signal singular-value margin
\[
\sigma\triangleq
\min\Bigl\{
\sigma_k(\mathbf M_{\mathrm{stack}}),\;
\min_{t\in\{0,1\}}\sigma_k(\mathbf M_{ZX|t}\mathbf V_k)
\Bigr\}>0.
\]
\end{assumption}

Define the fixed population quantities
\[
M_X\triangleq \max_t\|\mathbf M_{ZX|t}\|_2,\qquad
M_Y\triangleq \max_t\|\mathbf M_{ZXY|t}\|_2,\qquad
\kappa_B\triangleq \|\mathbf B^\top\|_2\|(\mathbf B^\top)^\dagger\|_2,
\]
where all maxima are over $t\in\{0,1\}$. If $C_{\mathrm W}$ denotes the
absolute constant in Wedin's theorem \citet{wedin1972perturbation}, set
\[
\Gamma_X\triangleq 1+C_{\mathrm W}M_X/\sigma,\qquad
\Gamma_Y\triangleq 1+C_{\mathrm W}M_Y/\sigma,\qquad
\kappa_{ZX}^{\star}\triangleq
\Gamma_Y+6\Gamma_X\bigl(M_Y/\sigma+\Gamma_Y\bigr).
\]
Thus $\kappa_{ZX}^{\star}$ is a fixed conditioning factor depending only on
the population moment matrices and the signal singular-value margin.

\begin{theorem}[High-Probability Perturbation Bound]
\label{thm:sample_complexity}
Assume Assumption~\ref{ass:richness}, strict latent positivity, and
Assumption~\ref{ass:finite_sample}. For $\eta\in(0,1)$, let $\Lambda_\eta\triangleq\log\frac{8(d_x+d_z)}{\eta}$, $L_{\max}\triangleq\max\{L_{ZX},L_{ZXY}\}$, and $\varepsilon_{n,\eta}\triangleq C_{\mathrm{mom}}L_{\max}\sqrt{\frac{\Lambda_\eta}{n\pi}}$, where $C_{\mathrm{mom}}$ is the universal constant in Lemma~\ref{lem:conditional_moment_concentration}. There exist
universal constants $c,C>0$ such that, if
$n\pi\ge c\Lambda_\eta$ and $\varepsilon_{n,\eta}\le c\sigma/\Gamma_X$, then,
with probability at least $1-\eta$,
\[
\forall\,\widehat\tau\in\spec(\Delta\widehat{\widetilde{\mathbf Q}}),
\qquad
\min_{u\in\{1,\ldots,k\}}|\widehat\tau-\tau(u)|
\le
C\frac{\kappa_B\kappa_{ZX}^{\star}L_{\max}}{\sigma}
\sqrt{\frac{\log(8(d_x+d_z)/\eta)}{n\pi}}.
\]
If $\delta_{\min}\triangleq\min_{u\neq v}|\tau(u)-\tau(v)|>0$ and the
right-hand side above is smaller than $\delta_{\min}/2$, then the empirical
eigenvalues admit a one-to-one matching with the population effects: there
exists a permutation $\rho\in S_k$ such that
\begin{equation}
\label{eq:bound_eigenvalue}
\max_{u\in\{1,\ldots,k\}}|\widehat\tau_{\rho(u)}-\tau(u)|
\le
C\frac{\kappa_B\kappa_{ZX}^{\star}L_{\max}}{\sigma}
\sqrt{\frac{\log(8(d_x+d_z)/\eta)}{n\pi}} .
\end{equation}
Furthermore, fix $u$ and let
$\delta_u\triangleq\min_{v\neq u}|\tau(u)-\tau(v)|>0$. If the preceding eigenvalue perturbation radius is smaller than $\delta_u/4$, then, after
choosing the corresponding lifted left eigenvector
$\widehat{\mathbf b}_u^\top$, aligning its sign/phase, and normalizing both
lifted rows to unit Euclidean norm,
\begin{equation}
\label{eq:bound_eigenvector}
\sin\theta(\widehat{\mathbf b}_u^\top,\mathbf b_u^\top)
\le
C L_{\max}
\left(
\frac{\kappa_B^2\kappa_{ZX}^{\star}}{\delta_u\sigma}
+
\frac{1}{\sigma}
\right)
\sqrt{\frac{\log(8(d_x+d_z)/\eta)}{n\pi}} .
\end{equation}
In particular, under fixed conditioning and fixed eigengap assumptions, both
eigenvalue recovery and lifted-row recovery occur at first-order $n^{-1/2}$
scale.
\end{theorem}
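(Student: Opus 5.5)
The argument is a chain of deterministic perturbation steps run on a single high-probability event. \textbf{Step 1 (concentration).} Using Lemma~\ref{lem:conditional_moment_concentration} together with a union bound over the four matrices $\mathbf M_{ZX|t},\mathbf M_{ZXY|t}$, $t\in\{0,1\}$, I will establish an event of probability at least $1-\eta$ on which three things hold. First, each arm has at least $n\pi/2$ samples; this follows from a Chernoff bound once $n\pi\ge c\Lambda_\eta$. Second, $\|\widehat{\mathbf M}_{ZX|t}-\mathbf M_{ZX|t}\|_2\le\varepsilon_{n,\eta}$. Third, $\|\widehat{\mathbf M}_{ZXY|t}-\mathbf M_{ZXY|t}\|_2\le\varepsilon_{n,\eta}$. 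The bounds are obtained from matrix Bernstein with the a.s.\ bounds $L_{ZX},L_{ZXY}$, which is where the factor $\log(8(d_x+d_z)/\eta)$ comes from. Everything after this step is deterministic on that event.

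\textbf{Step 2 (compressed operator error).} The stacked perturbation has norm at most $2\varepsilon_{n,\eta}$. By Wedin's theorem with gap $\sigma_k(\mathbf M_{\mathrm{stack}})\ge\sigma$, there is an orthogonal $\mathbf O$ with $\|\widehat{\mathbf V}_k-\mathbf V_k\mathbf O\|_2\le C_{\mathrm W}\varepsilon/\sigma$. Since $\widetilde{\mathbf Q}_t$ transforms by similarity under $\mathbf V_k\mapsto\mathbf V_k\mathbf O$, we may take $\mathbf O=\mathbf I$ without loss. Next I bound $\|\widehat{\mathbf M}_{ZX|t}\widehat{\mathbf V}_k-\mathbf M_{ZX|t}\mathbf V_k\|_2\le\varepsilon+M_X C_{\mathrm W}\varepsilon/\sigma=\Gamma_X\varepsilon$, and the analogous bound with $\Gamma_Y$ for the $Y$-weighted term. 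Because $\sigma_k(\mathbf M_{ZX|t}\mathbf V_k)\ge\sigma$ and $\Gamma_X\varepsilon\le c\sigma$, Weyl's inequality keeps the perturbed tall matrix at full column rank $k$ with $k$-th singular value at least $\sigma/2$. The standard pseudo-inverse perturbation bound for equal-rank matrices then gives $\|\widehat{\mathbf P}^\dagger-\mathbf P^\dagger\|_2\le 3\|E\|/\sigma^2$, up to the $\sqrt2$ or $3$ constant. Writing $\widehat{\mathbf P}^\dagger\widehat{\mathbf N}-\mathbf P^\dagger\mathbf N=\widehat{\mathbf P}^\dagger(\widehat{\mathbf N}-\mathbf N)+(\widehat{\mathbf P}^\dagger-\mathbf P^\dagger)\mathbf N$, and using $\|\mathbf N\|_2\le M_Y$, I get
\[
\|\widehat{\widetilde{\mathbf Q}}_t-\widetilde{\mathbf Q}_t\|_2\lesssim\frac{\varepsilon}{\sigma}\Bigl(\Gamma_Y+6\Gamma_X\bigl(M_Y/\sigma+\Gamma_Y\bigr)\Bigr)=\frac{\kappa_{ZX}^\star\varepsilon}{\sigma},
\]
which is exactly the form of $\kappa_{ZX}^\star$. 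Summing over both arms bounds $\|\mathbf E\|_2$, where $\mathbf E\triangleq\Delta\widehat{\widetilde{\mathbf Q}}-\Delta\widetilde{\mathbf Q}$.

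\textbf{Step 3 (eigenvalues).} By Theorem~\ref{thm:pairing}, $\Delta\widetilde{\mathbf Q}=\mathbf R^{-1}\mathbf D_\tau\mathbf R$. Bauer--Fike then places every eigenvalue of the perturbed matrix within $\kappa(\mathbf R)\|\mathbf E\|_2$ of some $\tau(u)$. Since $\mathbf V_k$ has orthonormal columns, $\mathbf R=\mathbf B^\top\mathbf V_k$, so $\kappa(\mathbf R)=\kappa_B$; this gives the first display. For the matching I pass to diagonal coordinates, $\mathbf R\,\Delta\widehat{\widetilde{\mathbf Q}}\,\mathbf R^{-1}=\mathbf D_\tau+\mathbf F$ with $\|\mathbf F\|\le\kappa_B\|\mathbf E\|$. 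When the discs of radius $\kappa_B\|\mathbf E\|<\delta_{\min}/2$ are disjoint, a continuity/homotopy argument (Gershgorin-type counting along $\mathbf D_\tau+s\mathbf F$, $s\in[0,1]$, as in Lemma~\ref{lem:diag_localization}) shows each disc contains exactly one eigenvalue. That yields the permutation $\rho$ and \eqref{eq:bound_eigenvalue}.

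\textbf{Step 4 (lifted left eigenvectors), the main obstacle.} In diagonal coordinates the left eigenvector of $\mathbf D_\tau+\mathbf F$ for the isolated eigenvalue near $\tau(u)$ is $e_u^\top+\mathbf g^\top$. Writing out the eigen-equation, the off-$u$ coordinates satisfy $g_v(\widehat\tau_u-\tau(v))=(\text{row }u\text{ of }\mathbf F\text{ plus }\mathbf g^\top\mathbf F)_v$. Since $|\widehat\tau_u-\tau(v)|\ge 3\delta_u/4$, this gives $\|\mathbf g\|\lesssim\|\mathbf F\|/\delta_u$, and the radius condition $\delta_u/4$ is exactly what closes this fixed point. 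Mapping back, $\widehat{\mathbf w}_u^\top\propto(e_u+\mathbf g)^\top\mathbf R$. Normalizing against the true row $\mathbf w_u^\top=e_u^\top\mathbf R$ costs one more factor of $\kappa_B$, giving an angle of order $\kappa_B^2\kappa_{ZX}^\star\varepsilon/(\delta_u\sigma)$. The lift $\widehat{\mathbf w}_u^\top\widehat{\mathbf V}_k^\top$ adds the subspace error $\|\widehat{\mathbf V}_k-\mathbf V_k\|\lesssim\varepsilon/\sigma$. Because $\mathbf V_k$ is an isometry, the angle of $\mathbf w_u^\top\mathbf V_k^\top$ equals the angle of $\mathbf w_u^\top$. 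Adding the two contributions gives \eqref{eq:bound_eigenvector}. The delicate parts are tracking the factor $\kappa_B$ correctly through the unit-norm normalization, and handling the sign/phase alignment together with the orthogonal $\mathbf O$ from Wedin consistently.
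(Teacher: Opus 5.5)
Your proposal is correct and follows the paper's proof essentially step for step: concentration on one event, Wedin alignment of $\widehat{\mathbf V}_k$, and a pseudo-inverse perturbation bound giving $\kappa_{ZX}^{\star}\varepsilon_{n,\eta}/\sigma$ per arm; then conjugation to $\mathbf D_\tau+\mathbf F$ with $\|\mathbf F\|_2\le\kappa_B\|\mathbf E\|_2$, Bauer--Fike plus the homotopy count of Lemma~\ref{lem:diag_localization}, a diagonal-coordinate left-eigenvector bound of order $\|\mathbf F\|_2/\delta_u$, one more factor $\kappa_B$ from mapping back, and the additive $\varepsilon_{n,\eta}/\sigma$ lifting error. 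The differences are cosmetic: you absorb the Wedin rotation by a WLOG instead of carrying $\mathbf H$, you split $\widehat{\mathbf P}^\dagger\widehat{\mathbf N}-\mathbf P^\dagger\mathbf N$ slightly differently, and you use a fixed-point estimate where the paper explicitly inverts the off-$u$ block --- that same invertibility is what guarantees the perturbed left eigenvector has nonzero $u$-th coordinate, so it is worth stating.
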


The two terms in \eqref{eq:bound_eigenvector} have distinct origins: the first
is diagonal-coordinate left-eigenvector perturbation transported through the
similarity map, while the second is the subspace-lifting error induced by
estimating $\mathbf V_k$.

\begin{theorem}[Finite-Sample Recovery of the Feature Matrix and Mixture Weights]
\label{thm:B_p_recovery}
Assume Assumption~\ref{ass:richness}, strict latent positivity,
Assumption~\ref{ass:finite_sample}, and spectral separation. Let
\[
\delta_{\min}\triangleq
\begin{cases}
\min_{u\neq v}|\tau(u)-\tau(v)|, & k\ge 2,\\
+\infty, & k=1.
\end{cases}
\]
Thus \(\delta_{\min}^{-1}=0\) when \(k=1\), and the spectral-separation
condition is vacuous in the one-class case. Let
\(\mathbf b_u^\top\) denote the \(u\)-th row of the true matrix
\(\mathbf B^\top\). Since \(X_1=1\), \(b_{u1}=1\) for every \(u\). Define the
anchor margin
\[
\alpha_{\mathrm{anc}}
\triangleq
\min_{1\le u\le k}\frac{|b_{u1}|}{\|\mathbf b_u\|_2}
=
\min_{1\le u\le k}\frac{1}{\|\mathbf b_u\|_2}
>0.
\]
Also define
$\sigma_B\triangleq \sigma_k(\mathbf B)$,
$\bm\mu_X\triangleq \E[X]$,
$M_\mu\triangleq \|\bm\mu_X\|_2$,
$L_Q\triangleq \max\{L_{ZX},L_{ZXY}\}$.
For $\eta\in(0,1)$, set $\Lambda_\eta^\star\triangleq \log\frac{16(d_x+d_z)}{\eta}$,
and define the eigenvalue and lifted-row radii
\[
r_{\tau,n,\eta}\triangleq C_\tau\frac{\kappa_B\kappa_{ZX}^{\star}L_Q}{\sigma}\sqrt{\tfrac{\Lambda_\eta^\star}{n\pi}},
\quad
r_{B,n,\eta}\triangleq C_B L_Q\!\left(\tfrac{\kappa_B^2\kappa_{ZX}^{\star}}{\delta_{\min}\sigma}+\tfrac{1}{\sigma}\right)\!\sqrt{\tfrac{\Lambda_\eta^\star}{n\pi}},
\]
and
$\varepsilon_{B,n,\eta}\triangleq C_{\mathrm{anc}}\alpha_{\mathrm{anc}}^{-2}r_{B,n,\eta}$,
$\varepsilon_{\mu,n,\eta}\triangleq C_\mu L_X\sqrt{\Lambda_\eta^\star/n}$,
where $C_\tau,C_B,C_{\mathrm{anc}},C_\mu>0$ are universal constants.
Assume the sample-size conditions of Theorem~\ref{thm:sample_complexity} hold with
$\eta/2$ in place of $\eta$, and assume additionally that
\[
r_{\tau,n,\eta}<\tfrac{\delta_{\min}}{2},\qquad
r_{B,n,\eta}\le c_0\alpha_{\mathrm{anc}},\qquad
\sqrt{k}\,\varepsilon_{B,n,\eta}\le \tfrac{\sigma_B}{2},
\]
for a sufficiently small universal constant $c_0>0$. Then, with probability at least $1-\eta$, there exists a permutation $\rho\in S_k$ such that the following hold simultaneously.

First, the treatment-effect estimates obey
\[
\max_{1\le u\le k}|\widehat\tau_{\rho(u)}-\tau(u)|\le r_{\tau,n,\eta}.
\]
Second, let $\widehat{\mathbf w}_{\rho(u)}^\top$ be the empirical left eigenvector associated with $\widehat\tau_{\rho(u)}$, and define the corresponding unit-norm lifted row
\[
\widehat{\mathbf q}_{\rho(u)}^\top\triangleq \frac{\widehat{\mathbf w}_{\rho(u)}^\top\widehat{\mathbf V}_k^\top}{\bigl\|\widehat{\mathbf w}_{\rho(u)}^\top\widehat{\mathbf V}_k^\top\bigr\|_2}.
\]
After choosing the real sign so that its first coordinate is aligned with the population anchor, define the anchor-normalized row estimate $\widehat{\mathbf b}_{\rho(u)}^\top\triangleq \widehat{\mathbf q}_{\rho(u)}^\top/\widehat q_{\rho(u),1}$. Then
\[
\max_{1\le u\le k}\bigl\|\widehat{\mathbf b}_{\rho(u)}^\top-\mathbf b_u^\top\bigr\|_2\le \varepsilon_{B,n,\eta}.
\]
Equivalently, if $\widehat{\mathbf B}_\rho^\top$ is the row-permuted estimator whose $u$-th row is $\widehat{\mathbf b}_{\rho(u)}^\top$, then
\[
\|\widehat{\mathbf B}_\rho-\mathbf B\|_2\le \|\widehat{\mathbf B}_\rho^\top-\mathbf B^\top\|_F\le \sqrt{k}\,\varepsilon_{B,n,\eta}.
\]
Third, let
$\widehat{\bm\mu}_X\triangleq \tfrac1n\sum_{i=1}^n X_i$,
$\widehat{\bm p}_{\rho,\mathrm{raw}}\triangleq \widehat{\mathbf B}_\rho^\dagger \widehat{\bm\mu}_X$,
$\widehat{\bm p}_{\rho}\triangleq \Pi_{\Delta^{k-1}}\bigl(\widehat{\bm p}_{\rho,\mathrm{raw}}\bigr)$,
where $\Pi_{\Delta^{k-1}}$ denotes Euclidean projection onto the probability simplex. Then
\[
\|\widehat{\bm p}_{\rho}-\bm p\|_2\le \frac{2}{\sigma_B}\varepsilon_{\mu,n,\eta}+\frac{6M_\mu\sqrt{k}}{\sigma_B^2}\varepsilon_{B,n,\eta}.
\]
Similar to the previous theorem, under fixed conditioning, fixed eigengap, and fixed anchor-margin assumptions, the treatment effects, the anchor-normalized feature matrix, and the simplex-projected mixture proportions are all recovered at first-order $n^{-1/2}$ scale.
\end{theorem}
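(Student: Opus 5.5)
The proof runs on a single high-probability event. Apply Theorem~\ref{thm:sample_complexity} with confidence level $\eta/2$; the constants $C_\tau,C_B$ are chosen to absorb the change from $\log(8(d_x+d_z)/(\eta/2))$ to $\Lambda_\eta^\star$. Separately, apply a vector Hoeffding/Bernstein bound to the bounded i.i.d. vectors $X_i$, with $\|X\|_2\le L_X$, to get $\|\widehat{\bm\mu}_X-\bm\mu_X\|_2\le\varepsilon_{\mu,n,\eta}$ with probability at least $1-\eta/2$. A union bound gives an event of probability at least $1-\eta$, and the three claims are proved deterministically on it.

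\textbf{First claim.} This is exactly \eqref{eq:bound_eigenvalue}, since $r_{\tau,n,\eta}<\delta_{\min}/2$ yields the matching permutation $\rho$. Because $r_{\tau,n,\eta}\le\delta_{\min}/2$, each empirical eigenvalue is isolated and hence real, by the complex-bifurcation remark. The left eigenvectors can therefore be taken real.

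\textbf{Second claim.} Since $r_{\tau,n,\eta}<\delta_{\min}/4\le\delta_u/4$ once $C_\tau$ is adjusted, we may invoke \eqref{eq:bound_eigenvector} for every $u$, using $\delta_u\ge\delta_{\min}$. This gives $\sin\theta(\widehat{\mathbf q}_{\rho(u)},\mathbf q_u)\le r_{B,n,\eta}$, where $\mathbf q_u=\mathbf b_u/\|\mathbf b_u\|_2$. After sign alignment, $\|\widehat{\mathbf q}_{\rho(u)}-\mathbf q_u\|_2\le\sqrt2\,r_{B,n,\eta}$.

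The anchor normalization is the main obstacle: dividing by the first coordinate must not blow up the error. We have $q_{u1}=1/\|\mathbf b_u\|_2\ge\alpha_{\mathrm{anc}}$ and $|\widehat q_{\rho(u),1}-q_{u1}|\le\sqrt2 r_{B,n,\eta}\le\sqrt2c_0\alpha_{\mathrm{anc}}$. With $c_0$ small this gives $|\widehat q_{\rho(u),1}|\ge\alpha_{\mathrm{anc}}/2$, and the sign is fixed. Then
\[
\widehat{\mathbf b}-\mathbf b=\frac{\widehat{\mathbf q}-\mathbf q}{\widehat q_1}+\mathbf q\,\frac{q_1-\widehat q_1}{\widehat q_1 q_1}.
\]
The first term is at most $2\sqrt2r/\alpha_{\mathrm{anc}}$ and the second at most $4\sqrt2 r/\alpha_{\mathrm{anc}}^2$. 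Using $\alpha_{\mathrm{anc}}\le1$, both are bounded by $C_{\mathrm{anc}}\alpha_{\mathrm{anc}}^{-2}r_{B,n,\eta}=\varepsilon_{B,n,\eta}$. Stacking the rows gives the Frobenius bound $\sqrt k\,\varepsilon_{B,n,\eta}$, and the spectral norm is bounded by the Frobenius norm.

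\textbf{Third claim.} By Weyl's inequality, $\sqrt k\varepsilon_B\le\sigma_B/2$ gives $\sigma_k(\widehat{\mathbf B}_\rho)\ge\sigma_B/2$. Hence $\|\widehat{\mathbf B}_\rho^\dagger\|_2\le2/\sigma_B$ and the ranks agree. Wedin's pseudo-inverse perturbation bound then gives
\[
\|\widehat{\mathbf B}_\rho^\dagger-\mathbf B^\dagger\|_2\le c\,\|\widehat{\mathbf B}_\rho-\mathbf B\|_2\,\|\widehat{\mathbf B}_\rho^\dagger\|_2\|\mathbf B^\dagger\|_2\le\frac{3\sqrt k\,\varepsilon_B}{\sigma_B^2},
\]
with a constant at most $3/2$ for the full-column-rank case. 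Since $\bm p=\mathbf B^\dagger\bm\mu_X$ by Proposition~\ref{prop:latent_prob},
\[
\widehat{\bm p}_{\rm raw}-\bm p=\widehat{\mathbf B}_\rho^\dagger(\widehat{\bm\mu}_X-\bm\mu_X)+(\widehat{\mathbf B}_\rho^\dagger-\mathbf B^\dagger)\bm\mu_X,
\]
which is bounded by $\tfrac2{\sigma_B}\varepsilon_\mu+\tfrac{3M_\mu\sqrt k}{\sigma_B^2}\varepsilon_B$. The stated factor of $6$ leaves slack for the generic Wedin constant. Finally, $\bm p\in\Delta^{k-1}$, so $\Pi_{\Delta^{k-1}}(\bm p)=\bm p$. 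Nonexpansiveness of Euclidean projection onto a closed convex set then gives $\|\widehat{\bm p}_\rho-\bm p\|_2\le\|\widehat{\bm p}_{\rho,\rm raw}-\bm p\|_2$, which completes the bound.
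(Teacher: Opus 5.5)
Your proposal is correct and takes essentially the same route as the paper: you split the failure probability between Theorem~\ref{thm:sample_complexity} and mean concentration, convert the lifted-row angle bound into an anchor-normalized row bound using the first-coordinate margin $\alpha_{\mathrm{anc}}$, control $\widehat{\mathbf B}_\rho^\dagger$ by Weyl's inequality plus a pseudo-inverse perturbation bound, and finish with nonexpansiveness of the simplex projection. The differences are cosmetic: you inline the estimate of Lemma~\ref{lem:anchor_normalization}, you use Wedin's $\sqrt2$-constant pseudo-inverse bound where the paper uses Lemma~\ref{lem:pinv_perturb} with constant $6$, and you are more explicit than the paper in securing the $\delta_u/4$ precondition of the eigenvector bound by enlarging $C_\tau$.
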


\section{Empirical Evaluation}
\label{sec:experiments}

We evaluate the compressed spectral estimator on synthetic data generated from
the finite-mixture proxy model of Section~\ref{sec:geometry}. Across all
experiments the latent dimension $k$ is treated as known. Throughout, we work
in the overcomplete proxy regime $d_z = d_x = k + 3$. This is both the
empirically realistic setting --- observed feature spaces typically admit more
proxy coordinates than the underlying latent cardinality --- and the regime in
which the SVD compression of Definition \ref{def:compressed_operator} does meaningful
work, projecting onto the signal subspace and discarding ambient noise
directions. When $d_z = d_x = k$ the SVD step becomes trivial: $\widehat V_k$ is a rotation of the identity, and Theorem \ref{thm:moment_equivalence}
then implies that our spectral estimator and the base SPO baseline below produce the same multiset
of eigenvalues through algebraically equivalent procedures, \textit{generically when the induced spectral
weights are nonzero for all latent treatment effects}. We verified this empirically: The two
estimators agree to numerical precision across the full grid of $(k, N)$
settings reported in this section. We therefore report only the
overcomplete results, in which the comparison between adaptive subspace
projection and fixed truncation is informative.

For each treatment arm $t\in\{0,1\}$, we form
\[
\widehat{\mathbf{M}}_{ZX|t}
=
\frac{1}{n_t}\sum_{i:T_i=t} Z_iX_i^\top,
\qquad
\widehat{\mathbf{M}}_{ZXY|t}
=
\frac{1}{n_t}\sum_{i:T_i=t} Y_iZ_iX_i^\top,
\]
extract $\widehat{\mathbf{V}}_k$ from the truncated SVD of the stacked proxy
matrix
$\widehat{\mathbf M}_{\mathrm{stack}}
= [\widehat{\mathbf M}_{ZX|0}^\top \mid \widehat{\mathbf M}_{ZX|1}^\top]^\top$,
and construct
\[
\widehat{\widetilde{\mathbf{Q}}}_t
=
(\widehat{\mathbf{M}}_{ZX|t}\widehat{\mathbf{V}}_k)^\dagger
(\widehat{\mathbf{M}}_{ZXY|t}\widehat{\mathbf{V}}_k).
\]
The latent treatment effects are estimated by the real parts of the
eigenvalues of
\[
\Delta\widehat{\widetilde{\mathbf{Q}}}
=
\widehat{\widetilde{\mathbf{Q}}}_1-
\widehat{\widetilde{\mathbf{Q}}}_0,
\]
sorted increasingly before comparison with the population effects.

For comparison, we use the base SPO moment-chain of
\citep{mazaheri2024synthetic}. Because that construction is built around the
inverse of a square $k\times k$ proxy moment matrix, it does not natively
accept overcomplete inputs; we therefore restrict its input to the first $k$
proxy coordinates. We form the arm-specific quotients
$\widehat{\mathbf Q}_t = \widehat{\mathbf M}_{ZX|t}^{-1}\widehat{\mathbf M}_{ZXY|t}$,
and build the scalar moment sequence
\[
m_\ell
=
\widehat{\mathbf{a}}^\top
(\widehat{\mathbf{Q}}_1-\widehat{\mathbf{Q}}_0)^\ell
\widehat{\mathbf{c}},
\qquad
\ell=0,\ldots,2k-1,
\]
recovering the support through the generalized eigenvalues of the Hankel
pencil $(H_1,H_0)$. The empirical comparison in  Section \ref{sec:experiments_eigenvalues} is
therefore exactly the question of how to extract a $k$-dimensional working
space from the available $d_x = k+3$ proxy coordinates: adaptively, via the
top-$k$ right singular subspace of
$\widehat{\mathbf M}_{\mathrm{stack}}$, or by fixed truncation to the first
$k$ coordinates. Appendix \ref{app:no_truncation} additionally evaluates base SPO applied without
truncation to the full $(k+3)\times(k+3)$ moment matrices; that variant
performs \textbf{\underline{substantially worse}} than the truncated baseline reported in this
section, confirming that the truncation is itself a numerically favorable
choice within the original method.

\subsection{Eigenvalue Recovery Across Dimension and Sample Size}
\label{sec:experiments_eigenvalues}

We first measure recovery of the latent treatment-effect support over a grid of latent dimensions and sample sizes. Data are generated with
\[
U\sim \mathrm{Uniform}[k],
\qquad
\Pr(T=1\mid U=u)\in[0.3,0.7],
\qquad
\tau(u)=\mathrm{linspace}(-2,2,k),
\]
where treatment propensities are linearly spaced across latent classes. Proxy noise variance is fixed at $0.25$. Results are aggregated over 15 independent trials per cell, and we report median absolute eigenvalue error.

\begin{figure}[t]
    \centering
    \includegraphics[width=\textwidth]{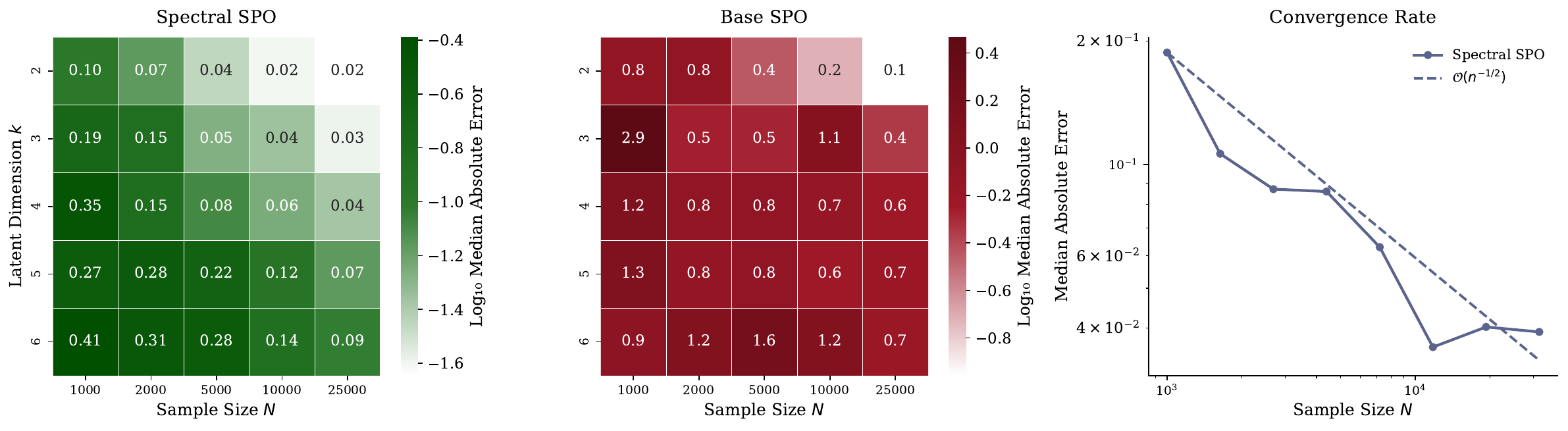}
    \caption{
    \textbf{Left:} Median absolute eigenvalue error of the compressed spectral estimator across latent dimensions $k\in\{2,\ldots,6\}$ and sample sizes $N\in\{10^3,\ldots,2.5\times 10^4\}$.
    \textbf{Center:} Corresponding errors for the base SPO moment-chain baseline. Errors are larger and less stable, especially as $k$ increases.
    \textbf{Right:} Empirical convergence rate of the spectral estimator at $k=3$ against the $\mathcal{O}(n^{-1/2})$ reference.
    }
    \label{fig:main_experiments}
\end{figure}

\begin{table}[t]
  \centering
  \caption{
  Median absolute eigenvalue error across latent dimensions. 
  Lower is better. Bold entries indicate the better method at each sample size.
  }
  \label{tab:mae_comparison}
  \setlength{\tabcolsep}{7pt}
  \renewcommand{\arraystretch}{1.12}
  \begin{tabular}{c cc cc cc c}
    \toprule
    & \multicolumn{2}{c}{$N=1{,}000$}
    & \multicolumn{2}{c}{$N=5{,}000$}
    & \multicolumn{2}{c}{$N=25{,}000$}
    & \cellcolor{gray!10}{} \\
    \cmidrule(lr){2-3}
    \cmidrule(lr){4-5}
    \cmidrule(lr){6-7}
    $k$
    & Spectral & Base
    & Spectral & Base
    & Spectral & Base
    & \cellcolor{gray!10}{Gain} \\
    \midrule
    \rowcolor{gray!3}
    $2$ & \textbf{0.102} & 0.810 & \textbf{0.035} & 0.357 & \textbf{0.023} & 0.110 & \cellcolor{green!8}{$4.9\times$} \\
    $3$ & \textbf{0.191} & 2.931 & \textbf{0.054} & 0.524 & \textbf{0.026} & 0.445 & \cellcolor{green!15}{$17.4\times$} \\
    \rowcolor{gray!3}
    $4$ & \textbf{0.348} & 1.248 & \textbf{0.083} & 0.815 & \textbf{0.042} & 0.584 & \cellcolor{green!13}{$13.8\times$} \\
    $5$ & \textbf{0.273} & 1.253 & \textbf{0.219} & 0.820 & \textbf{0.069} & 0.653 & \cellcolor{green!10}{$9.5\times$} \\
    \rowcolor{gray!3}
    $6$ & \textbf{0.408} & 0.907 & \textbf{0.283} & 1.631 & \textbf{0.094} & 0.654 & \cellcolor{green!9}{$7.0\times$} \\
    \bottomrule
  \end{tabular}
\end{table}

Figure~\ref{fig:main_experiments} and Table~\ref{tab:mae_comparison} show that the compressed spectral estimator is consistently more accurate across the full grid. At $N=25{,}000$, the spectral errors range from $0.023$ to $0.094$, while the base SPO errors range from $0.110$ to $0.654$. The improvement is largest at $k=3$, where the final-sample error falls from $0.445$ to $0.026$, a $17.4\times$ reduction. The base SPO estimator also exhibits non-monotone behavior in several rows, consistent with the instability of recovering polynomial roots from empirically perturbed moment sequences. The spectral estimator instead works directly with the compressed difference operator, and its convergence curve follows the predicted $n^{-1/2}$ scale.

\subsection{Finite-Sample Eigenspace Distributions}
\label{sec:experiments_eigenspaces}

We next visualize the finite-sample distribution of recovered treatment effects in a fixed three-class problem. We run 300 Monte Carlo trials with
\[
k=3,
\qquad
N=5000,
\qquad
\tau(u)\in\{-2,0,2\}.
\]
The naive ATE estimator collapses the heterogeneous treatment-effect distribution into one scalar. The base SPO estimator targets the correct latent support in principle but is dispersed because perturbations in the moment sequence are amplified by the Hankel pencil. The compressed spectral estimator diagonalizes $\Delta\widehat{\widetilde{\mathbf{Q}}}$ directly on the recovered signal subspace, producing localized clusters around the population effects.

\begin{figure}[!htbp]
    \centering
    \includegraphics[width=0.88\textwidth]{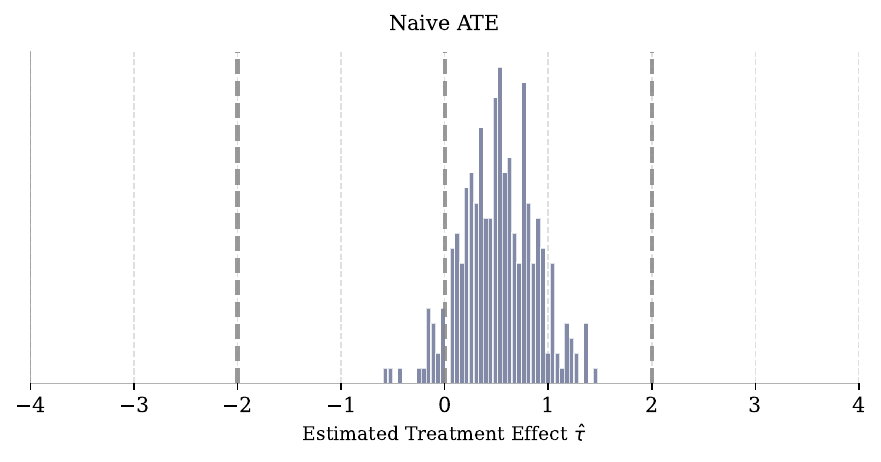}
    \caption{
    Naive ATE estimates across 300 trials for the three-class mixture with true effects $\{-2,0,2\}$. The estimator collapses the latent treatment-effect law into a single scalar and does not recover the mixture support.
    }
    \label{fig:hist_naive}
\end{figure}

\begin{figure}[!htbp]
    \centering
    \includegraphics[width=0.9\textwidth]{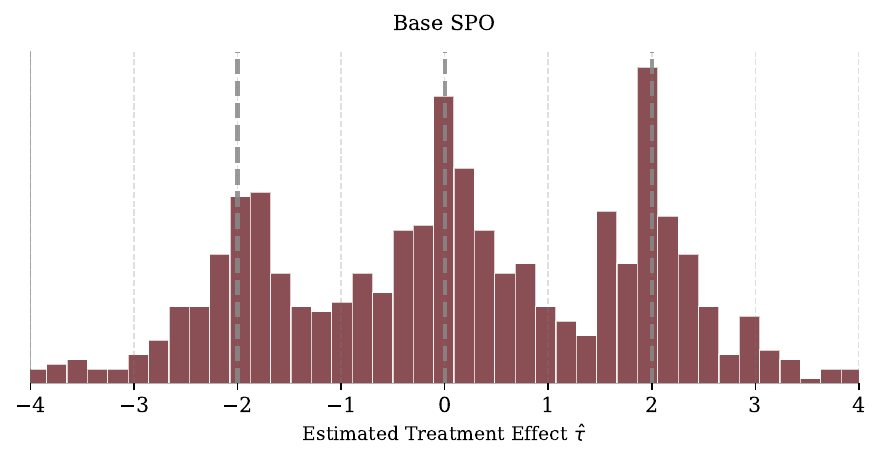}
    \caption{
    Base SPO eigenvalue estimates across 300 trials. Although the scalar moment construction is population-identifying, the finite-sample estimates are diffuse over $[-4,4]$, reflecting the instability of Hankel pencil recovery under empirical moment noise.
    }
    \label{fig:hist_base}
\end{figure}

\begin{figure}[!htbp]
    \centering
    \includegraphics[width=0.9\textwidth]{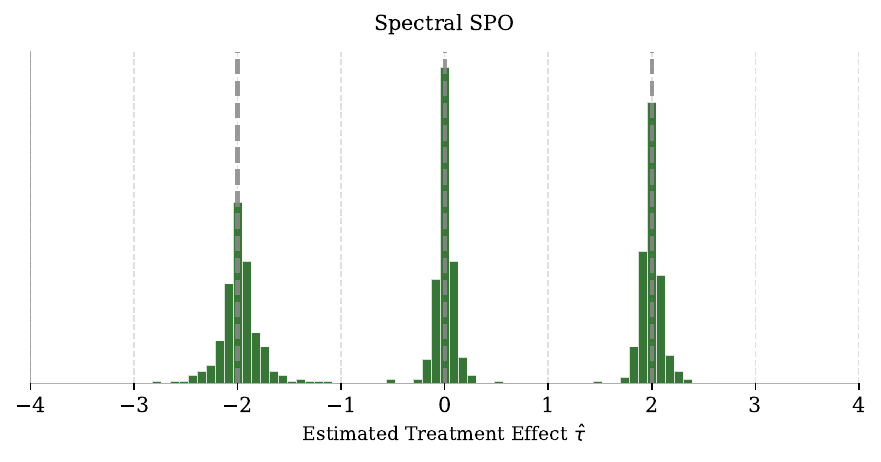}
    \caption{
    Compressed spectral estimator eigenvalue estimates across 300 trials. The empirical distribution forms sharp clusters around the true effects $\tau\in\{-2,0,2\}$, consistent with the eigenvalue perturbation guarantee of Theorem~\ref{thm:sample_complexity}.
    }
    \label{fig:hist_spectral}
\end{figure}

Figures~\ref{fig:hist_naive}--\ref{fig:hist_spectral} give a distributional view of the same pattern. The naive estimator misses latent heterogeneity, the base SPO moment-chain estimator is sensitive to finite-sample perturbations, and the compressed spectral estimator remains concentrated around the true latent treatment effects. The histograms also show that the improvement is not driven only by a few favorable trials: the spectral estimator has visibly smaller dispersion and fewer extreme failures across repeated simulations. 

\section{Discussion and Future Directions}
\label{sec:discussion}

Our results suggest that latent causal mixture identification is fundamentally an operator problem. The scalar moment construction of prior work is not separate from this perspective; it is obtained by evaluating bilinear functionals of powers of the same compressed quotient operator. The spectral method therefore exposes the finite-dimensional object that the moment sequence was implicitly probing, and identifies the latent treatment effects by diagonalizing this object directly on the shared proxy signal subspace. This reframing also clarifies the role of overcomplete proxy features. Rectangular proxy embeddings do not change the latent estimand, but they can provide a better-conditioned observable representation before the final rank-$k$ compression, stabilizing the pseudo-inverse and eigendecomposition steps. Several directions remain open. One is to extend the quotient-operator formulation beyond finite latent classes, for example through compact integral operators or RKHS analogues. It is also worth noting that the exact homogeneity characterization $\Delta \widetilde{\mathbf{Q}}=\tau_\star \mathbf{I}_k$ suggests a direct path to tests for treatment-effect heterogeneity, while the observed gains from overcomplete proxies motivate sharper population lower bounds for signal singular values.

\section{Acknowledgements}

Support for Bijan Mazaheri was provided by the Advanced Research Concepts (ARC) COMPASS program, sponsored by the Defense Advanced Research Projects Agency (DARPA) under agreement number
HR001-25-3-0212.

\newpage

\bibliographystyle{plainnat}
\bibliography{references}

@misc{guo2026comparingproxymethodscausal,
      title={Comparing Two Proxy Methods for Causal Identification}, 
      author={Helen Guo and Elizabeth L. Ogburn and Ilya Shpitser},
      year={2026},
      eprint={2512.00175},
      archivePrefix={arXiv},
      primaryClass={stat.ME},
      url={https://arxiv.org/abs/2512.00175}, 
}

@article{miao2018identifying,
  title     = {Identifying Causal Effects with Proxy Variables of an Unmeasured Confounder},
  author    = {Miao, Wang and Geng, Zhi and Tchetgen Tchetgen, Eric J.},
  journal   = {Biometrika},
  volume    = {105},
  number    = {4},
  pages     = {987--993},
  year      = {2018},
  publisher = {Oxford University Press},
  url       = {https://doi.org/10.1093/biomet/asy038}
}

@article{loh2022evaluating,
  title     = {Evaluating Sensitivity to Classification Uncertainty in Latent Subgroup Effect Analyses},
  author    = {Loh, Wen Wei and Kim, Jee-Seon},
  journal   = {BMC Medical Research Methodology},
  volume    = {22},
  number    = {1},
  pages     = {247},
  year      = {2022},
  publisher = {Springer},
  url       = {https://doi.org/10.1186/s12874-022-01720-8}
}

@misc{xu2026aleatoric,
  title         = {Estimating Aleatoric Uncertainty in the Causal Treatment Effect},
  author        = {Xu, Liyuan and Mazaheri, Bijan},
  year          = {2026},
  eprint        = {2602.08461},
  archivePrefix = {arXiv},
  primaryClass  = {cs.LG},
  doi           = {10.48550/arXiv.2602.08461},
  url           = {https://arxiv.org/abs/2602.08461}
}

@misc{mazaheri2023latentclassconfounding,
  title         = {Causal Discovery under Latent Class Confounding},
  author        = {Mazaheri, Bijan and Gordon, Spencer and Rabani, Yuval and Schulman, Leonard J.},
  year          = {2023},
  eprint        = {2311.07454},
  doi           = {10.48550/arXiv.2311.07454},
  url           = {https://arxiv.org/abs/2311.07454},
  note          = {Last revised October 16, 2024}
}

@inproceedings{bozkurt2025densityratio,
  title = {Density Ratio-based Proxy Causal Learning Without Density Ratios},
  author = {Bozkurt, Bariscan and Deaner, Ben and Meunier, Dimitri and Xu, Liyuan and Gretton, Arthur},
  booktitle = {Proceedings of The 28th International Conference on Artificial Intelligence and Statistics},
  series = {Proceedings of Machine Learning Research},
  volume = {258},
  pages = {5095--5103},
  year = {2025},
  publisher = {PMLR},
  url = {https://proceedings.mlr.press/v258/bozkurt25a.html}
}

@article{lyu2023estimating,
  title     = {Estimating Heterogeneous Treatment Effects Within Latent Class Multilevel Models: A Bayesian Approach},
  author    = {Lyu, Weicong and Kim, Jee-Seon and Suk, Youmi},
  journal   = {Journal of Educational and Behavioral Statistics},
  volume    = {48},
  number    = {1},
  pages     = {3--36},
  year      = {2023},
  publisher = {SAGE Publications},
  doi       = {10.3102/10769986221115446},
  url       = {https://doi.org/10.3102/10769986221115446}
}

@article{shi2020selective,
  title     = {A Selective Review of Negative Control Methods in Epidemiology},
  author    = {Shi, Xu and Miao, Wang and Tchetgen Tchetgen, Eric},
  journal   = {Current Epidemiology Reports},
  volume    = {7},
  number    = {4},
  pages     = {190--202},
  year      = {2020},
  publisher = {Springer},
  url       = {https://doi.org/10.1007/s40471-020-00243-4}
}

@article{allman2009identifiability,
  title     = {Identifiability of Parameters in Latent Structure Models with Many Observed Variables},
  author    = {Allman, Elizabeth S. and Matias, Catherine and Rhodes, John A.},
  journal   = {The Annals of Statistics},
  volume    = {37},
  number    = {6A},
  pages     = {3099--3132},
  year      = {2009},
  publisher = {Institute of Mathematical Statistics},
  url       = {https://doi.org/10.1214/09-AOS689}
}

@article{simpson1951interpretation,
  title={The interpretation of interaction in contingency tables},
  author={Simpson, Edward H},
  journal={Journal of the Royal Statistical Society: Series B (Methodological)},
  volume={13},
  number={2},
  pages={238--241},
  year={1951},
  publisher={Wiley Online Library}
}

@article{anandkumar2014tensor,
  title   = {Tensor Decompositions for Learning Latent Variable Models},
  author  = {Anandkumar, Animashree and Ge, Rong and Hsu, Daniel and Kakade, Sham M. and Telgarsky, Matus},
  journal = {Journal of Machine Learning Research},
  volume  = {15},
  number  = {80},
  pages   = {2773--2832},
  year    = {2014},
  url     = {https://jmlr.org/papers/v15/anandkumar14b.html}
}

@article{wager2018estimation,
  title     = {Estimation and Inference of Heterogeneous Treatment Effects Using Random Forests},
  author    = {Wager, Stefan and Athey, Susan},
  journal   = {Journal of the American Statistical Association},
  volume    = {113},
  number    = {523},
  pages     = {1228--1242},
  year      = {2018},
  publisher = {Taylor \& Francis},
  url       = {https://doi.org/10.1080/01621459.2017.1319839}
}

@incollection{kim2015mixture,
  title     = {Mixture Modeling Methods for Causal Inference with Multilevel Data},
  author    = {Kim, Jee-Seon and Steiner, Peter M. and Lim, Wen-Chiang},
  booktitle = {Advances in Multilevel Modeling for Educational Research: Addressing Practical Issues Found in Real-World Applications},
  pages     = {335--359},
  year      = {2015},
  publisher = {Information Age Publishing},
  url       = {https://www.amazon.com/Advances-Multilevel-Modeling-Educational-Research/dp/1681233282}
}

@article{suk2021hybridizing,
  title={Hybridizing machine learning methods and finite mixture models for estimating heterogeneous treatment effects in latent classes},
  author={Suk, Youmi and Kim, Jee-Seon and Kang, Hyunseung},
  journal={Journal of Educational and Behavioral Statistics},
  volume={46},
  number={3},
  pages={323--347},
  year={2021},
  publisher={Sage Publications Sage CA: Los Angeles, CA}
}

@incollection{pearl2022detecting,
  title     = {Detecting Latent Heterogeneity},
  author    = {Pearl, Judea},
  booktitle = {Probabilistic and Causal Inference: The Works of Judea Pearl},
  pages     = {483--506},
  year      = {2022},
  publisher = {Association for Computing Machinery},
  url       = {https://doi.org/10.1145/3501714.3501742}
}

@article{nie2021quasi,
  title     = {Quasi-Oracle Estimation of Heterogeneous Treatment Effects},
  author    = {Nie, Xinkun and Wager, Stefan},
  journal   = {Biometrika},
  volume    = {108},
  number    = {2},
  pages     = {299--319},
  year      = {2021},
  publisher = {Oxford University Press},
  doi       = {10.1093/biomet/asaa076},
  url       = {https://doi.org/10.1093/biomet/asaa076}
}

@article{wang2019blessings,
  title     = {The Blessings of Multiple Causes},
  author    = {Wang, Yixin and Blei, David M.},
  journal   = {Journal of the American Statistical Association},
  volume    = {114},
  number    = {528},
  pages     = {1574--1596},
  year      = {2019},
  publisher = {Taylor \& Francis},
  url       = {https://doi.org/10.1080/01621459.2019.1686987}
}

@article{ogburn2019comment,
  title     = {Comment on ``Blessings of Multiple Causes''},
  author    = {Ogburn, Elizabeth L. and Shpitser, Ilya and Tchetgen Tchetgen, Eric J.},
  journal   = {Journal of the American Statistical Association},
  volume    = {114},
  number    = {528},
  pages     = {1611--1615},
  year      = {2019},
  publisher = {Taylor \& Francis},
  url       = {https://doi.org/10.1080/01621459.2019.1689139}
}

@inproceedings{hsu2013learning,
  title={Learning mixtures of spherical gaussians: moment methods and spectral decompositions},
  author={Hsu, Daniel and Kakade, Sham M},
  booktitle={Proceedings of the 4th conference on Innovations in Theoretical Computer Science},
  pages={11--20},
  year={2013}
}

@inproceedings{gordon2023identification,
  title     = {Identification of Mixtures of Discrete Product Distributions in Near-Optimal Sample and Time Complexity},
  author    = {Gordon, Spencer L. and Jahn, Erik and Mazaheri, Bijan and Rabani, Yuval and Schulman, Leonard J.},
  booktitle = {Proceedings of Thirty Seventh Conference on Learning Theory},
  pages     = {2071--2091},
  year      = {2024},
  volume    = {247},
  series    = {Proceedings of Machine Learning Research},
  publisher = {PMLR},
  url       = {https://proceedings.mlr.press/v247/gordon24a.html}
}

@inproceedings{rabani2014learning,
  title     = {Learning Mixtures of Arbitrary Distributions over Large Discrete Domains},
  author    = {Rabani, Yuval and Schulman, Leonard J. and Swamy, Chaitanya},
  booktitle = {Proceedings of the 5th Conference on Innovations in Theoretical Computer Science},
  pages     = {207--224},
  year      = {2014},
  publisher = {Association for Computing Machinery},
  url       = {https://doi.org/10.1145/2554797.2554818}
}

@article{ogburn2020counterexamples,
  title   = {Counterexamples to ``The Blessings of Multiple Causes'' by Wang and Blei},
  author  = {Ogburn, Elizabeth L. and Shpitser, Ilya and Tchetgen Tchetgen, Eric J.},
  journal = {arXiv preprint arXiv:2001.06555},
  year    = {2020},
  url     = {https://arxiv.org/abs/2001.06555}
}

@book{pearl2009causality,
  title={Causality},
  author={Pearl, Judea},
  year={2009},
  publisher={Cambridge university press}
}

@inproceedings{gordon2021source,
  title     = {Source Identification for Mixtures of Product Distributions},
  author    = {Gordon, Spencer and Mazaheri, Bijan H. and Rabani, Yuval and Schulman, Leonard},
  booktitle = {Proceedings of Thirty Fourth Conference on Learning Theory},
  pages     = {2193--2216},
  year      = {2021},
  volume    = {134},
  series    = {Proceedings of Machine Learning Research},
  publisher = {PMLR},
  url       = {https://proceedings.mlr.press/v134/gordon21a.html}
}

@inproceedings{gordon2023causal,
  title     = {Causal Inference Despite Limited Global Confounding via Mixture Models},
  author    = {Gordon, Spencer L. and Mazaheri, Bijan and Rabani, Yuval and Schulman, Leonard},
  booktitle = {Proceedings of the Second Conference on Causal Learning and Reasoning},
  pages     = {574--601},
  year      = {2023},
  volume    = {213},
  series    = {Proceedings of Machine Learning Research},
  publisher = {PMLR},
  url       = {https://proceedings.mlr.press/v213/gordon23a.html}
}

@inproceedings{squires2022causal,
  title     = {Causal Imputation via Synthetic Interventions},
  author    = {Squires, Chandler and Shen, Dennis and Agarwal, Anish and Shah, Devavrat and Uhler, Caroline},
  booktitle = {Proceedings of the First Conference on Causal Learning and Reasoning},
  pages     = {688--711},
  year      = {2022},
  volume    = {177},
  series    = {Proceedings of Machine Learning Research},
  publisher = {PMLR},
  url       = {https://proceedings.mlr.press/v177/squires22b.html}
}

@inproceedings{agarwal2023causal,
  title     = {Causal Matrix Completion},
  author    = {Agarwal, Anish and Dahleh, Munther and Shah, Devavrat and Shen, Dennis},
  booktitle = {Proceedings of Thirty Sixth Conference on Learning Theory},
  pages     = {3821--3826},
  year      = {2023},
  volume    = {195},
  series    = {Proceedings of Machine Learning Research},
  publisher = {PMLR},
  url       = {https://proceedings.mlr.press/v195/agarwal23c.html}
}

@article{abadie2010synthetic,
  title={Synthetic control methods for comparative case studies: Estimating the effect of California’s tobacco control program},
  author={Abadie, Alberto and Diamond, Alexis and Hainmueller, Jens},
  journal={Journal of the American statistical Association},
  volume={105},
  number={490},
  pages={493--505},
  year={2010},
  publisher={Taylor \& Francis}
}

@article{abadie2021using,
  author    = {Abadie, Alberto},
  title     = {Using Synthetic Controls: Feasibility, Data Requirements, and Methodological Aspects},
  journal   = {Journal of Economic Literature},
  year      = {2021},
  volume    = {59},
  number    = {2},
  pages     = {391--425},
  publisher = {American Economic Association},
  url       = {https://doi.org/10.1257/jel.20191450}
}

@article{lipsitch2010negative,
  title={Negative controls: a tool for detecting confounding and bias in observational studies},
  author={Lipsitch, Marc and Tchetgen, Eric Tchetgen and Cohen, Ted},
  journal={Epidemiology},
  volume={21},
  number={3},
  pages={383--388},
  year={2010},
  publisher={LWW}
}

@article{miao2018confounding,
  title     = {A Confounding Bridge Approach for Double Negative Control Inference on Causal Effects},
  author    = {Miao, Wang and Shi, Xu and Li, Yilin and Tchetgen Tchetgen, Eric J.},
  journal   = {Statistical Theory and Related Fields},
  volume    = {8},
  number    = {4},
  pages     = {262--273},
  year      = {2024},
  publisher = {Taylor \& Francis},
  url       = {https://doi.org/10.1080/24754269.2024.2390748}
}

@article{tchetgen2020introduction,
  title   = {An Introduction to Proximal Causal Learning},
  author  = {Tchetgen Tchetgen, Eric J. and Ying, Andrew and Cui, Yifan and Shi, Xu and Miao, Wang},
  journal = {arXiv preprint arXiv:2009.10982},
  year    = {2020},
  doi     = {10.48550/arXiv.2009.10982},
  url     = {https://arxiv.org/abs/2009.10982}
}

@inproceedings{mastouri2021proximal,
  title     = {Proximal Causal Learning with Kernels: Two-Stage Estimation and Moment Restriction},
  author    = {Mastouri, Afsaneh and Zhu, Yuchen and Gultchin, Limor and Korba, Anna and Silva, Ricardo and Kusner, Matt and Gretton, Arthur and Muandet, Krikamol},
  booktitle = {Proceedings of the 38th International Conference on Machine Learning},
  pages     = {7512--7523},
  year      = {2021},
  volume    = {139},
  series    = {Proceedings of Machine Learning Research},
  publisher = {PMLR},
  url       = {https://proceedings.mlr.press/v139/mastouri21a.html}
}

@inproceedings{singh2019kernel,
  title     = {Kernel Instrumental Variable Regression},
  author    = {Singh, Rahul and Sahani, Maneesh and Gretton, Arthur},
  booktitle = {Advances in Neural Information Processing Systems},
  volume    = {32},
  year      = {2019},
  url       ={https://proceedings.neurips.cc/paper_files/paper/2019/hash/17b3c7061788dbe82de5abe9f6fe22b3-Abstract.html}
}

@article{imai2011estimation,
  title={Estimation of heterogeneous treatment effects from randomized experiments, with application to the optimal planning of the get-out-the-vote campaign},
  author={Imai, Kosuke and Strauss, Aaron},
  journal={Political Analysis},
  volume={19},
  number={1},
  pages={1--19},
  year={2011},
  publisher={Cambridge University Press}
}

@article{kunzel2019causaltoolbox,
  title     = {CausalToolbox---Estimator Stability for Heterogeneous Treatment Effects},
  author    = {K{\"u}nzel, S{\"o}ren R. and Walter, Simon J. S. and Sekhon, Jasjeet S.},
  journal   = {Observational Studies},
  volume    = {5},
  number    = {2},
  pages     = {105--117},
  year      = {2019},
  publisher = {University of Pennsylvania Press},
  url       = {https://doi.org/10.1353/obs.2019.0005}
}

@article{wendling2018comparing,
  title     = {Comparing Methods for Estimation of Heterogeneous Treatment Effects Using Observational Data from Health Care Databases},
  author    = {Wendling, Thierry and Jung, Kenneth and Callahan, Alison and Schuler, Alejandro and Shah, Nigam H. and Gallego, Blanca},
  journal   = {Statistics in Medicine},
  volume    = {37},
  number    = {23},
  pages     = {3309--3324},
  year      = {2018},
  publisher = {Wiley},
  url       = {https://doi.org/10.1002/sim.7820}
}

@article{xie2012estimating,
  title     = {Estimating Heterogeneous Treatment Effects with Observational Data},
  author    = {Xie, Yu and Brand, Jennie E. and Jann, Ben},
  journal   = {Sociological Methodology},
  volume    = {42},
  number    = {1},
  pages     = {314--347},
  year      = {2012},
  publisher = {SAGE Publications},
  url       = {https://doi.org/10.1177/0081175012452652}
}

@article{kruskal1977three,
  title={Three-way arrays: rank and uniqueness of trilinear decompositions, with application to arithmetic complexity and statistics},
  author={Kruskal, Joseph B},
  journal={Linear algebra and its applications},
  volume={18},
  number={2},
  pages={95--138},
  year={1977},
  publisher={Elsevier}
}

@inproceedings{mazaheri2024synthetic,
  title     = {Synthetic Potential Outcomes and Causal Mixture Identifiability},
  author    = {Mazaheri, Bijan and Squires, Chandler and Uhler, Caroline},
  booktitle = {Proceedings of The 28th International Conference on Artificial Intelligence and Statistics},
  pages     = {4276--4284},
  year      = {2025},
  volume    = {258},
  series    = {Proceedings of Machine Learning Research},
  publisher = {PMLR},
  url       = {https://proceedings.mlr.press/v258/mazaheri25a.html}
}

@book{snow1855mode,
  author    = {Snow, John},
  title     = {On the Mode of Communication of Cholera},
  edition   = {2nd},
  publisher = {John Churchill},
  address   = {London},
  year      = {1855},
  url       = {https://www.gutenberg.org/ebooks/72894}
}

@article{pearson1896regression,
  author    = {Pearson, Karl},
  title     = {Mathematical Contributions to the Theory of Evolution.
               {III}. Regression, Heredity, and Panmixia},
  journal   = {Philosophical Transactions of the Royal Society of London,
               Series A},
  volume    = {187},
  pages     = {253--318},
  year      = {1896}
}

@article{neyman1923application,
  author    = {Spława-Neyman, Jerzy},
  title     = {On the Application of Probability Theory to Agricultural
               Experiments: Essay on Principles, Section 9},
  journal   = {Statistical Science},
  volume    = {5},
  number    = {4},
  pages     = {465--472},
  year      = {1990},
  doi       = {10.1214/ss/1177012031},
  note      = {Translated from the Polish original (1923) by
               D.~M.~Dabrowska and T.~P.~Speed}
}

@article{rubin1974estimating,
  author    = {Rubin, Donald B.},
  title     = {Estimating Causal Effects of Treatments in Randomized
               and Nonrandomized Studies},
  journal   = {Journal of Educational Psychology},
  volume    = {66},
  number    = {5},
  pages     = {688--701},
  year      = {1974}
}

@article{haavelmo1944probability,
  author    = {Haavelmo, Trygve},
  title     = {The Probability Approach in Econometrics},
  journal   = {Econometrica},
  volume    = {12},
  pages     = {1--115},
  year      = {1944},
  note      = {Supplement}
}

@article{penrose1955generalized,
  title = {A Generalized Inverse for Matrices},
  author = {Penrose, Roger},
  journal = {Mathematical Proceedings of the Cambridge Philosophical Society},
  volume = {51},
  number = {3},
  pages = {406--413},
  year = {1955},
  publisher = {Cambridge University Press},
  doi = {10.1017/S0305004100030401},
  url = {https://doi.org/10.1017/S0305004100030401}
}

@book{benisrael2003generalized,
  title = {Generalized Inverses: Theory and Applications},
  author = {Ben-Israel, Adi and Greville, Thomas N. E.},
  edition = {2},
  series = {CMS Books in Mathematics},
  publisher = {Springer},
  address = {New York},
  year = {2003},
  doi = {10.1007/b97366},
  isbn = {978-0-387-00293-4},
  url = {https://doi.org/10.1007/b97366}
}

@book{horn2012matrix,
  title = {Matrix Analysis},
  author = {Horn, Roger A. and Johnson, Charles R.},
  edition = {2},
  publisher = {Cambridge University Press},
  address = {Cambridge},
  year = {2012},
  isbn = {978-0-521-83940-2},
  doi = {10.1017/CBO9781139020411},
  url = {https://doi.org/10.1017/CBO9781139020411}
}

@article{wedin1972perturbation,
  title = {Perturbation Bounds in Connection with Singular Value Decomposition},
  author = {Wedin, Per-{\AA}ke},
  journal = {BIT Numerical Mathematics},
  volume = {12},
  pages = {99--111},
  year = {1972},
  doi = {10.1007/BF01932678},
  url = {https://doi.org/10.1007/BF01932678}
}

@article{tropp2012user,
  title = {User-Friendly Tail Bounds for Sums of Random Matrices},
  author = {Tropp, Joel A.},
  journal = {Foundations of Computational Mathematics},
  volume = {12},
  pages = {389--434},
  year = {2012},
  doi = {10.1007/s10208-011-9099-z},
  url = {https://doi.org/10.1007/s10208-011-9099-z}
}

@article{bauer1960norms,
  title = {Norms and Exclusion Theorems},
  author = {Bauer, Friedrich L. and Fike, C. T.},
  journal = {Numerische Mathematik},
  volume = {2},
  pages = {137--141},
  year = {1960},
  doi = {10.1007/BF01386217},
  url = {https://doi.org/10.1007/BF01386217}
}

@book{kato1995perturbation,
  title = {Perturbation Theory for Linear Operators},
  author = {Kato, Tosio},
  series = {Classics in Mathematics},
  publisher = {Springer},
  address = {Berlin},
  year = {1995},
  note = {Reprint of the 1980 edition},
  isbn = {978-3-540-58661-6},
  doi = {10.1007/978-3-642-66282-9},
  url = {https://doi.org/10.1007/978-3-642-66282-9}
}

@book{stewart1990matrix,
  title = {Matrix Perturbation Theory},
  author = {Stewart, G. W. and Sun, Ji-guang},
  publisher = {Academic Press},
  address = {Boston},
  year = {1990},
  isbn = {978-0-12-670230-9}
}

@InProceedings{pmlr-v258-sun25d,
  title = 	 {Spectral Representation for Causal Estimation with Hidden Confounders},
  author =       {Sun, Haotian and Moulin, Antoine and Ren, Tongzheng and Gretton, Arthur and Dai, Bo},
  booktitle = 	 {Proceedings of The 28th International Conference on Artificial Intelligence and Statistics},
  pages = 	 {2719--2727},
  year = 	 {2025},
  volume = 	 {258},
  series = 	 {Proceedings of Machine Learning Research},
  month = 	 {03--05 May},
  publisher =    {PMLR},
  url = 	 {https://proceedings.mlr.press/v258/sun25d.html}
}

@article{jaeger2000observable,
  title   = {Observable Operator Models for Discrete Stochastic Time Series},
  author  = {Jaeger, Herbert},
  journal = {Neural Computation},
  volume  = {12},
  number  = {6},
  pages   = {1371--1398},
  year    = {2000},
  doi     = {10.1162/089976600300015411},
  url     = {https://doi.org/10.1162/089976600300015411}
}

\newpage
\appendix
\addcontentsline{toc}{section}{Appendices}

\section{Algorithm}
\label{app:algorithm}
\begin{algorithm}[H]
\caption{Subspace-Projected Spectral SPO}
\label{alg:spectral_spo}
\KwIn{Dataset $\mathcal{D} = \{(Z_i, X_i, T_i, Y_i)\}_{i=1}^n$, with $X_{i,1}=1$}
\KwOut{Causal MTEs $\widehat{\bm{\tau}}$, Latent Proportions $\widehat{\bm{p}}$, Feature matrix $\widehat{\mathbf{B}}$}

\For{$t \in \{0,1\}$}{
    $\widehat{\mathbf{M}}_{ZX|t} \gets \frac{1}{n_t}\sum_{i:T_i=t} Z_i X_i^\top$;\quad
    $\widehat{\mathbf{M}}_{ZXY|t} \gets \frac{1}{n_t}\sum_{i:T_i=t} Y_i Z_i X_i^\top$
}

Form stacked proxy matrix $\widehat{\mathbf{M}}_{\text{stack}} = \bigl[\widehat{\mathbf{M}}_{ZX|0}^\top \;\big|\; \widehat{\mathbf{M}}_{ZX|1}^\top\bigr]^\top$\;

Set or select $k$ via rank-selection on $\widehat{\mathbf{M}}_{\text{stack}}$; extract top-$k$ right singular vectors $\widehat{\mathbf{V}}_k \in \mathbb{R}^{d_x \times k}$\;

\For{$t \in \{0,1\}$}{
    $\widehat{\mathbf{P}}_t \gets \widehat{\mathbf{M}}_{ZX|t}\widehat{\mathbf{V}}_k$;\quad
    $\widehat{\mathbf{O}}_t \gets \widehat{\mathbf{M}}_{ZXY|t}\widehat{\mathbf{V}}_k$\;
    $\widehat{\widetilde{\mathbf{Q}}}_t \gets \widehat{\mathbf{P}}_t^\dagger\,\widehat{\mathbf{O}}_t \in \mathbb{R}^{k\times k}$
}

Form difference operator $\Delta\widehat{\widetilde{\mathbf{Q}}} \gets \widehat{\widetilde{\mathbf{Q}}}_1 - \widehat{\widetilde{\mathbf{Q}}}_0$\;

Eigendecompose $(\Delta\widehat{\widetilde{\mathbf{Q}}})^\top$; let $\widehat{\bm{\tau}}$ be eigenvalues (ordered by increasing real part) and rows of $\widehat{\mathbf{W}}$ be associated left eigenvectors; discard negligible imaginary parts\;

$\widehat{\mathbf{B}}_{\text{unscaled}}^\top \gets \widehat{\mathbf{W}}\widehat{\mathbf{V}}_k^\top$\;

Normalize each row of $\widehat{\mathbf{B}}_{\text{unscaled}}^\top$ by its first entry (anchor $= 1$) to obtain $\widehat{\mathbf{B}}^\top$\;

$\hat{\bm{\mu}}_X \gets \frac{1}{n}\sum X_i$;\quad
$\widehat{\bm{p}}_{\text{raw}} \gets \widehat{\mathbf{B}}^\dagger\hat{\bm{\mu}}_X$\;

Project $\widehat{\bm{p}}_{\text{raw}}$ onto the probability simplex $\Delta^{k-1}$ to obtain $\widehat{\bm{p}}$\;
\end{algorithm}

\section{Proofs and Auxiliary Lemmas}
\label{app:proofs}

Throughout, for each $t \in \{0,1\}$,
\begin{align}
    \mathbf{M}_{ZX|t} &= \mathbf{A}_t \mathbf{D}_{U|t} \mathbf{B}^\top, \label{eq:app_fac1_new}\\
    \mathbf{M}_{ZXY|t} &= \mathbf{A}_t \mathbf{D}_{U|t} \mathbf{D}_{Y|t} \mathbf{B}^\top. \label{eq:app_fac2_new}
\end{align}
We write
\[
\mathbf{C}_t \triangleq \mathbf{A}_t \mathbf{D}_{U|t} \in \mathbb{R}^{d_z \times k}.
\]
Under Assumption~\ref{ass:richness} and strict latent positivity, each $\mathbf{C}_t$ has full column rank $k$, and $\mathbf{B}^\top$ has full row rank $k$.

\let\oldsubsection\subsection
\renewcommand{\subsection}[1]{\oldsubsection*{#1}\addcontentsline{toc}{subsection}{#1}}

\subsection{Auxiliary Lemmas}

\begin{lemma}[Pseudo-inverse of a full-rank product]
\label{lem:pinv_product}
Let $\mathbf{C} \in \mathbb{R}^{m \times k}$ have full column rank and let $\mathbf{F} \in \mathbb{R}^{k \times n}$ have full row rank. Then $(\mathbf{C}\mathbf{F})^\dagger = \mathbf{F}^\dagger \mathbf{C}^\dagger$.
\end{lemma}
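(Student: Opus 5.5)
The plan is to verify directly that the candidate $\mathbf{G} \triangleq \mathbf{F}^\dagger \mathbf{C}^\dagger$ satisfies the four Penrose conditions for $\mathbf{CF}$. Since the Moore--Penrose inverse is unique, this identifies $(\mathbf{C}\mathbf{F})^\dagger$.

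First, I record the explicit formulas available under the rank hypotheses. Because $\mathbf{C}$ has full column rank, $\mathbf{C}^\top\mathbf{C}$ is invertible, so $\mathbf{C}^\dagger = (\mathbf{C}^\top\mathbf{C})^{-1}\mathbf{C}^\top$ and $\mathbf{C}^\dagger\mathbf{C} = \mathbf{I}_k$. Dually, because $\mathbf{F}$ has full row rank, $\mathbf{F}\mathbf{F}^\top$ is invertible, so $\mathbf{F}^\dagger = \mathbf{F}^\top(\mathbf{F}\mathbf{F}^\top)^{-1}$ and $\mathbf{F}\mathbf{F}^\dagger = \mathbf{I}_k$. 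Moreover, $\mathbf{C}\mathbf{C}^\dagger$ and $\mathbf{F}^\dagger\mathbf{F}$ are symmetric, being the orthogonal projectors onto $\range(\mathbf{C})$ and $\range(\mathbf{F}^\top)$.

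Next, I check the four conditions, in each case collapsing the middle factors with the two identities $\mathbf{C}^\dagger\mathbf{C}=\mathbf{I}_k$ and $\mathbf{F}\mathbf{F}^\dagger=\mathbf{I}_k$. The first condition is $(\mathbf{CF})\mathbf{G}(\mathbf{CF}) = \mathbf{C}(\mathbf{F}\mathbf{F}^\dagger)(\mathbf{C}^\dagger\mathbf{C})\mathbf{F} = \mathbf{CF}$. The second is $\mathbf{G}(\mathbf{CF})\mathbf{G} = \mathbf{F}^\dagger(\mathbf{C}^\dagger\mathbf{C})(\mathbf{F}\mathbf{F}^\dagger)\mathbf{C}^\dagger = \mathbf{G}$. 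For the third, $(\mathbf{CF})\mathbf{G} = \mathbf{C}(\mathbf{F}\mathbf{F}^\dagger)\mathbf{C}^\dagger = \mathbf{C}\mathbf{C}^\dagger$, which is symmetric. For the fourth, $\mathbf{G}(\mathbf{CF}) = \mathbf{F}^\dagger(\mathbf{C}^\dagger\mathbf{C})\mathbf{F} = \mathbf{F}^\dagger\mathbf{F}$, which is also symmetric. Uniqueness of the Moore--Penrose inverse then gives the claim.

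There is no real obstacle. The only point needing care is that both cancellations are used in the correct order: $\mathbf{F}\mathbf{F}^\dagger$ and $\mathbf{C}^\dagger\mathbf{C}$ are identities, while the reversed products are only projectors. This is exactly why the full-rank hypotheses are needed, since the reverse-order law fails for general products.
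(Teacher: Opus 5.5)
Your proposal is correct and matches the paper's proof: both verify the four Penrose equations for $\mathbf{F}^\dagger\mathbf{C}^\dagger$ using $\mathbf{C}^\dagger\mathbf{C}=\mathbf{I}_k$, $\mathbf{F}\mathbf{F}^\dagger=\mathbf{I}_k$, and the symmetry of the orthogonal projectors $\mathbf{C}\mathbf{C}^\dagger$ and $\mathbf{F}^\dagger\mathbf{F}$.
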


\begin{proof}
Since $\mathbf{C}$ has full column rank, $\mathbf{C}^\dagger = (\mathbf{C}^\top \mathbf{C})^{-1}\mathbf{C}^\top$ and $\mathbf{C}^\dagger \mathbf{C} = \mathbf{I}_k$. Since $\mathbf{F}$ has full row rank, $\mathbf{F}^\dagger = \mathbf{F}^\top (\mathbf{F}\mathbf{F}^\top)^{-1}$ and $\mathbf{F}\mathbf{F}^\dagger = \mathbf{I}_k$. Set $\mathbf{X} \triangleq \mathbf{F}^\dagger \mathbf{C}^\dagger$ and verify the Moore--Penrose equations for $\mathbf{X}$ relative to $\mathbf{C}\mathbf{F}$. First,
\[
(\mathbf{C}\mathbf{F})\mathbf{X}(\mathbf{C}\mathbf{F})
=
\mathbf{C}\mathbf{F}\mathbf{F}^\dagger \mathbf{C}^\dagger \mathbf{C}\mathbf{F}
=
\mathbf{C}\mathbf{I}_k \mathbf{I}_k \mathbf{F}
=
\mathbf{C}\mathbf{F}.
\]
Second, $\mathbf{X}(\mathbf{C}\mathbf{F})\mathbf{X} = \mathbf{F}^\dagger \mathbf{C}^\dagger \mathbf{C}\mathbf{F}\mathbf{F}^\dagger \mathbf{C}^\dagger = \mathbf{F}^\dagger \mathbf{I}_k \mathbf{I}_k \mathbf{C}^\dagger = \mathbf{F}^\dagger \mathbf{C}^\dagger = \mathbf{X}$. Third,
\[
\big((\mathbf{C}\mathbf{F})\mathbf{X}\big)^\top
=
\big(\mathbf{C}\mathbf{F}\mathbf{F}^\dagger \mathbf{C}^\dagger\big)^\top
=
\big(\mathbf{C}\mathbf{C}^\dagger\big)^\top
=
\mathbf{C}\mathbf{C}^\dagger
=
(\mathbf{C}\mathbf{F})\mathbf{X},
\]
because $\mathbf{C}\mathbf{C}^\dagger$ is the orthogonal projector onto $\range(\mathbf{C})$. Fourth, $\big(\mathbf{X}(\mathbf{C}\mathbf{F})\big)^\top = \big(\mathbf{F}^\dagger \mathbf{C}^\dagger \mathbf{C}\mathbf{F}\big)^\top = \big(\mathbf{F}^\dagger \mathbf{F}\big)^\top = \mathbf{F}^\dagger \mathbf{F} = \mathbf{X}(\mathbf{C}\mathbf{F})$, because $\mathbf{F}^\dagger \mathbf{F}$ is the orthogonal projector onto $\range(\mathbf{F}^\top)$ and is therefore symmetric. Thus $\mathbf{X}$ satisfies all four Moore--Penrose equations, so $(\mathbf{C}\mathbf{F})^\dagger = \mathbf{F}^\dagger \mathbf{C}^\dagger$.
\end{proof}

\begin{lemma}[Projection onto the exact row space]
\label{lem:rowspace_projection}
Let $\mathbf{M}\in\mathbb{R}^{m\times d}$ have rank $k$, and let
$\mathbf{V}\in\mathbb{R}^{d\times k}$ have orthonormal columns spanning
$\range(\mathbf M^\top)$. Define $\mathbf C\triangleq \mathbf M\mathbf V$.
Then:
\begin{enumerate}
    \item $\mathbf M=\mathbf C\mathbf V^\top$ and $\rank(\mathbf C)=k$;
    \item the nonzero singular values of $\mathbf M$ and $\mathbf M\mathbf V$
    coincide;
    \item in particular,
    \[
    \sigma_j(\mathbf M\mathbf V)=\sigma_j(\mathbf M),
    \qquad j=1,\dots,k.
    \]
\end{enumerate}
If additionally $m=k$, then $\mathbf C=\mathbf M\mathbf V$ is invertible.
\end{lemma}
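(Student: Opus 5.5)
The plan is to work entirely with the orthogonal projector $\mathbf P\triangleq\mathbf V\mathbf V^\top$ onto $\range(\mathbf M^\top)$. Since the columns of $\mathbf V$ are orthonormal, $\mathbf V^\top\mathbf V=\mathbf I_k$. Because they span $\range(\mathbf M^\top)$, every row of $\mathbf M$ lies in this subspace, so $\mathbf M\mathbf P=\mathbf M$. Then
\[
\mathbf M=\mathbf M\mathbf V\mathbf V^\top=\mathbf C\mathbf V^\top,
\]
which is the factorization in the first item.

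For the rank, we sandwich. First, $\rank(\mathbf C)\le\min(\rank\mathbf M,k)=k$, since $\mathbf C=\mathbf M\mathbf V$. Second, $k=\rank(\mathbf M)=\rank(\mathbf C\mathbf V^\top)\le\rank(\mathbf C)$. Hence $\rank(\mathbf C)=k$.

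For the singular values, compare Gram matrices. Using $\mathbf V^\top\mathbf V=\mathbf I_k$,
\[
\mathbf M\mathbf M^\top=\mathbf C\mathbf V^\top\mathbf V\mathbf C^\top=\mathbf C\mathbf C^\top .
\]
The squared singular values of any matrix $\mathbf N$ are the eigenvalues of $\mathbf N\mathbf N^\top$, padded with zeros. Therefore $\mathbf M$ and $\mathbf C=\mathbf M\mathbf V$ have the same singular values, counted with multiplicity, up to zeros. This gives the second item.

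Both matrices have rank exactly $k$, so each has exactly $k$ nonzero singular values. Matching them in decreasing order gives $\sigma_j(\mathbf M\mathbf V)=\sigma_j(\mathbf M)$ for $j=1,\dots,k$, which is the third item. A slightly cleaner alternative is to take a compact SVD $\mathbf M=\mathbf U\bm\Sigma\mathbf V_0^\top$, write $\mathbf V=\mathbf V_0\mathbf O$ with $\mathbf O$ orthogonal (both bases span the same $k$-dimensional space), and read off $\mathbf M\mathbf V=\mathbf U\bm\Sigma\mathbf O$ as an SVD directly.

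Finally, if $m=k$, then $\mathbf C$ is a $k\times k$ matrix of rank $k$, hence invertible.

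The only point requiring care is the justification of $\mathbf M\mathbf P=\mathbf M$: each row of $\mathbf M$ lies in $\range(\mathbf M^\top)$, and $\mathbf P$ fixes that subspace. Everything else is routine.
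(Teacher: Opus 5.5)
Your proof is correct and follows the paper's argument essentially step for step. Both use the projector identity $\mathbf M=\mathbf M\mathbf V\mathbf V^\top$, then the Gram-matrix identity $\mathbf M\mathbf M^\top=\mathbf C\mathbf C^\top$, then count the $k$ positive singular values. Your explicit rank sandwich for $\rank(\mathbf C)=k$ is a small addition that the paper leaves implicit in the singular-value comparison.
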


The lemma is an exact row-space compression identity that follows from standard singular-value and orthogonal-projection facts; see, e.g., \citet{horn2012matrix}.

\begin{figure}[!htbp]
    \centering
    \includegraphics[width=0.92\textwidth]{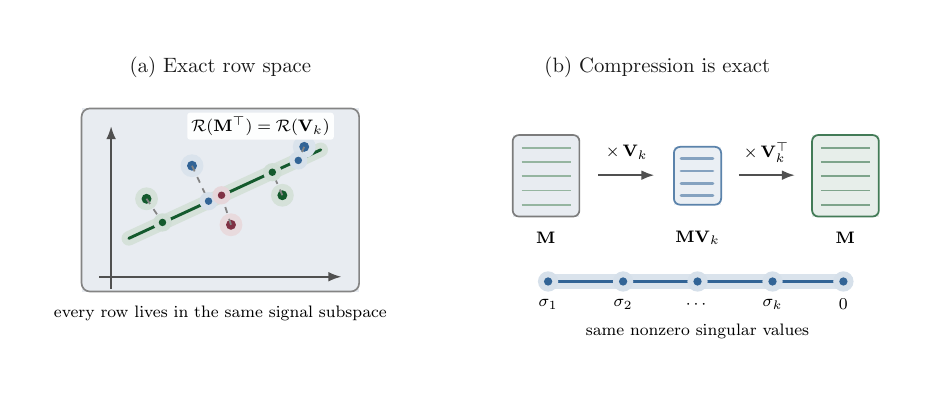}
    \caption{
    Row-space compression underlying Lemma~\ref{lem:rowspace_projection}.
    \textbf{Left:} The rows of $\mathbf M$ lie in the signal subspace
    $\range(\mathbf M^\top)=\range(\mathbf V_k)$, so projecting onto
    $\mathbf V_k$ preserves the population row space.
    \textbf{Right:} Compression is exact because
    $\mathbf M=\mathbf M\mathbf V_k\mathbf V_k^\top$; equivalently,
    $\mathbf M$ and $\mathbf M\mathbf V_k$ have the same nonzero singular values.
    }
    \label{fig:rowspace_projection}
\end{figure}

\begin{proof}
Because the columns of $\mathbf V$ form an orthonormal basis for
$\range(\mathbf M^\top)$, the matrix $\mathbf P\triangleq \mathbf V\mathbf V^\top$
is the orthogonal projector onto $\range(\mathbf M^\top)$. Hence
$\mathbf P\mathbf M^\top=\mathbf M^\top$, and therefore
\[
\mathbf M=\mathbf M\mathbf P=\mathbf M\mathbf V\mathbf V^\top.
\]
With $\mathbf C\triangleq \mathbf M\mathbf V$, this gives
\[
\mathbf M=\mathbf C\mathbf V^\top.
\]
Moreover,
\[
\mathbf M\mathbf M^\top
=
\mathbf C\mathbf V^\top\mathbf V\mathbf C^\top
=
\mathbf C\mathbf C^\top.
\]
Thus $\mathbf M\mathbf M^\top$ and $\mathbf C\mathbf C^\top$ have exactly the
same nonzero eigenvalues. Equivalently, $\mathbf M$ and
$\mathbf C=\mathbf M\mathbf V$ have exactly the same nonzero singular values.
Since $\rank(\mathbf M)=k$, there are precisely $k$ positive singular values,
so
\[
\sigma_j(\mathbf M\mathbf V)=\sigma_j(\mathbf M),
\qquad j=1,\dots,k.
\]
Finally, if $m=k$, then $\mathbf C\in\mathbb R^{k\times k}$ has rank $k$, and is
therefore invertible.
\end{proof}

\begin{lemma}[Concentration of treatment-conditional moment matrices]
\label{lem:conditional_moment_concentration}
Let $(Y_i,T_i,X_i,Z_i)_{i=1}^n$ be i.i.d., with
$X_i\in\mathbb R^{d_x}$ and $Z_i\in\mathbb R^{d_z}$. Assume
\[
\|ZX^\top\|_2\le L_{ZX},\qquad \|YZX^\top\|_2\le L_{ZXY}
\qquad\text{a.s.},
\]
and put $\pi\triangleq \min_{t\in\{0,1\}}\Pr(T=t)>0$. For $t\in\{0,1\}$, define
$n_t=\sum_{i=1}^n\mathbbm 1\{T_i=t\}$ and, when $n_t>0$,
\[
\widehat{\mathbf M}_{ZX|t}=\frac1{n_t}\sum_{i:T_i=t}Z_iX_i^\top,\qquad
\widehat{\mathbf M}_{ZXY|t}=\frac1{n_t}\sum_{i:T_i=t}Y_iZ_iX_i^\top .
\]
There exist universal constants $c,C>0$ such that, if
\[
n\pi\ge c\log\frac{8(d_z+d_x)}{\eta},
\]
then with probability at least $1-\eta$, simultaneously for $t=0,1$,
\[
n_t\ge \frac{n\pi}{2},
\]
and
\[
\begin{aligned}
\|\widehat{\mathbf M}_{ZX|t}-\mathbf M_{ZX|t}\|_2
&\le
CL_{ZX}\sqrt{\frac{\log(8(d_z+d_x)/\eta)}{n\pi}}, \\
\|\widehat{\mathbf M}_{ZXY|t}-\mathbf M_{ZXY|t}\|_2
&\le
CL_{ZXY}\sqrt{\frac{\log(8(d_z+d_x)/\eta)}{n\pi}} .
\end{aligned}
\]
\end{lemma}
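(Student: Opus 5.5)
The proof has two parts: a count-concentration step for $n_t$, and a matrix Bernstein step for the arm-conditional averages, combined by a union bound. For the counts, $n_t\sim\mathrm{Bin}(n,\Pr(T=t))$ with mean at least $n\pi$. A multiplicative Chernoff bound gives $\Pr(n_t<n\pi/2)\le\exp(-n\pi/8)$. This is at most $\eta/8$ once $n\pi\ge c\log(8(d_z+d_x)/\eta)$ for a suitable universal $c$, since $8(d_z+d_x)\ge 8$. A union bound over $t\in\{0,1\}$ then costs $\eta/4$.

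For the matrix averages, I condition on the treatment vector $(T_1,\dots,T_n)$. Given $T$, the samples with $T_i=t$ are i.i.d.\ from the law of $(Y,X,Z)\mid T=t$. The matrices $\mathbf G_i\triangleq Z_iX_i^\top-\mathbf M_{ZX|t}$ for those indices are then independent, mean zero, and $d_z\times d_x$. They satisfy $\|\mathbf G_i\|_2\le 2L_{ZX}$ almost surely, because $\|\mathbf M_{ZX|t}\|_2\le \E[\|ZX^\top\|_2\mid T=t]\le L_{ZX}$ by Jensen. The variance proxy $\max\{\|\sum\E\mathbf G_i\mathbf G_i^\top\|_2,\|\sum\E\mathbf G_i^\top\mathbf G_i\|_2\}$ is at most $4n_tL_{ZX}^2$. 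The rectangular matrix Bernstein inequality \citep{tropp2012user} then gives, with conditional probability at least $1-\eta'$,
\[
\Bigl\|\tfrac1{n_t}\textstyle\sum_{i:T_i=t}\mathbf G_i\Bigr\|_2\le C'L_{ZX}\Bigl(\sqrt{\tfrac{\log((d_z+d_x)/\eta')}{n_t}}+\tfrac{\log((d_z+d_x)/\eta')}{n_t}\Bigr).
\]
The identical argument with $Y_iZ_iX_i^\top$ and $L_{ZXY}$ handles $\widehat{\mathbf M}_{ZXY|t}$. I take $\eta'=\eta/8$ for each of the four events (two arms, two matrices). Together with the count events this gives total failure probability at most $\eta/4+4\cdot\eta/8<\eta$. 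Because each conditional bound holds for every realization of $T$ with $n_t>0$, integrating over $T$ restricted to the good count event preserves it.

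It remains to simplify the rate. On $n_t\ge n\pi/2$ we have $1/n_t\le 2/(n\pi)$. Set $\Lambda=\log(8(d_z+d_x)/\eta)\ge\log((d_z+d_x)/\eta')$. The linear term then satisfies $\Lambda/n_t\le 2\Lambda/(n\pi)\le 2\sqrt{\Lambda/(n\pi)}\cdot\sqrt{\Lambda/(n\pi)}$. The factor $\sqrt{\Lambda/(n\pi)}$ is at most $c^{-1/2}\le 1$ under the sample-size condition with $c\ge1$, so the linear term is dominated by the square-root term. This yields the claimed $CL\sqrt{\Lambda/(n\pi)}$ bounds.

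The main obstacle is the conditioning bookkeeping, namely justifying that the conditional samples are i.i.d.\ with mean $\mathbf M_{\cdot|t}$ given the random index set. The cleanest route is to condition on the full vector $(T_i)_i$ and use independence across $i$, rather than work with the random $n_t$ directly.
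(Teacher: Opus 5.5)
Your proposal is correct and follows the paper's proof essentially step for step. Both use a multiplicative Chernoff bound for $n_t$ costing $\eta/4$, then rectangular matrix Bernstein applied conditionally on the arm assignment with $\delta=\eta/8$ for each of the four matrix events, and finally absorb the linear Bernstein term using $n_t\ge n\pi/2$ and $n\pi\ge c\Lambda_\eta$. Your choice to condition on the full vector $(T_i)_i$, rather than on $(S_t,n_t)$ as the paper does, is equivalent for binary $T$.
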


\begin{proof}
Write $p_t=\Pr(T=t)$, so $p_t\ge\pi$. Since
$n_t=\sum_{i=1}^n\mathbbm 1\{T_i=t\}\sim{\rm Binomial}(n,p_t)$, the
multiplicative Chernoff bound gives
\[
\Pr\!\left(n_t<\frac{np_t}{2}\right)\le \exp\!\left(-\frac{np_t}{8}\right)
\le \exp\!\left(-\frac{n\pi}{8}\right).
\]
Hence, for
\[
\mathcal E\triangleq
\left\{n_t\ge \frac{np_t}{2}\ge\frac{n\pi}{2}\;\text{ for }t=0,1\right\},
\]
a union bound yields
\[
\Pr(\mathcal E^c)\le 2\exp\!\left(-\frac{n\pi}{8}\right)\le \frac{\eta}{4},
\]
after increasing the universal constant $c$ in the assumed lower bound on
$n\pi$.

Fix $t$ and condition on $S_t\triangleq\{i:T_i=t\}$ and $n_t=m$. Conditional on
$(S_t,n_t=m)$, the variables $\{(Y_i,X_i,Z_i):i\in S_t\}$ are i.i.d. from the
law of $(Y,X,Z)\mid T=t$. For the $ZX^\top$ moment, set
\[
\mathbf W_i^{(t)}\triangleq Z_iX_i^\top-\mathbf M_{ZX|t},\qquad i\in S_t.
\]
Then
\[
\E[\mathbf W_i^{(t)}\mid T_i=t]=0,\qquad
\widehat{\mathbf M}_{ZX|t}-\mathbf M_{ZX|t}
=\frac1m\sum_{i\in S_t}\mathbf W_i^{(t)}.
\]
Moreover,
\[
\|\mathbf W_i^{(t)}\|_2
\le \|Z_iX_i^\top\|_2+\|\mathbf M_{ZX|t}\|_2
\le L_{ZX}+\E[\|ZX^\top\|_2\mid T=t]\le 2L_{ZX},
\]
and the rectangular Bernstein variance proxies satisfy
\[
\left\|\sum_{i\in S_t}\E[\mathbf W_i^{(t)}(\mathbf W_i^{(t)})^\top\mid T_i=t]\right\|_2
\le \sum_{i\in S_t}\E[\|\mathbf W_i^{(t)}\|_2^2\mid T_i=t]\le 4mL_{ZX}^2,
\]
\[
\left\|\sum_{i\in S_t}\E[(\mathbf W_i^{(t)})^\top\mathbf W_i^{(t)}\mid T_i=t]\right\|_2
\le \sum_{i\in S_t}\E[\|\mathbf W_i^{(t)}\|_2^2\mid T_i=t]\le 4mL_{ZX}^2.
\]
Therefore rectangular matrix Bernstein, obtained by the standard self-adjoint dilation argument
\citep{tropp2012user}, implies that, for every $\delta\in(0,1)$,
with conditional probability at least $1-\delta$,
\[
\|\widehat{\mathbf M}_{ZX|t}-\mathbf M_{ZX|t}\|_2
\le
C_0L_{ZX}
\left\{
\sqrt{\frac{\log((d_z+d_x)/\delta)}{m}}
+
\frac{\log((d_z+d_x)/\delta)}{m}
\right\}.
\]
The same argument applied to
\[
\mathbf G_i^{(t)}\triangleq Y_iZ_iX_i^\top-\mathbf M_{ZXY|t},\qquad i\in S_t,
\]
uses
\[
\E[\mathbf G_i^{(t)}\mid T_i=t]=0,\qquad
\|\mathbf G_i^{(t)}\|_2
\le \|Y_iZ_iX_i^\top\|_2+\|\mathbf M_{ZXY|t}\|_2\le 2L_{ZXY},
\]
and gives, with conditional probability at least $1-\delta$,
\[
\|\widehat{\mathbf M}_{ZXY|t}-\mathbf M_{ZXY|t}\|_2
\le
C_0L_{ZXY}
\left\{
\sqrt{\frac{\log((d_z+d_x)/\delta)}{m}}
+
\frac{\log((d_z+d_x)/\delta)}{m}
\right\}.
\]
Because these conditional bounds hold for every realization of $(S_t,n_t)$,
they also hold unconditionally. Taking $\delta=\eta/8$ and union-bounding over
the two matrices and two treatment arms, with probability at least $1-\eta/2$,
simultaneously for $t=0,1$,
\[
\|\widehat{\mathbf M}_{ZX|t}-\mathbf M_{ZX|t}\|_2
\le
C_0L_{ZX}\left\{
\sqrt{\frac{\Lambda_\eta}{n_t}}+\frac{\Lambda_\eta}{n_t}
\right\},
\qquad
\|\widehat{\mathbf M}_{ZXY|t}-\mathbf M_{ZXY|t}\|_2
\le
C_0L_{ZXY}\left\{
\sqrt{\frac{\Lambda_\eta}{n_t}}+\frac{\Lambda_\eta}{n_t}
\right\},
\]
where
\[
\Lambda_\eta\triangleq \log\frac{8(d_z+d_x)}{\eta}.
\]
On the intersection with $\mathcal E$, $n_t\ge n\pi/2$, so
\[
\sqrt{\frac{\Lambda_\eta}{n_t}}+\frac{\Lambda_\eta}{n_t}
\le
\sqrt{\frac{2\Lambda_\eta}{n\pi}}+\frac{2\Lambda_\eta}{n\pi}.
\]
Since $n\pi\ge c\Lambda_\eta$, the linear Bernstein term is dominated by the
square-root term:
\[
\frac{2\Lambda_\eta}{n\pi}
\le C_1\sqrt{\frac{\Lambda_\eta}{n\pi}},
\]
after enlarging $c$ and changing only universal constants. Thus
\[
\sqrt{\frac{\Lambda_\eta}{n_t}}+\frac{\Lambda_\eta}{n_t}
\le C_2\sqrt{\frac{\Lambda_\eta}{n\pi}}.
\]
Substituting this into the preceding simultaneous Bernstein bounds and
combining with $\Pr(\mathcal E^c)\le\eta/4$ yields the claimed inequalities
with total failure probability at most $\eta$.
\end{proof}

\begin{lemma}[Concentration of the target-proxy mean]
\label{lem:mean_concentration}
Let \(X_1,\ldots,X_n\in\R^{d_x}\) be i.i.d. with
\(\|X_i\|_2\le L_X\) almost surely. Let
\[
\widehat{\bm\mu}_X\triangleq \frac1n\sum_{i=1}^n X_i,
\qquad
\bm\mu_X\triangleq \E[X].
\]
There exist universal constants \(c,C_\mu>0\) such that, if
\[
n\ge c\log\frac{4d_x}{\eta},
\]
then, with probability at least \(1-\eta\),
\[
\|\widehat{\bm\mu}_X-\bm\mu_X\|_2
\le
C_\mu L_X
\sqrt{\frac{\log(4d_x/\eta)}{n}}.
\]
\end{lemma}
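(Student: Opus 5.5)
The plan is to treat $\widehat{\bm\mu}_X-\bm\mu_X$ as a normalized sum of i.i.d. centered random vectors. We will embed these vectors as $d_x\times 1$ rectangular matrices and apply the same rectangular matrix Bernstein inequality used in the proof of Lemma~\ref{lem:conditional_moment_concentration}. Set $\mathbf w_i\triangleq X_i-\bm\mu_X\in\R^{d_x}$. Then $\E[\mathbf w_i]=0$, and $\widehat{\bm\mu}_X-\bm\mu_X=\frac1n\sum_i\mathbf w_i$. The bound on individual terms follows from the triangle inequality and Jensen:
\[
\|\mathbf w_i\|_2\le \|X_i\|_2+\|\E X\|_2\le 2L_X .
\]
For a column vector, the spectral norm coincides with the Euclidean norm, so a bound on $\|\sum_i\mathbf w_i\|_2$ as a matrix is exactly the target.

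Next we bound the two Bernstein variance proxies. Since $\E[\mathbf w_i\mathbf w_i^\top]\preceq\E\|\mathbf w_i\|_2^2\,\mathbf I$ and $\E[\mathbf w_i^\top\mathbf w_i]=\E\|\mathbf w_i\|_2^2$, we get
\[
\Bigl\|\sum_i\E[\mathbf w_i\mathbf w_i^\top]\Bigr\|_2\le 4nL_X^2,
\qquad
\Bigl|\sum_i\E[\mathbf w_i^\top\mathbf w_i]\Bigr|\le 4nL_X^2 .
\]
Rectangular matrix Bernstein, via the self-adjoint dilation of dimension $d_x+1$ \citep{tropp2012user}, then gives the following with probability at least $1-\delta$:
\[
\Bigl\|\frac1n\sum_i\mathbf w_i\Bigr\|_2\le C_0L_X\Bigl\{\sqrt{\tfrac{\log((d_x+1)/\delta)}{n}}+\tfrac{\log((d_x+1)/\delta)}{n}\Bigr\}.
\]

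Finally, we choose $\delta=\eta$ and use $d_x+1\le 2d_x\le 4d_x$, so that the logarithm is at most $\Lambda\triangleq\log(4d_x/\eta)$. The assumption $n\ge c\Lambda$ gives $\Lambda/n\le c^{-1/2}\sqrt{\Lambda/n}$, so the linear term is absorbed into the square-root term, and we obtain the stated bound with $C_\mu=C_0(1+c^{-1/2})$. As an alternative route, a vector Hoeffding/McDiarmid argument also works: the map $(X_1,\dots,X_n)\mapsto\|\widehat{\bm\mu}_X-\bm\mu_X\|_2$ has bounded differences $2L_X/n$, and $\E\|\widehat{\bm\mu}_X-\bm\mu_X\|_2\le 2L_X/\sqrt n$. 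That route would not even need the sample-size condition.

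No step is a genuine obstacle here. The only care needed is the bookkeeping of the dimension factor inside the logarithm and the absorption of the linear Bernstein term using $n\ge c\log(4d_x/\eta)$, both of which mirror the proof of Lemma~\ref{lem:conditional_moment_concentration}.
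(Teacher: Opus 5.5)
Your proposal is correct and follows the paper's proof: center the samples, view each $X_i-\bm\mu_X$ as a $d_x\times 1$ matrix, apply rectangular matrix Bernstein, and absorb the linear term using $n\ge c\log(4d_x/\eta)$. You spell out the variance proxies and the $d_x+1\le 4d_x$ dimension bookkeeping, which the paper leaves implicit. Your McDiarmid alternative is also valid and removes the sample-size condition, because $\log(4d_x/\eta)\ge\log 4>1$ dominates both the constant term and $\log(1/\eta)$.
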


\begin{proof}
Set \(\mathbf v_i\triangleq X_i-\bm\mu_X\). Then
\(\E[\mathbf v_i]=0\) and
\[
\|\mathbf v_i\|_2
\le
\|X_i\|_2+\|\bm\mu_X\|_2
\le
2L_X.
\]
View \(\mathbf v_i\) as a \(d_x\times 1\) random matrix. The rectangular matrix Bernstein inequality applied to
\(\sum_{i=1}^n \mathbf v_i\) gives, for a universal constant \(C>0\),
\[
\left\|
\sum_{i=1}^n \mathbf v_i
\right\|_2
\le
C L_X
\left\{
\sqrt{n\log(4d_x/\eta)}
+
\log(4d_x/\eta)
\right\}
\]
with probability at least \(1-\eta\). Dividing by \(n\) yields
\[
\|\widehat{\bm\mu}_X-\bm\mu_X\|_2
\le
C L_X
\left\{
\sqrt{\frac{\log(4d_x/\eta)}{n}}
+
\frac{\log(4d_x/\eta)}{n}
\right\}.
\]
When \(n\ge c\log(4d_x/\eta)\), the linear term is dominated by the square-root term after increasing the universal constant. This proves the claim.
\end{proof}

\begin{lemma}[Local Lipschitzness of anchor normalization]
\label{lem:anchor_normalization}
Let \(\mathbf y\in\R^d\) satisfy \(\|\mathbf y\|_2=1\) and
\[
a\triangleq |y_1|>0.
\]
Let \(\mathbf x\in\R^d\) satisfy \(\|\mathbf x\|_2=1\) and
\[
\|\mathbf x-\mathbf y\|_2\le \frac{a}{2}.
\]
Then \(x_1\neq0\), and
\[
\left\|
\frac{\mathbf x}{x_1}
-
\frac{\mathbf y}{y_1}
\right\|_2
\le
4a^{-2}\|\mathbf x-\mathbf y\|_2.
\]
\end{lemma}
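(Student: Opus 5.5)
The plan is a direct two-step estimate: first show the first coordinate of \(\mathbf x\) cannot vanish, then split the difference of the normalized vectors into a "numerator" term and a "denominator" term and bound each.

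\textbf{Step 1: nonvanishing anchor.} Since \(|x_1-y_1|\le\|\mathbf x-\mathbf y\|_2\le a/2\), we get \(|x_1|\ge |y_1|-|x_1-y_1|\ge a/2>0\). This gives \(x_1\neq 0\) and the lower bound \(|x_1|\ge a/2\) that controls every denominator below.

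\textbf{Step 2: triangle-inequality split.} Write
\[
\frac{\mathbf x}{x_1}-\frac{\mathbf y}{y_1}
=\frac{\mathbf x-\mathbf y}{x_1}+\mathbf y\Bigl(\frac{1}{x_1}-\frac{1}{y_1}\Bigr)
=\frac{\mathbf x-\mathbf y}{x_1}+\mathbf y\,\frac{y_1-x_1}{x_1y_1}.
\]
\emph{First term:} its norm is at most \(\|\mathbf x-\mathbf y\|_2/|x_1|\le 2a^{-1}\|\mathbf x-\mathbf y\|_2\).
\emph{Second term:} using \(\|\mathbf y\|_2=1\), \(|y_1-x_1|\le\|\mathbf x-\mathbf y\|_2\), and \(|x_1y_1|\ge (a/2)\,a\), its norm is at most \(2a^{-2}\|\mathbf x-\mathbf y\|_2\).

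\textbf{Step 3: combine.} Since \(a=|y_1|\le\|\mathbf y\|_2=1\), we have \(a^{-1}\le a^{-2}\). Hence the total is at most \(2a^{-1}+2a^{-2}\le 4a^{-2}\) times \(\|\mathbf x-\mathbf y\|_2\), which is the claimed constant.

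There is no real obstacle. The only point needing care is Step 1, obtaining \(|x_1|\ge a/2\) from the hypothesis \(\|\mathbf x-\mathbf y\|_2\le a/2\), since that lower bound drives both denominators. The unit-norm assumption on \(\mathbf x\) is not actually needed; only \(\|\mathbf y\|_2=1\) is used.
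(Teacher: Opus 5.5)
Your proof is correct and follows the paper's argument essentially line by line: the lower bound $|x_1|\ge a/2$, the decomposition $\frac{\mathbf x-\mathbf y}{x_1}+\mathbf y\bigl(\frac{1}{x_1}-\frac{1}{y_1}\bigr)$, and the use of $a\le\|\mathbf y\|_2=1$ to merge $2a^{-1}+2a^{-2}$ into $4a^{-2}$. Your remark that the hypothesis $\|\mathbf x\|_2=1$ is never used is also accurate for the paper's proof.
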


\begin{proof}
Let \(\varepsilon\triangleq\|\mathbf x-\mathbf y\|_2\). Since
\[
|x_1-y_1|\le \varepsilon\le a/2,
\]
we have
\[
|x_1|\ge |y_1|-|x_1-y_1|\ge a/2,
\]
so \(x_1\neq0\). Now
\[
\frac{\mathbf x}{x_1}-\frac{\mathbf y}{y_1}
=
\frac{\mathbf x-\mathbf y}{x_1}
+
\mathbf y\left(\frac1{x_1}-\frac1{y_1}\right).
\]
Therefore, using \(\|\mathbf y\|_2=1\),
\[
\begin{aligned}
\left\|
\frac{\mathbf x}{x_1}-\frac{\mathbf y}{y_1}
\right\|_2
&\le
\frac{\|\mathbf x-\mathbf y\|_2}{|x_1|}
+
\frac{|x_1-y_1|}{|x_1||y_1|}  \\
&\le
\frac{2\varepsilon}{a}
+
\frac{2\varepsilon}{a^2}.
\end{aligned}
\]
Since \(a\le \|\mathbf y\|_2=1\), we have \(a^{-1}\le a^{-2}\), and hence
\[
\frac{2\varepsilon}{a}
+
\frac{2\varepsilon}{a^2}
\le
4a^{-2}\varepsilon.
\]
This proves the claim.
\end{proof}

\begin{lemma}[Nonexpansiveness of simplex projection]
\label{lem:simplex_projection}
Let \(\Delta^{k-1}\subset\R^k\) be the probability simplex, and let
\(\Pi_{\Delta^{k-1}}\) denote Euclidean projection onto it. Then, for all
\(\mathbf x,\mathbf y\in\R^k\),
\[
\left\|
\Pi_{\Delta^{k-1}}(\mathbf x)
-
\Pi_{\Delta^{k-1}}(\mathbf y)
\right\|_2
\le
\|\mathbf x-\mathbf y\|_2.
\]
In particular, if \(\bm p\in\Delta^{k-1}\), then
\[
\left\|
\Pi_{\Delta^{k-1}}(\mathbf x)-\bm p
\right\|_2
\le
\|\mathbf x-\bm p\|_2.
\]
\end{lemma}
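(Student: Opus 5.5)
The plan is to deduce nonexpansiveness from the variational characterization of Euclidean projection onto a nonempty closed convex set. The probability simplex $\Delta^{k-1}=\{\mathbf p\in\R^k:\mathbf p\ge 0,\ \mathbf 1^\top\mathbf p=1\}$ is nonempty, closed and convex. Hence for every $\mathbf x\in\R^k$ the projection $\Pi_{\Delta^{k-1}}(\mathbf x)$ exists and is unique, because it minimizes the strictly convex coercive function $\mathbf p\mapsto\|\mathbf x-\mathbf p\|_2^2$ over a closed convex set.

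First I establish the obtuse-angle (first-order optimality) condition. Write $\mathbf p_x\triangleq\Pi_{\Delta^{k-1}}(\mathbf x)$. For every $\mathbf q\in\Delta^{k-1}$ we have
\[
\langle \mathbf x-\mathbf p_x,\ \mathbf q-\mathbf p_x\rangle\le 0.
\]
To prove it, consider $\mathbf p_x+s(\mathbf q-\mathbf p_x)$, which lies in $\Delta^{k-1}$ for $s\in[0,1]$ by convexity. Differentiating $\|\mathbf x-\mathbf p_x-s(\mathbf q-\mathbf p_x)\|_2^2$ at $s=0^+$ and using minimality at $s=0$ gives the inequality.

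Next, take $\mathbf p_y\triangleq\Pi_{\Delta^{k-1}}(\mathbf y)$. Apply the condition for $\mathbf x$ with $\mathbf q=\mathbf p_y$, and for $\mathbf y$ with $\mathbf q=\mathbf p_x$. Adding the two inequalities yields
\[
\|\mathbf p_x-\mathbf p_y\|_2^2\le\langle \mathbf x-\mathbf y,\ \mathbf p_x-\mathbf p_y\rangle\le \|\mathbf x-\mathbf y\|_2\,\|\mathbf p_x-\mathbf p_y\|_2,
\]
where the last step is Cauchy--Schwarz. If $\mathbf p_x=\mathbf p_y$ the claim is trivial. Otherwise dividing by $\|\mathbf p_x-\mathbf p_y\|_2$ gives the first assertion, which is in fact firm nonexpansiveness.

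The particular case follows by setting $\mathbf y=\bm p\in\Delta^{k-1}$ and noting that $\Pi_{\Delta^{k-1}}(\bm p)=\bm p$. There is no substantive obstacle. The only point needing care is the derivation of the variational inequality, namely the one-sided derivative argument, and it is standard.
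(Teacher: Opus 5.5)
Your proposal is correct and follows essentially the same route as the paper's proof: the variational inequality for Euclidean projection, adding the two instances with $\mathbf q=\mathbf p_y$ and $\mathbf q=\mathbf p_x$, Cauchy--Schwarz, and then setting $\mathbf y=\bm p$ with $\Pi_{\Delta^{k-1}}(\bm p)=\bm p$. Your one-sided-derivative derivation of the variational inequality and your remark on existence and uniqueness of the projection are details the paper takes as standard.
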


\begin{proof}
Euclidean projection onto a closed convex set is firmly nonexpansive. For completeness, let
\(\mathbf u=\Pi_{\Delta^{k-1}}(\mathbf x)\) and
\(\mathbf v=\Pi_{\Delta^{k-1}}(\mathbf y)\). The variational inequality for Euclidean projection gives
\[
(\mathbf x-\mathbf u)^\top(\mathbf v-\mathbf u)\le0,
\qquad
(\mathbf y-\mathbf v)^\top(\mathbf u-\mathbf v)\le0.
\]
Adding the two inequalities yields
\[
(\mathbf x-\mathbf y)^\top(\mathbf u-\mathbf v)
\ge
\|\mathbf u-\mathbf v\|_2^2.
\]
By Cauchy's inequality,
\[
\|\mathbf u-\mathbf v\|_2^2
\le
\|\mathbf x-\mathbf y\|_2\|\mathbf u-\mathbf v\|_2,
\]
and the claim follows. Taking \(\mathbf y=\bm p\in\Delta^{k-1}\) gives
\(\Pi_{\Delta^{k-1}}(\bm p)=\bm p\), proving the final statement.
\end{proof}

\begin{lemma}[Pseudo-inverse perturbation for full-column-rank matrices]
\label{lem:pinv_perturb}
Let $\mathbf P\in\mathbb R^{m\times k}$ have full column rank and write
\[
\widehat{\mathbf P}=\mathbf P+\Delta\mathbf P,\qquad
\sigma_{\min}(\mathbf P)=\sigma>0.
\]
If $\|\Delta\mathbf P\|_2\le \sigma/2$, then $\widehat{\mathbf P}$ has full
column rank and
\[
\|\widehat{\mathbf P}^{\dagger}-\mathbf P^\dagger\|_2
\le
\frac{6}{\sigma^2}\|\Delta\mathbf P\|_2 .
\]
Consequently,
\[
\|\widehat{\mathbf P}^{\dagger}-\mathbf P^\dagger\|_2
=
\mathcal O\!\left(\frac{\|\Delta\mathbf P\|_2}{\sigma^2}\right).
\]
\end{lemma}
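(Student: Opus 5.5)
The plan is to show that $\widehat{\mathbf P}$ has full column rank, and then to bound $\widehat{\mathbf P}^\dagger-\mathbf P^\dagger$ through an explicit normal-equations representation.

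\textbf{Step 1 (rank and smallest singular value).} By Weyl's inequality for singular values,
\[
\sigma_k(\widehat{\mathbf P})\ge \sigma-\|\Delta\mathbf P\|_2\ge \sigma/2>0,
\]
so $\widehat{\mathbf P}$ has full column rank. Consequently $\|\widehat{\mathbf P}^\dagger\|_2\le 2/\sigma$ and $\|\mathbf P^\dagger\|_2= 1/\sigma$.

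\textbf{Step 2 (decomposition of the difference).} For full-column-rank matrices we have $\mathbf P^\dagger\mathbf P=\mathbf I_k$ and $\widehat{\mathbf P}^\dagger\widehat{\mathbf P}=\mathbf I_k$. We also use that $\mathbf I-\mathbf P\mathbf P^\dagger$ is the orthogonal projector onto $\range(\mathbf P)^\perp$. Inserting these identities gives
\[
\widehat{\mathbf P}^\dagger-\mathbf P^\dagger
=\widehat{\mathbf P}^\dagger(\mathbf I-\mathbf P\mathbf P^\dagger)+\widehat{\mathbf P}^\dagger\mathbf P\mathbf P^\dagger-\mathbf P^\dagger .
\]
The last two terms combine. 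Since $\widehat{\mathbf P}^\dagger\mathbf P=\widehat{\mathbf P}^\dagger(\widehat{\mathbf P}-\Delta\mathbf P)=\mathbf I_k-\widehat{\mathbf P}^\dagger\Delta\mathbf P$, they equal $-\widehat{\mathbf P}^\dagger\Delta\mathbf P\,\mathbf P^\dagger$.

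For the first term, note that $\widehat{\mathbf P}^\dagger=\widehat{\mathbf P}^\dagger\widehat{\mathbf P}\widehat{\mathbf P}^\dagger=(\widehat{\mathbf P}^\top\widehat{\mathbf P})^{-1}\widehat{\mathbf P}^\top$. Also $\widehat{\mathbf P}^\top(\mathbf I-\mathbf P\mathbf P^\dagger)=(\Delta\mathbf P)^\top(\mathbf I-\mathbf P\mathbf P^\dagger)$, because $\mathbf P^\top(\mathbf I-\mathbf P\mathbf P^\dagger)=0$. Hence the first term equals
\[
(\widehat{\mathbf P}^\top\widehat{\mathbf P})^{-1}(\Delta\mathbf P)^\top(\mathbf I-\mathbf P\mathbf P^\dagger).
\]

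\textbf{Step 3 (norm bounds).} From Step 1, $\|(\widehat{\mathbf P}^\top\widehat{\mathbf P})^{-1}\|_2\le 4/\sigma^2$, and the projector has norm at most $1$. The first term is therefore at most $4\|\Delta\mathbf P\|_2/\sigma^2$. The second term is at most
\[
\|\widehat{\mathbf P}^\dagger\|_2\,\|\Delta\mathbf P\|_2\,\|\mathbf P^\dagger\|_2\le 2\|\Delta\mathbf P\|_2/\sigma^2 .
\]
Summing the two gives the constant $6$. The big-$\mathcal O$ statement is then immediate.

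\textbf{Main obstacle.} The only delicate step is arranging the decomposition in Step 2 so that every factor is controlled by $\sigma$ and not by $\|\mathbf P\|_2$. This is why the first term is routed through the normal-equations form $(\widehat{\mathbf P}^\top\widehat{\mathbf P})^{-1}\widehat{\mathbf P}^\top$: it lets $\widehat{\mathbf P}^\top$ be replaced by $\Delta\mathbf P^\top$ against the orthogonal complement projector. A cruder expansion would introduce a factor of $\|\mathbf P\|_2$, which this route avoids.
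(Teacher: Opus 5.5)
Your proof is correct and is essentially the paper's argument: since $(\widehat{\mathbf P}^\top\widehat{\mathbf P})^{-1}=\widehat{\mathbf P}^\dagger(\widehat{\mathbf P}^\dagger)^\top$, your decomposition is exactly the paper's identity $\widehat{\mathbf P}^\dagger-\mathbf P^\dagger=-\widehat{\mathbf P}^\dagger\Delta\mathbf P\,\mathbf P^\dagger+\widehat{\mathbf P}^\dagger(\widehat{\mathbf P}^\dagger)^\top(\Delta\mathbf P)^\top(\mathbf I-\mathbf P\mathbf P^\dagger)$, with the same Weyl step and the same $2+4$ split of the constant. The only difference is presentational: you derive the identity by inserting $\mathbf I=\mathbf P\mathbf P^\dagger+(\mathbf I-\mathbf P\mathbf P^\dagger)$, whereas the paper states it and verifies it by expansion.
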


\begin{proof}
By Weyl's inequality,
\[
\sigma_{\min}(\widehat{\mathbf P})
\ge
\sigma_{\min}(\mathbf P)-\|\Delta\mathbf P\|_2
\ge
\sigma/2,
\]
so $\widehat{\mathbf P}$ has full column rank and
\[
\|\mathbf P^\dagger\|_2=\sigma^{-1},\qquad
\|\widehat{\mathbf P}^\dagger\|_2
=\sigma_{\min}(\widehat{\mathbf P})^{-1}
\le 2\sigma^{-1}.
\]
For two matrices of the same full column rank, with
$\widehat{\mathbf P}=\mathbf P+\Delta\mathbf P$, the standard perturbation
identity for the Moore--Penrose inverse \citep{benisrael2003generalized,stewart1990matrix} gives
\[
\widehat{\mathbf P}^{\dagger}-\mathbf P^\dagger
=
-\widehat{\mathbf P}^{\dagger}\Delta\mathbf P\,\mathbf P^\dagger
+
\widehat{\mathbf P}^{\dagger}(\widehat{\mathbf P}^{\dagger})^\top
(\Delta\mathbf P)^\top
(\mathbf I-\mathbf P\mathbf P^\dagger).
\]
For completeness, this identity follows by expanding the right-hand side:
\[
\begin{aligned}
&-\widehat{\mathbf P}^{\dagger}(\widehat{\mathbf P}-\mathbf P)\mathbf P^\dagger
+\widehat{\mathbf P}^{\dagger}(\widehat{\mathbf P}^{\dagger})^\top
(\widehat{\mathbf P}^\top-\mathbf P^\top)(\mathbf I-\mathbf P\mathbf P^\dagger)  \\
&\quad
=
-\widehat{\mathbf P}^{\dagger}\widehat{\mathbf P}\mathbf P^\dagger
+\widehat{\mathbf P}^{\dagger}\mathbf P\mathbf P^\dagger
+\widehat{\mathbf P}^{\dagger}(\widehat{\mathbf P}^{\dagger})^\top
\widehat{\mathbf P}^{\top}(\mathbf I-\mathbf P\mathbf P^\dagger)
-\widehat{\mathbf P}^{\dagger}(\widehat{\mathbf P}^{\dagger})^\top
\mathbf P^\top(\mathbf I-\mathbf P\mathbf P^\dagger)  \\
&\quad
=
-\mathbf P^\dagger
+\widehat{\mathbf P}^{\dagger}\mathbf P\mathbf P^\dagger
+\widehat{\mathbf P}^{\dagger}(\mathbf I-\mathbf P\mathbf P^\dagger)
-0
=
\widehat{\mathbf P}^{\dagger}-\mathbf P^\dagger,
\end{aligned}
\]
where we used $\widehat{\mathbf P}^{\dagger}\widehat{\mathbf P}=\mathbf I_k$,
$\widehat{\mathbf P}^{\dagger}(\widehat{\mathbf P}^{\dagger})^\top
\widehat{\mathbf P}^\top=\widehat{\mathbf P}^{\dagger}$, and
$\mathbf P^\top(\mathbf I-\mathbf P\mathbf P^\dagger)=0$. Hence,
since $\mathbf P\mathbf P^\dagger$ is an orthogonal projector,
\[
\begin{aligned}
\|\widehat{\mathbf P}^{\dagger}-\mathbf P^\dagger\|_2
&\le
\|\widehat{\mathbf P}^{\dagger}\|_2
\|\Delta\mathbf P\|_2
\|\mathbf P^\dagger\|_2
+
\|\widehat{\mathbf P}^{\dagger}\|_2^2
\|\Delta\mathbf P\|_2
\|\mathbf I-\mathbf P\mathbf P^\dagger\|_2  \\
&\le
\frac{2}{\sigma^2}\|\Delta\mathbf P\|_2
+
\frac{4}{\sigma^2}\|\Delta\mathbf P\|_2
=
\frac{6}{\sigma^2}\|\Delta\mathbf P\|_2 .
\end{aligned}
\]
\end{proof}

\begin{lemma}[Eigenvalue localization for a diagonal matrix plus perturbation]
\label{lem:diag_localization}
Let $\mathbf\Lambda=\diag(\lambda_1,\ldots,\lambda_k)$ and let
$\mathbf F\in\mathbb R^{k\times k}$. Then every
$\mu\in\spec(\mathbf\Lambda+\mathbf F)$ satisfies
\[
\min_{1\le j\le k}|\mu-\lambda_j|\le \|\mathbf F\|_2 .
\]
If $\lambda_1,\ldots,\lambda_k$ are pairwise distinct and, for a fixed $u$,
\[
2\|\mathbf F\|_2<\delta_u,\qquad
\delta_u\triangleq\min_{v\neq u}|\lambda_u-\lambda_v|,
\]
then $\mathbf\Lambda+\mathbf F$ has exactly one eigenvalue
$\widehat\lambda_u$ in
\[
D_u\triangleq\{z\in\mathbb C:|z-\lambda_u|<\delta_u/2\},
\]
and
\[
|\widehat\lambda_u-\lambda_u|\le \|\mathbf F\|_2 .
\]
\end{lemma}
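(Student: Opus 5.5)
The plan is to prove the two claims of Lemma~\ref{lem:diag_localization} separately. For the first, the Bauer--Fike argument specialises cleanly to a diagonal $\mathbf\Lambda$. The second follows from a continuity (homotopy) argument on the eigenvalues of $\mathbf\Lambda+s\mathbf F$ for $s\in[0,1]$.

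\emph{Step 1 (global localization).} Let $\mu\in\spec(\mathbf\Lambda+\mathbf F)$. If $\mu=\lambda_j$ for some $j$ there is nothing to prove. Otherwise $\mu\mathbf I-\mathbf\Lambda$ is invertible. Write
\[
\mu\mathbf I-\mathbf\Lambda-\mathbf F=(\mu\mathbf I-\mathbf\Lambda)\bigl(\mathbf I-(\mu\mathbf I-\mathbf\Lambda)^{-1}\mathbf F\bigr).
\]
Since $\mu\mathbf I-\mathbf\Lambda-\mathbf F$ is singular, $1\le\|(\mu\mathbf I-\mathbf\Lambda)^{-1}\mathbf F\|_2\le \|\mathbf F\|_2/\min_j|\mu-\lambda_j|$. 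Here we use that the inverse of a diagonal matrix has spectral norm equal to the reciprocal of its smallest modulus entry. This gives $\min_j|\mu-\lambda_j|\le\|\mathbf F\|_2$. The bound applies verbatim to $\mathbf\Lambda+s\mathbf F$ with $\|s\mathbf F\|_2\le\|\mathbf F\|_2$.

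\emph{Step 2 (exactly one eigenvalue in $D_u$).} Set $\varepsilon=\|\mathbf F\|_2<\delta_u/2$. By Step 1, for every $s\in[0,1]$ the spectrum of $\mathbf\Lambda+s\mathbf F$ lies in $\bigcup_j \bar B(\lambda_j,\varepsilon)$.

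First, the closed disc $\bar B(\lambda_u,\varepsilon)$ is disjoint from $\bar B(\lambda_v,\varepsilon)$ for every $v\ne u$, because $|\lambda_u-\lambda_v|\ge\delta_u>2\varepsilon$. Second, the annulus $\varepsilon<|z-\lambda_u|<\delta_u/2$ contains no eigenvalue at any $s$. Indeed, such a $z$ is outside $\bar B(\lambda_u,\varepsilon)$, and it is outside $\bar B(\lambda_v,\varepsilon)$ because $|z-\lambda_v|>\delta_u-\delta_u/2=\delta_u/2>\varepsilon$.

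Now take the circle $\Gamma=\{|z-\lambda_u|=r\}$ with $\varepsilon<r<\delta_u/2$. No eigenvalue crosses $\Gamma$ for any $s$. The count of eigenvalues inside $\Gamma$ is therefore constant in $s$, either by continuity of the roots of the characteristic polynomial or by the argument principle, since $\frac{1}{2\pi i}\oint_\Gamma \frac{p_s'(z)}{p_s(z)}\,dz$ is continuous and integer valued. At $s=0$ this count is exactly one, because the $\lambda_j$ are distinct and only $\lambda_u$ lies inside. Hence $\mathbf\Lambda+\mathbf F$ has exactly one eigenvalue $\widehat\lambda_u$ with $|z-\lambda_u|<r$, and by the annulus observation this is the only eigenvalue in all of $D_u$.

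\emph{Step 3 (the bound).} By Step 1, $\widehat\lambda_u$ lies in some $\bar B(\lambda_j,\varepsilon)$. For $j\ne u$ that disc does not meet $D_u$, since $|\widehat\lambda_u-\lambda_j|>\delta_u/2>\varepsilon$. So $j=u$, which gives $|\widehat\lambda_u-\lambda_u|\le\|\mathbf F\|_2$.

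The main obstacle is the counting step. It needs a rigorous continuity statement for eigenvalues, which I would discharge with the argument principle applied to the polynomial family $p_s(z)=\det(z\mathbf I-\mathbf\Lambda-s\mathbf F)$. This family is jointly continuous in $(s,z)$ and nonvanishing on $\Gamma$, so the contour integral is a continuous integer-valued function of $s$ and hence constant.
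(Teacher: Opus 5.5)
Your proof is correct and follows the paper's route. You use the diagonal Bauer--Fike bound, then a homotopy $\mathbf\Lambda+s\mathbf F$ with a constant integer-valued contour count, then the triangle-inequality step that pins $\widehat\lambda_u$ to $\lambda_u$; your argument-principle integral of $p_s'/p_s$ is exactly the paper's Riesz count $\operatorname{tr}\bigl((z\mathbf I-\mathbf A(\alpha))^{-1}\bigr)$. The only cosmetic difference is that you count inside a circle of radius $r\in(\|\mathbf F\|_2,\delta_u/2)$ and add an annulus argument, whereas the paper takes the circle $|z-\lambda_u|=\delta_u/2$ directly, which Step 1 already shows avoids the spectrum.
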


The first part is the diagonal special case of the Bauer--Fike eigenvalue perturbation theorem \citep{bauer1960norms}; the local counting argument is inspired by the Riesz-projection continuity argument from analytic perturbation theory \citep{kato1995perturbation}.

\begin{proof}
Let $\mu\in\spec(\mathbf\Lambda+\mathbf F)$ and suppose, toward contradiction,
that $\min_j|\mu-\lambda_j|>\|\mathbf F\|_2$. Then
$\mathbf\Lambda-\mu\mathbf I$ is invertible and
\[
\|(\mathbf\Lambda-\mu\mathbf I)^{-1}\mathbf F\|_2
\le
\frac{\|\mathbf F\|_2}{\min_j|\mu-\lambda_j|}<1.
\]
Hence $\mathbf I+(\mathbf\Lambda-\mu\mathbf I)^{-1}\mathbf F$ is invertible,
and therefore
\[
\mathbf\Lambda+\mathbf F-\mu\mathbf I
=
(\mathbf\Lambda-\mu\mathbf I)
\bigl[\mathbf I+(\mathbf\Lambda-\mu\mathbf I)^{-1}\mathbf F\bigr]
\]
is invertible, contradicting $\mu\in\spec(\mathbf\Lambda+\mathbf F)$. Thus
\[
\min_j|\mu-\lambda_j|\le \|\mathbf F\|_2 . \tag{1}
\]

For the local assertion, fix $u$ and define
\[
\mathbf A(\alpha)\triangleq\mathbf\Lambda+\alpha\mathbf F,\qquad
\Gamma_u\triangleq\{z\in\mathbb C:|z-\lambda_u|=\delta_u/2\},
\qquad
\alpha\in[0,1].
\]
For $z\in\Gamma_u$,
\[
\min_j|z-\lambda_j|
\ge
\min\left\{|z-\lambda_u|,\min_{v\neq u}\bigl(|\lambda_v-\lambda_u|-|z-\lambda_u|\bigr)\right\}
\ge
\delta_u/2.
\]
Since $\|\alpha\mathbf F\|_2\le\|\mathbf F\|_2<\delta_u/2$, applying the
resolvent argument above to $\mathbf\Lambda+\alpha\mathbf F$ gives
\[
\Gamma_u\cap\spec(\mathbf A(\alpha))=\varnothing
\qquad(\alpha\in[0,1]).
\]
Consequently the Riesz spectral count
\[
N(\alpha)
\triangleq
\frac{1}{2\pi i}\int_{\Gamma_u}
\operatorname{tr}\!\left((z\mathbf I-\mathbf A(\alpha))^{-1}\right)\,dz
\]
is well-defined, continuous in $\alpha$, and integer-valued; hence it is
constant. Since $N(0)=1$, because only $\lambda_u$ lies inside $\Gamma_u$, we
obtain $N(1)=1$. Thus $\mathbf\Lambda+\mathbf F$ has exactly one eigenvalue in
$D_u$; denote it by $\widehat\lambda_u$.

It remains only to sharpen its location. By (1),
$\min_j|\widehat\lambda_u-\lambda_j|\le\|\mathbf F\|_2$. If a minimizer were
$v\neq u$, then, using $\widehat\lambda_u\in D_u$,
\[
\delta_u
\le
|\lambda_v-\lambda_u|
\le
|\lambda_v-\widehat\lambda_u|+|\widehat\lambda_u-\lambda_u|
<
\|\mathbf F\|_2+\delta_u/2
<
\delta_u,
\]
a contradiction. Hence the minimizing index is $u$, and
$|\widehat\lambda_u-\lambda_u|\le\|\mathbf F\|_2$.
\end{proof}

\begin{lemma}[Left eigenvector perturbation for a simple diagonal eigenvalue]
\label{lem:left_eig_diag}
Let $\mathbf\Lambda=\diag(\lambda_1,\ldots,\lambda_k)$, let
$\mathbf F\in\mathbb R^{k\times k}$, and fix $u\in\{1,\ldots,k\}$. Suppose
\[
4\|\mathbf F\|_2<\delta_u,\qquad
\delta_u\triangleq\min_{v\neq u}|\lambda_u-\lambda_v|.
\]
Let $\widehat\lambda_u$ be the unique eigenvalue of $\mathbf\Lambda+\mathbf F$
in $|\mu-\lambda_u|<\delta_u/2$. Then there exists a left eigenvector
$\widehat{\mathbf y}_u^\top\in\mathbb C^{1\times k}$ associated with
$\widehat\lambda_u$ such that
\[
\widehat{\mathbf y}_u^\top=\mathbf e_u^\top+\mathbf r_u^\top,\qquad
\mathbf e_u^\top\mathbf r_u=0,\qquad
\|\mathbf r_u\|_2\le
\frac{\|\mathbf F\|_2}{\delta_u-2\|\mathbf F\|_2}
\le
\frac{2\|\mathbf F\|_2}{\delta_u}.
\]
Consequently,
\[
\sin\theta(\widehat{\mathbf y}_u^\top,\mathbf e_u^\top)
\le
\|\mathbf r_u\|_2
\le
\frac{2\|\mathbf F\|_2}{\delta_u}.
\]
\end{lemma}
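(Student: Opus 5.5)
The plan is to write the eigenvector equation for $\mathbf\Lambda+\mathbf F$ in the normalized form $\widehat{\mathbf y}_u^\top=\mathbf e_u^\top+\mathbf r_u^\top$ with $r_{u,u}=0$, and to solve for $\mathbf r_u$ by a contraction (fixed-point) argument on the orthogonal complement of $\mathbf e_u$.

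\textbf{Step 1: reduce to a fixed-point equation.} Lemma~\ref{lem:diag_localization} applies, since $4\|\mathbf F\|_2<\delta_u$ implies $2\|\mathbf F\|_2<\delta_u$. So $\widehat\lambda_u$ exists, is simple, and satisfies $|\widehat\lambda_u-\lambda_u|\le\|\mathbf F\|_2$. Because $\widehat\lambda_u$ is simple, its left eigenspace is one-dimensional. Write $\mathbf P_\perp\triangleq\mathbf I-\mathbf e_u\mathbf e_u^\top$ and $\mathbf\Lambda_\perp$ for $\mathbf\Lambda$ restricted to $\range(\mathbf P_\perp)$. The equation $\widehat{\mathbf y}_u^\top(\mathbf\Lambda+\mathbf F)=\widehat\lambda_u\widehat{\mathbf y}_u^\top$ splits into two parts:
\[
\mathbf r_u^\top(\widehat\lambda_u\mathbf I-\mathbf\Lambda_\perp)=(\mathbf e_u^\top+\mathbf r_u^\top)\mathbf F\mathbf P_\perp
\]
on the complement, and $\lambda_u+F_{uu}+\mathbf r_u^\top\mathbf F\mathbf e_u=\widehat\lambda_u$ in the $u$-th coordinate.

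\textbf{Step 2: bound the resolvent.} For $v\neq u$,
\[
|\widehat\lambda_u-\lambda_v|\ge\delta_u-\|\mathbf F\|_2 ,
\]
so $\|(\widehat\lambda_u\mathbf I-\mathbf\Lambda_\perp)^{-1}\|_2\le(\delta_u-\|\mathbf F\|_2)^{-1}$. This is diagonal and hence exact.

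\textbf{Step 3: existence and the bound on $\mathbf r_u$.} This is where the main obstacle lies: I have to show that a left eigenvector has a nonzero $u$-th coordinate, so that the normalization $r_{u,u}=0$ is legitimate, and I have to obtain the sharper denominator $\delta_u-2\|\mathbf F\|_2$. I would avoid fixing $\widehat\lambda_u$ in advance and argue with Schur-complement style estimates instead.

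First, take any left eigenvector $\mathbf y$ and split it as $\mathbf y=y_u\mathbf e_u+\mathbf s$ with $\mathbf s\perp\mathbf e_u$. If $y_u=0$, Step 1 gives $\mathbf s^\top(\widehat\lambda_u-\mathbf\Lambda_\perp)=\mathbf s^\top\mathbf F\mathbf P_\perp$. That forces $\|\mathbf s\|(\delta_u-\|\mathbf F\|_2)\le\|\mathbf F\|_2\|\mathbf s\|$, which is impossible because $\delta_u>2\|\mathbf F\|_2$. Hence $y_u\neq0$, and I can rescale so that $y_u=1$.

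Second, with this normalization, Step 1 and Step 2 give
\[
\|\mathbf r_u\|_2\le\frac{\|\mathbf F\|_2(1+\|\mathbf r_u\|_2)}{\delta_u-\|\mathbf F\|_2},
\]
which rearranges to exactly $\|\mathbf r_u\|_2\le\|\mathbf F\|_2/(\delta_u-2\|\mathbf F\|_2)$. Since $4\|\mathbf F\|_2<\delta_u$, this is at most $2\|\mathbf F\|_2/\delta_u$. Using the cruder resolvent bound $\delta_u/2$ would not give the stated constants; the $(1+\|\mathbf r_u\|_2)$ rearrangement is what produces them.

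\textbf{Step 4: the angle bound.} This is immediate from the orthogonal decomposition:
\[
\sin\theta=\frac{\|\mathbf r_u\|_2}{\sqrt{1+\|\mathbf r_u\|_2^2}}\le\|\mathbf r_u\|_2 .
\]
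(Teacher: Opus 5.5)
Your proof is correct and takes essentially the paper's route. Both arguments localize $\widehat\lambda_u$ via Lemma~\ref{lem:diag_localization}, bound the diagonal resolvent on the complement by $(\delta_u-\|\mathbf F\|_2)^{-1}$, rule out a vanishing $u$-th coordinate, normalize it to one, and solve the complement equation. The only difference is cosmetic: the paper folds $\mathbf F_{22}$ into the block $\mathbf\Lambda_{-1}+\mathbf F_{22}-\widehat\lambda_u\mathbf I$ and bounds its inverse by a Neumann series, whereas you rearrange $\|\mathbf r_u\|_2(\delta_u-\|\mathbf F\|_2)\le\|\mathbf F\|_2(1+\|\mathbf r_u\|_2)$, which gives the identical bound $\|\mathbf F\|_2/(\delta_u-2\|\mathbf F\|_2)$. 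Note also that the contraction/fixed-point step you announce is never actually needed, since existence of the left eigenvector is already given.
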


This is an block-resolvent specialization of the standard perturbation theory for a simple isolated eigenvalue; see \citet{kato1995perturbation,stewart1990matrix}. We give the direct proof because the diagonal population form yields sharper constants and simpler notation. 

\begin{proof}
After a permutation of coordinates, take $u=1$ and write
\[
\mathbf\Lambda+\mathbf F=
\begin{bmatrix}
\lambda_1+f_{11} & \mathbf f_{12}^\top\\
\mathbf f_{21} & \mathbf\Lambda_{-1}+\mathbf F_{22}
\end{bmatrix},
\qquad
\mathbf\Lambda_{-1}=\diag(\lambda_2,\ldots,\lambda_k).
\]
By Lemma~\ref{lem:diag_localization},
$|\widehat\lambda_1-\lambda_1|\le \|\mathbf F\|_2$, hence, for every
$v\neq1$,
\[
|\lambda_v-\widehat\lambda_1|
\ge |\lambda_v-\lambda_1|-|\widehat\lambda_1-\lambda_1|
\ge \delta_1-\|\mathbf F\|_2,
\qquad
\|(\mathbf\Lambda_{-1}-\widehat\lambda_1\mathbf I)^{-1}\|_2
\le \frac1{\delta_1-\|\mathbf F\|_2}.
\]
Moreover,
\[
\|(\mathbf\Lambda_{-1}-\widehat\lambda_1\mathbf I)^{-1}\mathbf F_{22}\|_2
\le
\frac{\|\mathbf F\|_2}{\delta_1-\|\mathbf F\|_2}<1,
\]
so
\[
\mathbf B\triangleq \mathbf\Lambda_{-1}+\mathbf F_{22}
-\widehat\lambda_1\mathbf I
=
(\mathbf\Lambda_{-1}-\widehat\lambda_1\mathbf I)
\Bigl[\mathbf I+
(\mathbf\Lambda_{-1}-\widehat\lambda_1\mathbf I)^{-1}\mathbf F_{22}\Bigr]
\]
is invertible and
\[
\|\mathbf B^{-1}\|_2
\le
\frac{1}{\delta_1-\|\mathbf F\|_2}
\left(1-\frac{\|\mathbf F\|_2}{\delta_1-\|\mathbf F\|_2}\right)^{-1}
=
\frac1{\delta_1-2\|\mathbf F\|_2}.
\]

Let $\widehat{\mathbf y}_1^\top=(a,\mathbf z^\top)$ be a nonzero left
eigenvector for $\widehat\lambda_1$. If $a=0$, then
$\mathbf z^\top\mathbf B=0$, contradicting the invertibility of $\mathbf B$;
hence $a\neq0$, and rescaling gives
$\widehat{\mathbf y}_1^\top=(1,\mathbf x^\top)$. The last $k-1$ coordinates of
$\widehat{\mathbf y}_1^\top(\mathbf\Lambda+\mathbf F)
=\widehat\lambda_1\widehat{\mathbf y}_1^\top$ yield
\[
\mathbf f_{12}^\top+\mathbf x^\top(\mathbf\Lambda_{-1}+\mathbf F_{22})
=
\widehat\lambda_1\mathbf x^\top,
\qquad
\mathbf x^\top=-\mathbf f_{12}^\top\mathbf B^{-1}.
\]
Therefore
\[
\|\mathbf x\|_2
\le
\|\mathbf f_{12}\|_2\|\mathbf B^{-1}\|_2
\le
\frac{\|\mathbf F\|_2}{\delta_1-2\|\mathbf F\|_2}
\le
\frac{2\|\mathbf F\|_2}{\delta_1},
\]
where the last inequality follows from $4\|\mathbf F\|_2<\delta_1$. Setting
$\mathbf r_1^\top=(0,\mathbf x^\top)$ gives
$\widehat{\mathbf y}_1^\top=\mathbf e_1^\top+\mathbf r_1^\top$ and
$\mathbf e_1^\top\mathbf r_1=0$. Finally,
\[
\sin\theta(\mathbf e_1^\top+\mathbf r_1^\top,\mathbf e_1^\top)
=
\frac{\|\mathbf r_1\|_2}{\sqrt{1+\|\mathbf r_1\|_2^2}}
\le
\|\mathbf r_1\|_2.
\]
Undoing the coordinate permutation proves the claim for general $u$.
\end{proof}

\begin{lemma}[Angle distortion under an invertible linear map]
\label{lem:angle_linear_map}
Let $\mathbb F\in\{\mathbb R,\mathbb C\}$, let
$\mathbf S\in\mathbb F^{k\times k}$ be invertible, and let
$\mathbf x,\mathbf y\in\mathbb F^k\setminus\{0\}$. Then $\sin\theta(\mathbf S\mathbf x,\mathbf S\mathbf y) \le \kappa(\mathbf S)\sin\theta(\mathbf x,\mathbf y)$, where $\kappa(\mathbf S)\triangleq \|\mathbf S\|_2\|\mathbf S^{-1}\|_2$. Equivalently, for nonzero row vectors $\mathbf x^\top,\mathbf y^\top$, $\sin\theta(\mathbf x^\top\mathbf S,\mathbf y^\top\mathbf S) \le \kappa(\mathbf S)\sin\theta(\mathbf x^\top,\mathbf y^\top)$.
\end{lemma}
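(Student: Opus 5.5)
The plan is to reduce the sine of the angle to a distance-to-a-line quantity and then push the linear map through. I will use the characterization
\[
\sin\theta(\mathbf x,\mathbf y)=\min_{c\in\mathbb F}\frac{\|\mathbf x-c\,\mathbf y\|_2}{\|\mathbf x\|_2}
\]
for nonzero $\mathbf x,\mathbf y$, where the minimum is attained at the orthogonal projection $c^\star=\mathbf y^*\mathbf x/\|\mathbf y\|_2^2$. This holds over both $\mathbb R$ and $\mathbb C$ (phase is absorbed into $c$). It is also symmetric in the sense $\|\mathbf x\|\sin\theta(\mathbf x,\mathbf y)=\operatorname{dist}(\mathbf x,\operatorname{span}\mathbf y)$, so I will state that first as a short preliminary.

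Next, with $c^\star$ chosen for the pair $(\mathbf x,\mathbf y)$, I will bound
\[
\sin\theta(\mathbf S\mathbf x,\mathbf S\mathbf y)\le\frac{\|\mathbf S\mathbf x-c^\star\mathbf S\mathbf y\|_2}{\|\mathbf S\mathbf x\|_2}\le\frac{\|\mathbf S\|_2\,\|\mathbf x-c^\star\mathbf y\|_2}{\|\mathbf S\mathbf x\|_2}.
\]
Then I use $\|\mathbf S\mathbf x\|_2\ge\|\mathbf x\|_2/\|\mathbf S^{-1}\|_2$, which follows from $\mathbf x=\mathbf S^{-1}\mathbf S\mathbf x$. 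This yields
\[
\sin\theta(\mathbf S\mathbf x,\mathbf S\mathbf y)\le\kappa(\mathbf S)\,\frac{\|\mathbf x-c^\star\mathbf y\|_2}{\|\mathbf x\|_2}=\kappa(\mathbf S)\sin\theta(\mathbf x,\mathbf y).
\]
Invertibility guarantees $\mathbf S\mathbf x,\mathbf S\mathbf y\neq0$, so every angle above is defined.

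For the row-vector form, I will transpose (or conjugate-transpose) and apply the column statement with $\mathbf S^\top$ in place of $\mathbf S$. This uses $\sin\theta(\mathbf x^\top\mathbf S,\mathbf y^\top\mathbf S)=\sin\theta(\mathbf S^\top\mathbf x,\mathbf S^\top\mathbf y)$ together with $\kappa(\mathbf S^\top)=\kappa(\mathbf S)$, since singular values are invariant under transposition. Over $\mathbb C$ one must be careful that the angle is defined through the Hermitian inner product. Either transpose or conjugate transpose works, because $|\mathbf u^*\mathbf v|$ is unchanged when both vectors are conjugated.

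The only mildly delicate step is the preliminary distance characterization over $\mathbb C$. It is Pythagoras: the residual $\mathbf x-c^\star\mathbf y$ is orthogonal to $\mathbf y$, so $\|\mathbf x-c^\star\mathbf y\|^2=\|\mathbf x\|^2-|\mathbf y^*\mathbf x|^2/\|\mathbf y\|^2=\|\mathbf x\|^2\sin^2\theta$, and any other $c$ only increases the residual. Everything else is a one-line norm inequality.
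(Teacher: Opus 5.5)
Your proof is correct and follows essentially the same route as the paper. Both use the distance-to-a-line characterization $\sin\theta(\mathbf u,\mathbf v)=\min_{\alpha}\|\mathbf u-\alpha\mathbf v\|_2/\|\mathbf u\|_2$, bound the numerator by $\|\mathbf S\|_2$, and bound the denominator below by $\sigma_{\min}(\mathbf S)\|\mathbf x\|_2=\|\mathbf x\|_2/\|\mathbf S^{-1}\|_2$. The differences are cosmetic: you plug in the minimizer $c^\star$ instead of minimizing both sides, and you spell out the Pythagoras justification and the transpose argument with $\kappa(\mathbf S^\top)=\kappa(\mathbf S)$ for the row-vector form, which the paper leaves implicit.
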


\begin{proof}
The inequality is homogeneous in $\mathbf{x}$ and $\mathbf{y}$, so we may normalize them without loss of generality, and thus assume $\|\mathbf{x}\|_2 = \|\mathbf{y}\|_2 = 1$. We use the standard characterization of the sine of the principal angle between one-dimensional subspaces: for any nonzero vectors $\mathbf{u},\mathbf{v}$,
\[
\sin \theta(\mathbf{u},\mathbf{v})
=
\min_{\alpha \in \mathbb F}
\frac{\|\mathbf{u}-\alpha \mathbf{v}\|_2}{\|\mathbf{u}\|_2}.
\]
Applying this identity with $\mathbf{u} = \mathbf{S}\mathbf{x}$ and $\mathbf{v} = \mathbf{S}\mathbf{y}$, we obtain
\[
\sin \theta(\mathbf{S}\mathbf{x},\mathbf{S}\mathbf{y})
=
\min_{\alpha \in \mathbb F}
\frac{\|\mathbf{S}\mathbf{x}-\alpha \mathbf{S}\mathbf{y}\|_2}{\|\mathbf{S}\mathbf{x}\|_2}.
\]
Next, use linearity of $\mathbf{S}$ to factor the numerator: $\mathbf{S}\mathbf{x}-\alpha \mathbf{S}\mathbf{y} = \mathbf{S}(\mathbf{x}-\alpha \mathbf{y})$, so by the definition of the operator norm,
\[
\|\mathbf{S}\mathbf{x}-\alpha \mathbf{S}\mathbf{y}\|_2
=
\|\mathbf{S}(\mathbf{x}-\alpha \mathbf{y})\|_2
\le
\|\mathbf{S}\|_2 \,\|\mathbf{x}-\alpha \mathbf{y}\|_2.
\]
Therefore, for every $\alpha \in \mathbb F$,
\[
\frac{\|\mathbf{S}\mathbf{x}-\alpha \mathbf{S}\mathbf{y}\|_2}{\|\mathbf{S}\mathbf{x}\|_2}
\le
\frac{\|\mathbf{S}\|_2\,\|\mathbf{x}-\alpha \mathbf{y}\|_2}{\|\mathbf{S}\mathbf{x}\|_2}.
\]
We now lower-bound the denominator. Since $\mathbf{S}$ is invertible, $\|\mathbf{S}\mathbf{x}\|_2 \ge \sigma_{\min}(\mathbf{S}) \|\mathbf{x}\|_2$, and because we normalized $\|\mathbf{x}\|_2=1$, this becomes $\|\mathbf{S}\mathbf{x}\|_2 \ge \sigma_{\min}(\mathbf{S})$. Hence, still for every $\alpha \in \mathbb F$,
\[
\frac{\|\mathbf{S}\mathbf{x}-\alpha \mathbf{S}\mathbf{y}\|_2}{\|\mathbf{S}\mathbf{x}\|_2}
\le
\frac{\|\mathbf{S}\|_2}{\sigma_{\min}(\mathbf{S})}
\|\mathbf{x}-\alpha \mathbf{y}\|_2.
\]
Taking the minimum over $\alpha$ on both sides yields
\[
\sin \theta(\mathbf{S}\mathbf{x},\mathbf{S}\mathbf{y})
\le
\frac{\|\mathbf{S}\|_2}{\sigma_{\min}(\mathbf{S})}
\min_{\alpha \in \mathbb F} \|\mathbf{x}-\alpha \mathbf{y}\|_2.
\]
Since $\|\mathbf{x}\|_2=1$, the same one-dimensional formula gives
$\min_{\alpha \in \mathbb F} \|\mathbf{x}-\alpha \mathbf{y}\|_2
= \sin \theta(\mathbf{x},\mathbf{y})$. Substituting this in, and using $\|\mathbf{S}^{-1}\|_2 = 1/\sigma_{\min}(\mathbf{S})$, we obtain
\[
\sin \theta(\mathbf{S}\mathbf{x}, \mathbf{S}\mathbf{y})
\le
\frac{\|\mathbf{S}\|_2}{\sigma_{\min}(\mathbf{S})}
\sin \theta(\mathbf{x},\mathbf{y})
=
\|\mathbf{S}\|_2 \|\mathbf{S}^{-1}\|_2
\sin \theta(\mathbf{x},\mathbf{y})
=
\kappa(\mathbf{S}) \sin \theta(\mathbf{x}, \mathbf{y}),
\]
as claimed.
\end{proof}

\subsection{A. Proof of Theorem \ref{thm:dimensionality}}
\label{app:proof_thm_dimensionality_revised}

\begin{proof}
Define the stacked observable cross-moment matrix by
\[
\mathbf{M}_{\mathrm{stack}}
\triangleq
\begin{bmatrix}
\mathbf{M}_{ZX|0}\\[0.2em]
\mathbf{M}_{ZX|1}
\end{bmatrix}
\in \mathbb{R}^{2d_z \times d_x}.
\]
Using the treatment-specific factorization \eqref{eq:app_fac1_new}, we may write $\mathbf{M}_{ZX|0} = \mathbf{A}_0 \mathbf{D}_{U|0}\mathbf{B}^\top$ and $\mathbf{M}_{ZX|1} = \mathbf{A}_1 \mathbf{D}_{U|1}\mathbf{B}^\top$. Stacking these two block rows gives
\[
\mathbf{M}_{\mathrm{stack}}
=
\begin{bmatrix}
\mathbf{A}_0 \mathbf{D}_{U|0}\\[0.2em]
\mathbf{A}_1 \mathbf{D}_{U|1}
\end{bmatrix}
\mathbf{B}^\top
=:
\mathbf{C}_{\mathrm{stack}} \mathbf{B}^\top,
\qquad
\mathbf{C}_{\mathrm{stack}} \in \mathbb{R}^{2d_z \times k}.
\]
Our goal is to prove that $\rank(\mathbf{M}_{\mathrm{stack}})=k$. Since $\mathbf{B}$ has full column rank $k$ by Assumption~\ref{ass:richness}, $\mathbf{B}^\top$ has full row rank $k$. Thus the only remaining issue is whether $\mathbf{C}_{\mathrm{stack}}$ has full column rank. We now show that it does.

Let $\mathbf{w} \in \mathbb{R}^k$ satisfy $\mathbf{C}_{\mathrm{stack}} \mathbf{w} = \mathbf{0}$. Writing this blockwise, both block equations must hold separately:
\[
\mathbf{A}_0 \mathbf{D}_{U|0}\mathbf{w} = \mathbf{0},
\qquad
\mathbf{A}_1 \mathbf{D}_{U|1}\mathbf{w} = \mathbf{0}.
\]
Since each of $\mathbf{A}_0$, $\mathbf{A}_1$ has rank $k$, their Moore--Penrose pseudo-inverses satisfy $\mathbf{A}_0^\dagger \mathbf{A}_0 = \mathbf{I}_k$ and $\mathbf{A}_1^\dagger \mathbf{A}_1 = \mathbf{I}_k$. Applying $\mathbf{A}_t^\dagger$ to the respective equations gives
\[
\mathbf{D}_{U|0}\mathbf{w} = \mathbf{0},
\qquad
\mathbf{D}_{U|1}\mathbf{w} = \mathbf{0}.
\]

We now expand these two diagonal equations coordinatewise. Since $\mathbf{D}_{U|t} = \diag(\pr(U=1\mid T=t),\dots,\pr(U=k\mid T=t))$, the equations above mean that for each $u=1,\dots,k$,
\[
\pr(U=u \mid T=0)\, w_u = 0
\qquad\text{and}\qquad
\pr(U=u \mid T=1)\, w_u = 0.
\]
By the hypothesis of the theorem, for each $u \in \{1,\dots,k\}$ there exists at least one $t \in \{0,1\}$ such that $\pr(U=u \mid T=t) > 0$. For that value of $t$, the corresponding equation above forces $w_u = 0$. Because this argument holds for every $u=1,\dots,k$, we conclude that $\mathbf{w}=\mathbf{0}$.

We have therefore shown that $\ker(\mathbf{C}_{\mathrm{stack}})=\{0\}$, so $\rank(\mathbf{C}_{\mathrm{stack}})=k$. Since $\mathbf{C}_{\mathrm{stack}}$ has full column rank $k$ and $\mathbf{B}^\top$ has full row rank $k$, it follows that
\[
\rank(\mathbf{M}_{\mathrm{stack}})
=
\rank(\mathbf{C}_{\mathrm{stack}}\mathbf{B}^\top)
=
k.
\]
Substituting back the definition of $\mathbf{M}_{\mathrm{stack}}$, we obtain
\[
k
=
\rank\!\left(
\begin{bmatrix}
\mathbf{M}_{ZX|0}\\
\mathbf{M}_{ZX|1}
\end{bmatrix}
\right),
\]
which is exactly the claimed identification formula for the latent dimension.
\end{proof}

\subsection{B. Proof of Theorem \ref{thm:positivity}}
\begin{proof}
Starting from \eqref{eq:app_fac1_new}, for each treatment arm $t \in \{0,1\}$ we have $\mathbf{M}_{ZX|t} = \mathbf{A}_t \mathbf{D}_{U|t} \mathbf{B}^\top$. Taking transposes yields $\mathbf{M}_{ZX|t}^\top = \mathbf{B}\mathbf{D}_{U|t}\mathbf{A}_t^\top$. Because $\mathbf{A}_t$ has full column rank by Assumption~\ref{ass:richness}, the transpose $\mathbf{A}_t^\top$ has full row rank $k$. Hence right multiplication by $\mathbf{A}_t^\top$ does not alter the column space of the matrix to its left, and therefore
\[
\range(\mathbf{M}_{ZX|t}^\top)
=
\range(\mathbf{B}\mathbf{D}_{U|t}\mathbf{A}_t^\top)
=
\range(\mathbf{B}\mathbf{D}_{U|t}).
\]

Now assume that strict latent positivity holds, so that $\pr(T=t \mid U=u) > 0$ for every $u \in \{1,\dots,k\}$ and every $t \in \{0,1\}$. Since the theorem also assumes marginal support $\pr(U=u)>0$ for all $u$, Bayes' rule gives
\[
\pr(U=u \mid T=t)
=
\frac{\pr(T=t \mid U=u)\pr(U=u)}{\pr(T=t)}
>
0.
\]
Thus every diagonal entry of $\mathbf{D}_{U|t} = \diag(\pr(U=1\mid T=t),\dots,\pr(U=k\mid T=t))$ is strictly positive, so $\mathbf{D}_{U|t}$ is invertible. Since multiplication on the right by an invertible matrix does not change column space, $\range(\mathbf{B}\mathbf{D}_{U|t})=\range(\mathbf{B})$. Combining this with the identity above yields
\[
\range(\mathbf{M}_{ZX|0}^\top)
=
\range(\mathbf{M}_{ZX|1}^\top)
=
\range(\mathbf{B}).
\]

Conversely, suppose there exist $u^\ast \in \{1,\dots,k\}$ and $t^\ast \in \{0,1\}$ such that $\pr(T=t^\ast \mid U=u^\ast)=0$. Applying Bayes' rule gives $\pr(U=u^\ast \mid T=t^\ast) = 0$, so the $u^\ast$-th diagonal entry of $\mathbf{D}_{U|t^\ast}$ is zero. Letting $\mathbf{b}_u$ denote the $u$-th column of $\mathbf{B}$, we have
\[
\mathbf{B}\mathbf{D}_{U|t^\ast}
=
\sum_{u=1}^k \pr(U=u \mid T=t^\ast)\,\mathbf{b}_u\mathbf{e}_u^\top,
\]
where $\mathbf{e}_u$ is the $u$-th standard basis vector in $\mathbb{R}^k$. Since $\pr(U=u^\ast \mid T=t^\ast)=0$, the coefficient multiplying $\mathbf{b}_{u^\ast}$ vanishes, so that column direction is removed entirely. Because $\mathbf{B}$ has full column rank $k$, its columns are linearly independent, and once one of those independent directions is annihilated, the resulting matrix cannot still have rank $k$. Therefore $\rank(\mathbf{B}\mathbf{D}_{U|t^\ast})<k$.

Using the identity established at the beginning of the proof, $\range(\mathbf{M}_{ZX|t^\ast}^\top)=\range(\mathbf{B}\mathbf{D}_{U|t^\ast})$, we obtain $\dim\range(\mathbf{M}_{ZX|t^\ast}^\top) = \rank(\mathbf{B}\mathbf{D}_{U|t^\ast}) < k$. On the other hand, since $\mathbf{B}$ has full column rank, $\dim\range(\mathbf{B})=k$. Therefore
\[
\range(\mathbf{M}_{ZX|t^\ast}^\top)\subsetneq \range(\mathbf{B}),
\]
and in particular the treated and control row spaces cannot both coincide with $\range(\mathbf{B})$.
\end{proof}

\subsection{C. Proof of Theorem \ref{thm:spectral_identification}}
\label{app:proof_thm_spectral_identification_revised}

\begin{figure}[!htbp]
    \centering
    \includegraphics[width=0.92\textwidth]{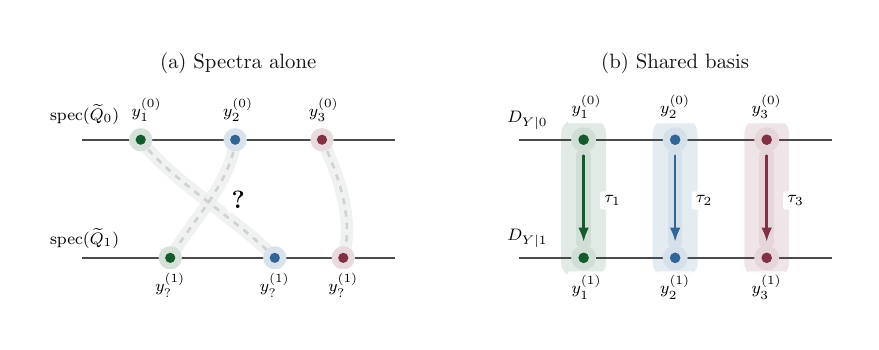}
    \caption{
    Counterfactual pairing through the common similarity basis.
    \textbf{Left:} The spectra of the treated and control operators separately reveal unordered potential-outcome values, so the classwise matching is not determined from the two spectra alone.
    \textbf{Right:} The shared similarity basis aligns the latent coordinates before differencing; hence
    $\Delta\widetilde{\mathbf{Q}}=\mathbf{R}^{-1}\mathbf{D}_{\tau}\mathbf{R}$
    and the eigenvalues recover the paired latent treatment effects.
    }
    \label{fig:counterfactual_pairing_geometry}
\end{figure}

\begin{proof}
Recall that $\mathbf{M}_{ZX|t} = \mathbf{C}_t \mathbf{B}^\top$ and $\mathbf{M}_{ZXY|t} = \mathbf{C}_t \mathbf{D}_{Y|t}\mathbf{B}^\top$. By Assumption~\ref{ass:richness} and strict positivity, $\mathbf{C}_t$ has full column rank $k$, while $\mathbf{B}^\top$ has full row rank $k$. Therefore Lemma~\ref{lem:pinv_product} applies to the product $\mathbf{C}_t \mathbf{B}^\top$ and yields $\mathbf{M}_{ZX|t}^\dagger = (\mathbf{B}^\top)^\dagger \mathbf{C}_t^\dagger$. Substituting this into the definition of the ambient quotient operator gives
\begin{align}
\mathbf{Q}_t
&=
\mathbf{M}_{ZX|t}^\dagger \mathbf{M}_{ZXY|t} \nonumber\\
&=
(\mathbf{B}^\top)^\dagger \mathbf{C}_t^\dagger
\big(\mathbf{C}_t \mathbf{D}_{Y|t}\mathbf{B}^\top\big) \nonumber\\
&=
(\mathbf{B}^\top)^\dagger
(\mathbf{C}_t^\dagger \mathbf{C}_t)
\mathbf{D}_{Y|t}\mathbf{B}^\top. \label{eq:ambient_operator_middle_revised}
\end{align}
Since $\mathbf{C}_t$ has full column rank, $\mathbf{C}_t^\dagger \mathbf{C}_t = \mathbf{I}_k$, and \eqref{eq:ambient_operator_middle_revised} simplifies to
\begin{equation}
\mathbf{Q}_t
=
(\mathbf{B}^\top)^\dagger \mathbf{D}_{Y|t}\mathbf{B}^\top.
\label{eq:ambient_operator_clean_revised}
\end{equation}
This is the claimed ambient representation.

We now examine the action of $\mathbf{Q}_t$ on the signal subspace determined by $\mathbf{B}$. Right-multiplying \eqref{eq:ambient_operator_clean_revised} by $(\mathbf{B}^\top)^\dagger$ and using the full-row-rank identity $\mathbf{B}^\top (\mathbf{B}^\top)^\dagger = \mathbf{I}_k$ yields
\begin{equation}
\mathbf{Q}_t (\mathbf{B}^\top)^\dagger
=
(\mathbf{B}^\top)^\dagger \mathbf{D}_{Y|t}.
\label{eq:ambient_intertwining_revised}
\end{equation}
Equation \eqref{eq:ambient_intertwining_revised} is an intertwining relation showing that $\range((\mathbf{B}^\top)^\dagger)$ is $\mathbf{Q}_t$-invariant: if $v = (\mathbf{B}^\top)^\dagger a$ for some $a \in \mathbb{R}^k$, then $\mathbf{Q}_t v = (\mathbf{B}^\top)^\dagger \mathbf{D}_{Y|t} a \in \range((\mathbf{B}^\top)^\dagger)$. Since $\mathbf{B}$ has full column rank, $\range((\mathbf{B}^\top)^\dagger)=\range(\mathbf{B})$, so $\range(\mathbf{B})$ is $\mathbf{Q}_t$-invariant. Moreover, \eqref{eq:ambient_intertwining_revised} shows that the restriction of $\mathbf{Q}_t$ to this $k$-dimensional signal subspace is similar to $\mathbf{D}_{Y|t}$.

We next identify the corresponding left eigenstructure directly. Left-multiplying \eqref{eq:ambient_operator_clean_revised} by $\mathbf{B}^\top$ and using $\mathbf{B}^\top (\mathbf{B}^\top)^\dagger = \mathbf{I}_k$, we obtain
\begin{equation}
\mathbf{B}^\top \mathbf{Q}_t
=
\mathbf{D}_{Y|t}\mathbf{B}^\top.
\label{eq:left_eigen_relation_revised}
\end{equation}
Writing \eqref{eq:left_eigen_relation_revised} row by row, the $u$-th row of $\mathbf{B}^\top$ satisfies
\[
\mathbf{b}_u^\top \mathbf{Q}_t
=
\E[Y^{(t)} \mid U=u]\,\mathbf{b}_u^\top,
\]
so for each $u=1,\dots,k$, the $u$-th row of $\mathbf{B}^\top$ is a left eigenvector of $\mathbf{Q}_t$ with eigenvalue $\E[Y^{(t)} \mid U=u]$. Because $\mathbf{B}^\top$ has rank $k$, its $k$ row directions span the full left signal space. Therefore the spectral values of $\mathbf{Q}_t$ on the invariant signal subspace $\range(\mathbf{B})$ are exactly the diagonal entries of $\mathbf{D}_{Y|t}$, counted with multiplicity, proving that the restriction of $\mathbf{Q}_t$ to the signal component has spectrum
\[
\left\{\E[Y^{(t)} \mid U=u]\right\}_{u=1}^k.
\]
\end{proof}

\subsection{D. Proof of Theorem \ref{thm:pairing} and Proposition \ref{prop:latent_prob}}

\begin{proof}
Let
\[
\mathbf{M}_{\mathrm{stack}}
=
\begin{bmatrix}
\mathbf{M}_{ZX|0}\\[-0.1em]
\mathbf{M}_{ZX|1}
\end{bmatrix}.
\]
By Theorem~\ref{thm:positivity},
\[
\begin{gathered}
\range(\mathbf{M}_{ZX|0}^\top) = \range(\mathbf{M}_{ZX|1}^\top) = \range(\mathbf{B}),\\[-0.2em]
\range(\mathbf{M}_{\mathrm{stack}}^\top) = \range(\mathbf{B}).
\end{gathered}
\]
Let $\mathbf{V}_k \in \mathbb{R}^{d_x \times k}$ have orthonormal columns spanning this space. Since $\mathbf{B}$ has full column rank and $\range(\mathbf{B})=\range(\mathbf{V}_k)$, we have
\[
\mathbf{B}^\top = \mathbf{B}^\top \mathbf{V}_k \mathbf{V}_k^\top,
\]
so taking $\mathbf{R} = \mathbf{B}^\top \mathbf{V}_k$ gives an invertible matrix with
\[
\mathbf{B}^\top = \mathbf{R}\mathbf{V}_k^\top.
\]

Now
\[
\begin{gathered}
\mathbf{M}_{ZX|t}\mathbf{V}_k = \mathbf{C}_t \mathbf{B}^\top \mathbf{V}_k = \mathbf{C}_t \mathbf{R},
\qquad
\mathbf{M}_{ZXY|t}\mathbf{V}_k = \mathbf{C}_t \mathbf{D}_{Y|t}\mathbf{R},\\[-0.2em]
(\mathbf{M}_{ZX|t}\mathbf{V}_k)^\dagger = \mathbf{R}^{-1}\mathbf{C}_t^\dagger.
\end{gathered}
\]
Since $\mathbf{C}_t$ has full column rank and $\mathbf{R}$ is invertible, Lemma~\ref{lem:pinv_product} yields the second identity. Hence
\begin{equation}
\begin{aligned}
\widetilde{\mathbf{Q}}_t
&=
(\mathbf{M}_{ZX|t}\mathbf{V}_k)^\dagger (\mathbf{M}_{ZXY|t}\mathbf{V}_k) \\
&=
\mathbf{R}^{-1}\mathbf{C}_t^\dagger
\big(\mathbf{C}_t \mathbf{D}_{Y|t}\mathbf{R}\big) \\
&=
\mathbf{R}^{-1}(\mathbf{C}_t^\dagger \mathbf{C}_t)\mathbf{D}_{Y|t}\mathbf{R}
=
\mathbf{R}^{-1}\mathbf{D}_{Y|t}\mathbf{R}.
\end{aligned}
\label{eq:compressed_similarity_revised}
\end{equation}

Subtracting the $t=0$ relation from the $t=1$ relation gives
\[
\begin{gathered}
\Delta \widetilde{\mathbf{Q}}
=
\widetilde{\mathbf{Q}}_1 - \widetilde{\mathbf{Q}}_0
=
\mathbf{R}^{-1}(\mathbf{D}_{Y|1}-\mathbf{D}_{Y|0})\mathbf{R},\\[-0.2em]
\mathbf{D}_{Y|1}-\mathbf{D}_{Y|0} = \diag(\tau(1),\dots,\tau(k)).
\end{gathered}
\]
Thus $\Delta \widetilde{\mathbf{Q}}$ is similar to the diagonal matrix above, and consequently the eigenvalues of $\Delta \widetilde{\mathbf{Q}}$ are exactly
\[
\tau(u) = \mathbb{E}[Y^{(1)} \mid U=u] - \mathbb{E}[Y^{(0)} \mid U=u],
\qquad
u=1,\dots,k.
\]

Now assume that the $\tau(u)$ are pairwise distinct, so each left eigenspace of $\Delta\widetilde{\mathbf{Q}}$ is one-dimensional. Since
\[
\mathbf{R}\Delta \widetilde{\mathbf{Q}} = (\mathbf{D}_{Y|1}-\mathbf{D}_{Y|0})\mathbf{R},
\]
the $u$-th row of $\mathbf{R}$ is a left eigenvector of $\Delta \widetilde{\mathbf{Q}}$ for eigenvalue $\tau(u)$. Therefore any matrix $\mathbf{W}$ whose rows are left eigenvectors of $\Delta \widetilde{\mathbf{Q}}$ must satisfy
\[
\mathbf{W} = \mathbf{S}\mathbf{R}
\]
for some nonsingular diagonal matrix $\mathbf{S}$. Multiplying on the right by $\mathbf{V}_k^\top$ gives
\[
\begin{gathered}
\mathbf{W}\mathbf{V}_k^\top
=
\mathbf{S}\mathbf{R}\mathbf{V}_k^\top
=
\mathbf{S}\mathbf{B}^\top,\\[-0.2em]
\mathbf{B}_{\mathrm{unscaled}}^\top \triangleq \mathbf{W}\mathbf{V}_k^\top,
\qquad
(\mathbf{B}_{\mathrm{unscaled}}^\top)_{u,:} = S_{uu}\,\mathbf{b}_u^\top.
\end{gathered}
\]

By assumption, $X_1 = 1$ almost surely, so
\[
b_{u,1} = \mathbb{E}[X_1 \mid U=u] = 1
\]
for every $u$. Hence the first coordinate of the $u$-th row of $\mathbf{B}_{\mathrm{unscaled}}^\top$ is
\[
(\mathbf{B}_{\mathrm{unscaled}}^\top)_{u1} = S_{uu}.
\]
Dividing the $u$-th row by its first entry removes the unknown scale factor $S_{uu}$ and recovers $\mathbf{b}_u^\top$. Thus $\mathbf{B}^\top$ is identified exactly.

Finally, let
\[
\mathbf{p} = (\pr(U=1),\dots,\pr(U=k))^\top.
\]
By the law of total expectation,
\[
\mathbb{E}[X]
=
\sum_{u=1}^k \mathbb{E}[X \mid U=u]\pr(U=u)
=
\mathbf{B}\mathbf{p}.
\]
Since $\mathbf{B}$ has full column rank, $\mathbf{B}^\dagger \mathbf{B} = \mathbf{I}_k$, and therefore
\[
\mathbf{p} = \mathbf{B}^\dagger \mathbb{E}[X].
\]
\end{proof}

\subsection{E. Proof of Theorem \ref{thm:moment_equivalence}}
\label{app:proof_thm_moment_equivalence}

\begin{proof}
Let $\mathbf{p} \triangleq (\pr(U=1),\dots,\pr(U=k))^\top$. From $\mathbf{B}^\top = \mathbf{R}\mathbf{V}_k^\top$ we obtain, by transposition, $\mathbf{B} = \mathbf{V}_k \mathbf{R}^\top$. Hence
\[
\begin{aligned}
\E[X] &= \mathbf{B}\mathbf{p} = \mathbf{V}_k \mathbf{R}^\top \mathbf{p},
\qquad
\mathbf{a}^\top
=
\E[X]^\top \mathbf{V}_k
=
\mathbf{p}^\top \mathbf{R}\mathbf{V}_k^\top \mathbf{V}_k
=
\mathbf{p}^\top \mathbf{R}.
\end{aligned}
\]

Next, because the first coordinate of $X$ is the constant $1$, each latent conditional mean vector satisfies $e_1^\top \E[X \mid U=u] = 1$. Equivalently, $\mathbf{B}^\top e_1 = \mathbf{1}_k$, where $\mathbf{1}_k \in \R^k$ is the all-ones vector. Using $\mathbf{B}^\top = \mathbf{R}\mathbf{V}_k^\top$, this gives
\[
\begin{aligned}
\mathbf{R}\mathbf{c}
&=
\mathbf{R}\mathbf{V}_k^\top e_1
=
\mathbf{B}^\top e_1
=
\mathbf{1}_k,
\qquad
\mathbf{c} = \mathbf{R}^{-1}\mathbf{1}_k.
\end{aligned}
\]

For any integer $\ell \ge 0$,
\[
\begin{aligned}
(\Delta \widetilde{\mathbf{Q}})^\ell
&=
(\mathbf{R}^{-1}\mathbf{D}_{\tau}\mathbf{R})^\ell
=
\mathbf{R}^{-1}\mathbf{D}_{\tau}^{\ell}\mathbf{R}.
\end{aligned}
\]
Consequently,
\[
\begin{aligned}
\mathbf{a}^\top (\Delta \widetilde{\mathbf{Q}})^\ell \mathbf{c}
&=
\mathbf{p}^\top \mathbf{R}
\left(
\mathbf{R}^{-1}\mathbf{D}_{\tau}^{\ell}\mathbf{R}
\right)
\mathbf{R}^{-1}\mathbf{1}_k \\
&=
\mathbf{p}^\top \mathbf{D}_{\tau}^{\ell}\mathbf{1}_k
=
\sum_{u=1}^k \pr(U=u)\tau(u)^\ell
=
\E[\tau(U)^\ell].
\end{aligned}
\]
This proves the moment identity.

For the generating function, let $|z| < \|\Delta \widetilde{\mathbf{Q}}\|_2^{-1}$. Then the Neumann series converges absolutely and
\[
\begin{aligned}
(\mathbf{I}_k - z \Delta \widetilde{\mathbf{Q}})^{-1}
&=
\sum_{\ell=0}^{\infty} z^\ell (\Delta \widetilde{\mathbf{Q}})^\ell.
\end{aligned}
\]
Left- and right-multiplying by $\mathbf{a}^\top$ and $\mathbf{c}$, and invoking the moment identity termwise, yields
\[
\begin{aligned}
\mathbf{a}^\top (\mathbf{I}_k - z \Delta \widetilde{\mathbf{Q}})^{-1}\mathbf{c}
&=
\sum_{\ell=0}^{\infty} z^\ell \E[\tau(U)^\ell]
=
\sum_{\ell=0}^{\infty} m_\ell z^\ell.
\end{aligned}
\]
The final claim follows because every polynomial functional of $\tau(U)$ is a finite linear combination of the moments $\{m_\ell\}_{\ell \ge 0}$.
\end{proof}

\subsection{F. Proof of Theorem \ref{thm:environment} and Proposition \ref{prop:homogeneity}}
\label{app:proof_thm_environment_homogeneity}

\begin{proof}[Proof of Theorem \ref{thm:environment}]
Fix an arbitrary environment $e \in \mathcal{E}$. By the assumed environment-specific factorization, for each treatment arm $t \in \{0,1\}$ we have
\[
\mathbf{M}_{ZX|t}^{(e)}
=
\mathbf{A}_t^{(e)} \mathbf{D}_{U|t}^{(e)} \mathbf{B}^\top,
\qquad
\mathbf{M}_{ZXY|t}^{(e)}
=
\mathbf{A}_t^{(e)} \mathbf{D}_{U|t}^{(e)} \mathbf{D}_{Y|t} \mathbf{B}^\top.
\]
By hypothesis, the matrix $\mathbf{V}_k^{(e)}$ spans the shared row space in environment $e$, and we may write
\[
\mathbf{B}^\top
=
\mathbf{R}^{(e)} (\mathbf{V}_k^{(e)})^\top
\]
for some nonsingular matrix $\mathbf{R}^{(e)} \in \mathbb{R}^{k \times k}$.

Now define, exactly as in the main text but with environment index $e$,
\[
\widetilde{\mathbf{Q}}_t^{(e)}
\triangleq
\big(\mathbf{M}_{ZX|t}^{(e)} \mathbf{V}_k^{(e)}\big)^\dagger
\big(\mathbf{M}_{ZXY|t}^{(e)} \mathbf{V}_k^{(e)}\big).
\]
Substituting the factorizations above into these projected matrices gives
\[
\mathbf{M}_{ZX|t}^{(e)} \mathbf{V}_k^{(e)}
=
\mathbf{A}_t^{(e)} \mathbf{D}_{U|t}^{(e)} \mathbf{B}^\top \mathbf{V}_k^{(e)},
\qquad
\mathbf{M}_{ZXY|t}^{(e)} \mathbf{V}_k^{(e)}
=
\mathbf{A}_t^{(e)} \mathbf{D}_{U|t}^{(e)} \mathbf{D}_{Y|t} \mathbf{B}^\top \mathbf{V}_k^{(e)}.
\]
Using the representation $\mathbf{B}^\top = \mathbf{R}^{(e)} (\mathbf{V}_k^{(e)})^\top$ and the orthonormality of the columns of $\mathbf{V}_k^{(e)}$, we obtain $\mathbf{B}^\top \mathbf{V}_k^{(e)} = \mathbf{R}^{(e)}$. Therefore
\[
\mathbf{M}_{ZX|t}^{(e)} \mathbf{V}_k^{(e)}
=
\mathbf{A}_t^{(e)} \mathbf{D}_{U|t}^{(e)} \mathbf{R}^{(e)},
\qquad
\mathbf{M}_{ZXY|t}^{(e)} \mathbf{V}_k^{(e)}
=
\mathbf{A}_t^{(e)} \mathbf{D}_{U|t}^{(e)} \mathbf{D}_{Y|t} \mathbf{R}^{(e)}.
\]

At this point the same algebra as in the proof of Theorem~\ref{thm:pairing} applies. Since $\mathbf{A}_t^{(e)} \mathbf{D}_{U|t}^{(e)}$ has full column rank and $\mathbf{R}^{(e)}$ is invertible, the pseudo-inverse-of-a-product identity yields
\[
\big(\mathbf{M}_{ZX|t}^{(e)} \mathbf{V}_k^{(e)}\big)^\dagger
=
\big(\mathbf{R}^{(e)}\big)^{-1}
\big(\mathbf{A}_t^{(e)} \mathbf{D}_{U|t}^{(e)}\big)^\dagger.
\]
Hence
\begin{align*}
\widetilde{\mathbf{Q}}_t^{(e)}
&=
\big(\mathbf{M}_{ZX|t}^{(e)} \mathbf{V}_k^{(e)}\big)^\dagger
\big(\mathbf{M}_{ZXY|t}^{(e)} \mathbf{V}_k^{(e)}\big) \\
&=
\big(\mathbf{R}^{(e)}\big)^{-1}
\big(\mathbf{A}_t^{(e)} \mathbf{D}_{U|t}^{(e)}\big)^\dagger
\big(\mathbf{A}_t^{(e)} \mathbf{D}_{U|t}^{(e)} \mathbf{D}_{Y|t} \mathbf{R}^{(e)}\big) \\
&=
\big(\mathbf{R}^{(e)}\big)^{-1}\mathbf{D}_{Y|t}\mathbf{R}^{(e)},
\end{align*}
because
\[
\big(\mathbf{A}_t^{(e)} \mathbf{D}_{U|t}^{(e)}\big)^\dagger
\big(\mathbf{A}_t^{(e)} \mathbf{D}_{U|t}^{(e)}\big)
=
\mathbf{I}_k.
\]

Subtracting the two treatment arms gives
\begin{align*}
\Delta \widetilde{\mathbf{Q}}^{(e)}
&\triangleq
\widetilde{\mathbf{Q}}_1^{(e)}-\widetilde{\mathbf{Q}}_0^{(e)} \\
&=
\big(\mathbf{R}^{(e)}\big)^{-1}\mathbf{D}_{Y|1}\mathbf{R}^{(e)}
-
\big(\mathbf{R}^{(e)}\big)^{-1}\mathbf{D}_{Y|0}\mathbf{R}^{(e)} \\
&=
\big(\mathbf{R}^{(e)}\big)^{-1}
\big(\mathbf{D}_{Y|1}-\mathbf{D}_{Y|0}\big)
\mathbf{R}^{(e)}
=
\big(\mathbf{R}^{(e)}\big)^{-1}
\mathbf{D}_{\tau}
\mathbf{R}^{(e)}.
\end{align*}

The stated spectral identities now follow immediately from similarity invariance of the spectrum. Since
\[
\widetilde{\mathbf{Q}}_t^{(e)}
=
\big(\mathbf{R}^{(e)}\big)^{-1}\mathbf{D}_{Y|t}\mathbf{R}^{(e)},
\]
the matrix \(\widetilde{\mathbf{Q}}_t^{(e)}\) is similar to the diagonal matrix \(\mathbf{D}_{Y|t}\), and therefore
\[
\spec(\widetilde{\mathbf{Q}}_t^{(e)})
=
\spec(\mathbf{D}_{Y|t})
=
\left\{\E[Y^{(t)} \mid U=u]\right\}_{u=1}^k,
\]
counted with multiplicity. Likewise, since
\[
\Delta \widetilde{\mathbf{Q}}^{(e)}
=
\big(\mathbf{R}^{(e)}\big)^{-1}\mathbf{D}_{\tau}\mathbf{R}^{(e)},
\]
we have
\[
\spec(\Delta \widetilde{\mathbf{Q}}^{(e)})
=
\spec(\mathbf{D}_{\tau})
=
\{\tau(u)\}_{u=1}^k,
\]
again counted with multiplicity.
\end{proof}

\begin{proof}[Proof of Proposition \ref{prop:homogeneity}]
Suppose first that
\[
\tau(1)=\cdots=\tau(k)=\tau_\star.
\]
By definition of \(\mathbf{D}_{\tau}\), this means that every diagonal entry of \(\mathbf{D}_{\tau}\) is equal to \(\tau_\star\), and hence \(\mathbf{D}_{\tau}=\tau_\star \mathbf{I}_k\). Now Theorem~\ref{thm:pairing} gives
\[
\Delta \widetilde{\mathbf{Q}}
=
\mathbf{R}^{-1}\mathbf{D}_{\tau}\mathbf{R}.
\]
Substituting \(\mathbf{D}_{\tau}=\tau_\star \mathbf{I}_k\) yields
\[
\begin{aligned}
\Delta \widetilde{\mathbf{Q}}
&=
\mathbf{R}^{-1}(\tau_\star \mathbf{I}_k)\mathbf{R}
=
\tau_\star \mathbf{R}^{-1}\mathbf{I}_k\mathbf{R}
=
\tau_\star \mathbf{I}_k,
\end{aligned}
\]
because scalar matrices commute with every matrix. Therefore \(\Delta \widetilde{\mathbf{Q}}=\tau_\star \mathbf{I}_k\).

Conversely, suppose that \(\Delta \widetilde{\mathbf{Q}}=\tau_\star \mathbf{I}_k\). Using again the similarity representation from Theorem~\ref{thm:pairing}, \(\Delta \widetilde{\mathbf{Q}}=\mathbf{R}^{-1}\mathbf{D}_{\tau}\mathbf{R}\). Conjugating both sides by \(\mathbf{R}\) gives
\[
\begin{aligned}
\mathbf{R}\,\Delta \widetilde{\mathbf{Q}}\,\mathbf{R}^{-1}
&=
\mathbf{R}\mathbf{R}^{-1}\mathbf{D}_{\tau}\mathbf{R}\mathbf{R}^{-1}
=
\mathbf{D}_{\tau}.
\end{aligned}
\]
Substituting \(\Delta \widetilde{\mathbf{Q}}=\tau_\star \mathbf{I}_k\), we obtain
\[
\begin{aligned}
\mathbf{D}_{\tau}
&=
\mathbf{R}\,(\tau_\star \mathbf{I}_k)\,\mathbf{R}^{-1}
=
\tau_\star \mathbf{R}\mathbf{I}_k\mathbf{R}^{-1}
=
\tau_\star \mathbf{I}_k,
\end{aligned}
\]
again using the fact that \(\tau_\star \mathbf{I}_k\) is a scalar matrix. Hence \(\mathbf{D}_{\tau}=\tau_\star \mathbf{I}_k\).

Since \(\mathbf{D}_{\tau}=\diag(\tau(1),\dots,\tau(k))\) is diagonal, the identity \(\mathbf{D}_{\tau}=\tau_\star \mathbf{I}_k\) is equivalent to the coordinatewise statement
\[
\tau(u)=\tau_\star
\qquad\text{for every }u \in \{1,\dots,k\}.
\]
This proves the equivalence
\[
\tau(1)=\cdots=\tau(k)=\tau_\star
\quad\Longleftrightarrow\quad
\Delta \widetilde{\mathbf{Q}}=\tau_\star \mathbf{I}_k.
\]

For the final identity, Theorem~\ref{thm:moment_equivalence} states that for every integer \(\ell \ge 0\),
\[
\begin{aligned}
m_\ell \triangleq \E[\tau(U)^\ell]
&=
\mathbf{a}^\top (\Delta \widetilde{\mathbf{Q}})^\ell \mathbf{c}.
\end{aligned}
\]
Specializing to \(\ell=1\) gives \(\E[\tau(U)] = \mathbf{a}^\top \Delta \widetilde{\mathbf{Q}}\,\mathbf{c}\). Under latent causal homogeneity, \(\tau(U)=\tau_\star\) almost surely, so \(\E[\tau(U)]=\tau_\star\). Combining the two displays yields
\[
\tau_\star
=
\mathbf{a}^\top \Delta \widetilde{\mathbf{Q}}\,\mathbf{c},
\]
which is exactly the claimed identity.
\end{proof}

\subsection{G. Proof of Theorem \ref{thm:sample_complexity}}
\label{app:proof_thm_sample_complexity_revised}

\begin{proof}
Let
\[
\Lambda_\eta\triangleq \log\frac{8(d_z+d_x)}{\eta},
\qquad
\varepsilon_{n,\eta}\triangleq
C_{\mathrm{mom}}L_{\max}
\sqrt{\frac{\Lambda_\eta}{n\pi}},
\qquad
L_{\max}\triangleq \max\{L_{ZX},L_{ZXY}\},
\]
where \(C_{\mathrm{mom}}\) is the constant from
Lemma~\ref{lem:conditional_moment_concentration}. Write
\[
\mathbf M_{\mathrm{stack}}
\triangleq
\begin{bmatrix}
\mathbf M_{ZX|0}\\
\mathbf M_{ZX|1}
\end{bmatrix},
\qquad
\widehat{\mathbf M}_{\mathrm{stack}}
\triangleq
\begin{bmatrix}
\widehat{\mathbf M}_{ZX|0}\\
\widehat{\mathbf M}_{ZX|1}
\end{bmatrix}.
\]
Let
\[
\sigma
\triangleq
\min\left\{
\sigma_k(\mathbf M_{\mathrm{stack}}),
\min_{t\in\{0,1\}}\sigma_k(\mathbf M_{ZX|t}\mathbf V_k)
\right\}>0,
\]
and define the fixed population quantities
\[
M_X\triangleq \max_{t\in\{0,1\}}\|\mathbf M_{ZX|t}\|_2,\qquad
M_Y\triangleq \max_{t\in\{0,1\}}\|\mathbf M_{ZXY|t}\|_2.
\]
For notational compactness put
\[
\Gamma_X\triangleq 1+C_{\mathrm W}\frac{M_X}{\sigma},
\qquad
\Gamma_Y\triangleq 1+C_{\mathrm W}\frac{M_Y}{\sigma},
\]
where \(C_{\mathrm W}\) is the universal constant in the Wedin subspace
bound used below, and set
\[
\kappa_{ZX}^{\star}
\triangleq
\Gamma_Y+6\Gamma_X\left(\frac{M_Y}{\sigma}+\Gamma_Y\right).
\]
This is a fixed conditioning factor depending only on the population moment
matrices and the singular-value margin \(\sigma\). Increasing the universal
sample-size constant in the theorem statement if necessary, assume throughout
that
\[
n\pi\ge c\Lambda_\eta
\qquad\text{and}\qquad
\varepsilon_{n,\eta}\le
\min\left\{\frac{\sigma}{4\sqrt 2},\,\frac{\sigma}{2\Gamma_X},\,\sigma\right\}.
\]

By Lemma~\ref{lem:conditional_moment_concentration}, with probability at least
\(1-\eta\), simultaneously for \(t=0,1\),
\[
\|\widehat{\mathbf M}_{ZX|t}-\mathbf M_{ZX|t}\|_2\le \varepsilon_{n,\eta},
\qquad
\|\widehat{\mathbf M}_{ZXY|t}-\mathbf M_{ZXY|t}\|_2\le \varepsilon_{n,\eta}.
\]
Work on this event. Then
\[
\widehat{\mathbf M}_{\mathrm{stack}}-\mathbf M_{\mathrm{stack}}
=
\begin{bmatrix}
\widehat{\mathbf M}_{ZX|0}-\mathbf M_{ZX|0}\\
\widehat{\mathbf M}_{ZX|1}-\mathbf M_{ZX|1}
\end{bmatrix},
\]
and hence
\[
\begin{aligned}
\|\widehat{\mathbf M}_{\mathrm{stack}}-\mathbf M_{\mathrm{stack}}\|_2^2
&=
\lambda_{\max}\!\left(
(\widehat{\mathbf M}_{ZX|0}-\mathbf M_{ZX|0})^\top
(\widehat{\mathbf M}_{ZX|0}-\mathbf M_{ZX|0})
\right. \\
&\qquad\left.
+
(\widehat{\mathbf M}_{ZX|1}-\mathbf M_{ZX|1})^\top
(\widehat{\mathbf M}_{ZX|1}-\mathbf M_{ZX|1})
\right) \\
&\le
\|\widehat{\mathbf M}_{ZX|0}-\mathbf M_{ZX|0}\|_2^2+
\|\widehat{\mathbf M}_{ZX|1}-\mathbf M_{ZX|1}\|_2^2
\le
2\varepsilon_{n,\eta}^2.
\end{aligned}
\]
Thus
\[
\|\widehat{\mathbf M}_{\mathrm{stack}}-\mathbf M_{\mathrm{stack}}\|_2
\le
\sqrt2\,\varepsilon_{n,\eta}.
\]

Let \(\mathbf V_k,\widehat{\mathbf V}_k\in\mathbb R^{d_x\times k}\) denote the
top \(k\) right singular-vector matrices of
\(\mathbf M_{\mathrm{stack}}\) and \(\widehat{\mathbf M}_{\mathrm{stack}}\).
By Theorem~\ref{thm:dimensionality}, \(\rank(\mathbf M_{\mathrm{stack}})=k\),
so the population right singular subspace is separated from the null space by
\(\sigma_k(\mathbf M_{\mathrm{stack}})\ge\sigma\). Since
\(\sqrt2\,\varepsilon_{n,\eta}\le\sigma/4\), Wedin's theorem \citep{wedin1972perturbation} gives an
orthogonal matrix \(\mathbf H\in\mathcal O(k)\) such that
\[
\|\widehat{\mathbf V}_k-\mathbf V_k\mathbf H\|_2
\le
C_{\mathrm W}
\frac{\|\widehat{\mathbf M}_{\mathrm{stack}}-\mathbf M_{\mathrm{stack}}\|_2}
{\sigma_k(\mathbf M_{\mathrm{stack}})}
\le
C_{\mathrm W}\frac{\varepsilon_{n,\eta}}{\sigma},
\]
after absorbing the factor \(\sqrt2\) into \(C_{\mathrm W}\).

For each \(t\in\{0,1\}\), define the aligned population matrices
\[
\mathbf P_t\triangleq \mathbf M_{ZX|t}\mathbf V_k\mathbf H,\qquad
\mathbf O_t\triangleq \mathbf M_{ZXY|t}\mathbf V_k\mathbf H,
\]
and the empirical matrices
\[
\widehat{\mathbf P}_t\triangleq \widehat{\mathbf M}_{ZX|t}\widehat{\mathbf V}_k,
\qquad
\widehat{\mathbf O}_t\triangleq \widehat{\mathbf M}_{ZXY|t}\widehat{\mathbf V}_k.
\]
Then, using \(\|\widehat{\mathbf V}_k\|_2=1\),
\[
\begin{aligned}
\widehat{\mathbf P}_t-\mathbf P_t
&=
(\widehat{\mathbf M}_{ZX|t}-\mathbf M_{ZX|t})\widehat{\mathbf V}_k
+
\mathbf M_{ZX|t}(\widehat{\mathbf V}_k-\mathbf V_k\mathbf H),\\
\|\widehat{\mathbf P}_t-\mathbf P_t\|_2
&\le
\varepsilon_{n,\eta}
+
M_X C_{\mathrm W}\frac{\varepsilon_{n,\eta}}{\sigma}
=
\Gamma_X\varepsilon_{n,\eta}.
\end{aligned}
\]
Similarly,
\[
\begin{aligned}
\widehat{\mathbf O}_t-\mathbf O_t
&=
(\widehat{\mathbf M}_{ZXY|t}-\mathbf M_{ZXY|t})\widehat{\mathbf V}_k
+
\mathbf M_{ZXY|t}(\widehat{\mathbf V}_k-\mathbf V_k\mathbf H),\\
\|\widehat{\mathbf O}_t-\mathbf O_t\|_2
&\le
\varepsilon_{n,\eta}
+
M_Y C_{\mathrm W}\frac{\varepsilon_{n,\eta}}{\sigma}
=
\Gamma_Y\varepsilon_{n,\eta}.
\end{aligned}
\]
Because \(\mathbf H\) is orthogonal,
\[
\sigma_{\min}(\mathbf P_t)
=
\sigma_k(\mathbf M_{ZX|t}\mathbf V_k\mathbf H)
=
\sigma_k(\mathbf M_{ZX|t}\mathbf V_k)
\ge
\sigma.
\]
Since \(\Gamma_X\varepsilon_{n,\eta}\le\sigma/2\), Lemma~\ref{lem:pinv_perturb}
applies and gives
\[
\|\widehat{\mathbf P}_t^\dagger-\mathbf P_t^\dagger\|_2
\le
\frac{6}{\sigma^2}\|\widehat{\mathbf P}_t-\mathbf P_t\|_2
\le
\frac{6\Gamma_X}{\sigma^2}\varepsilon_{n,\eta}.
\]
Also,
\[
\|\mathbf P_t^\dagger\|_2
=
\frac1{\sigma_{\min}(\mathbf P_t)}
\le
\frac1{\sigma},
\qquad
\|\widehat{\mathbf O}_t\|_2
\le
\|\mathbf O_t\|_2+\|\widehat{\mathbf O}_t-\mathbf O_t\|_2
\le
M_Y+\Gamma_Y\varepsilon_{n,\eta}
\le
M_Y+\Gamma_Y\sigma.
\]

Let
\[
\widetilde{\mathbf Q}_t
\triangleq
(\mathbf M_{ZX|t}\mathbf V_k)^\dagger
(\mathbf M_{ZXY|t}\mathbf V_k),
\qquad
\widehat{\widetilde{\mathbf Q}}_t
\triangleq
\widehat{\mathbf P}_t^\dagger\widehat{\mathbf O}_t.
\]
Since \(\mathbf P_t=(\mathbf M_{ZX|t}\mathbf V_k)\mathbf H\) and
\(\mathbf O_t=(\mathbf M_{ZXY|t}\mathbf V_k)\mathbf H\), the product
pseudo-inverse identity gives
\[
\mathbf P_t^\dagger
=
\mathbf H^\top(\mathbf M_{ZX|t}\mathbf V_k)^\dagger,
\qquad
\mathbf P_t^\dagger\mathbf O_t
=
\mathbf H^\top\widetilde{\mathbf Q}_t\mathbf H
\triangleq
\widetilde{\mathbf Q}_t^{\,\mathbf H}.
\]
Therefore
\[
\widehat{\widetilde{\mathbf Q}}_t-\widetilde{\mathbf Q}_t^{\,\mathbf H}
=
\widehat{\mathbf P}_t^\dagger\widehat{\mathbf O}_t
-
\mathbf P_t^\dagger\mathbf O_t
=
(\widehat{\mathbf P}_t^\dagger-\mathbf P_t^\dagger)\widehat{\mathbf O}_t
+
\mathbf P_t^\dagger(\widehat{\mathbf O}_t-\mathbf O_t),
\]
and consequently
\[
\begin{aligned}
\|\widehat{\widetilde{\mathbf Q}}_t-\widetilde{\mathbf Q}_t^{\,\mathbf H}\|_2
&\le
\|\widehat{\mathbf P}_t^\dagger-\mathbf P_t^\dagger\|_2
\|\widehat{\mathbf O}_t\|_2
+
\|\mathbf P_t^\dagger\|_2
\|\widehat{\mathbf O}_t-\mathbf O_t\|_2\\
&\le
\frac{6\Gamma_X}{\sigma^2}\varepsilon_{n,\eta}(M_Y+\Gamma_Y\sigma)
+
\frac{\Gamma_Y}{\sigma}\varepsilon_{n,\eta}\\
&=
\left[
6\Gamma_X\left(\frac{M_Y}{\sigma}+\Gamma_Y\right)+\Gamma_Y
\right]\frac{\varepsilon_{n,\eta}}{\sigma}
=
\kappa_{ZX}^{\star}\frac{\varepsilon_{n,\eta}}{\sigma}.
\end{aligned}
\]
Subtracting treatment arms yields, with
\[
\Delta\widehat{\widetilde{\mathbf Q}}
\triangleq
\widehat{\widetilde{\mathbf Q}}_1-\widehat{\widetilde{\mathbf Q}}_0,
\qquad
\Delta\widetilde{\mathbf Q}^{\,\mathbf H}
\triangleq
\widetilde{\mathbf Q}_1^{\,\mathbf H}-\widetilde{\mathbf Q}_0^{\,\mathbf H},
\]
the deterministic bound
\[
\|\Delta\widehat{\widetilde{\mathbf Q}}
-
\Delta\widetilde{\mathbf Q}^{\,\mathbf H}\|_2
\le
2\kappa_{ZX}^{\star}\frac{\varepsilon_{n,\eta}}{\sigma}.
\]
Absorbing the factor \(2\) into the universal constant, write
\[
\|\Delta\widehat{\widetilde{\mathbf Q}}
-
\Delta\widetilde{\mathbf Q}^{\,\mathbf H}\|_2
\le
C_\Delta\kappa_{ZX}^{\star}\frac{\varepsilon_{n,\eta}}{\sigma}.
\]

At population level, Theorem~\ref{thm:pairing} gives
\[
\Delta\widetilde{\mathbf Q}
=
\mathbf R^{-1}\mathbf D_\tau\mathbf R,
\qquad
\mathbf D_\tau=\diag(\tau(1),\ldots,\tau(k)).
\]
Since
\[
\Delta\widetilde{\mathbf Q}^{\,\mathbf H}
=
\mathbf H^\top\Delta\widetilde{\mathbf Q}\mathbf H
=
(\mathbf R\mathbf H)^{-1}\mathbf D_\tau(\mathbf R\mathbf H),
\]
define
\[
\mathbf S\triangleq \mathbf R\mathbf H,\qquad
\mathbf \Lambda\triangleq \mathbf D_\tau,\qquad
\mathbf E_Q\triangleq
\Delta\widehat{\widetilde{\mathbf Q}}
-
\Delta\widetilde{\mathbf Q}^{\,\mathbf H},
\qquad
\mathbf F\triangleq \mathbf S\mathbf E_Q\mathbf S^{-1}.
\]
Then
\[
\mathbf S\Delta\widehat{\widetilde{\mathbf Q}}\mathbf S^{-1}
=
\mathbf S(\Delta\widetilde{\mathbf Q}^{\,\mathbf H}+\mathbf E_Q)\mathbf S^{-1}
=
\mathbf \Lambda+\mathbf F.
\]
Because \(\mathbf H\) is orthogonal and
\(\mathbf B^\top=\mathbf R\mathbf V_k^\top\) with \(\mathbf V_k\) orthonormal,
the nonzero singular values of \(\mathbf R\) and \(\mathbf B^\top\) coincide;
hence
\[
\kappa(\mathbf S)=\kappa(\mathbf R\mathbf H)=\kappa(\mathbf R)
=\kappa(\mathbf B^\top)\triangleq \kappa_B.
\]
Therefore
\[
\|\mathbf F\|_2
\le
\|\mathbf S\|_2\|\mathbf S^{-1}\|_2\|\mathbf E_Q\|_2
=
\kappa_B\|\mathbf E_Q\|_2
\le
C_\Delta
\frac{\kappa_B\kappa_{ZX}^{\star}}{\sigma}\varepsilon_{n,\eta}.
\]
Set
\[
\rho_{n,\eta}
\triangleq
C_\Delta
\frac{\kappa_B\kappa_{ZX}^{\star}}{\sigma}\varepsilon_{n,\eta}.
\]

Since \(\Delta\widehat{\widetilde{\mathbf Q}}\) is similar to
\(\mathbf\Lambda+\mathbf F\), Lemma~\ref{lem:diag_localization} implies the
spectral inclusion
\[
\forall\,\widehat\tau\in\spec(\Delta\widehat{\widetilde{\mathbf Q}}),
\qquad
\min_{1\le u\le k}|\widehat\tau-\tau(u)|
\le
\|\mathbf F\|_2
\le
\rho_{n,\eta}.
\]
Moreover, if the latent treatment effects are pairwise separated and
\(2\rho_{n,\eta}<\delta_{\min}\), where
\[
\delta_{\min}\triangleq \min_{u\neq v}|\tau(u)-\tau(v)|,
\]
then the local counting part of Lemma~\ref{lem:diag_localization} gives exactly
one empirical eigenvalue in each disc
\[
\{z\in\mathbb C:|z-\tau(u)|<\delta_u/2\},
\qquad
\delta_u\triangleq \min_{v\neq u}|\tau(u)-\tau(v)|.
\]
Thus there exists a permutation \(\rho\in S_k\) such that
\[
\max_{u\in\{1,\ldots,k\}}
|\widehat\tau_{\rho(u)}-\tau(u)|
\le
\rho_{n,\eta}
=
C_\Delta
\frac{\kappa_B\kappa_{ZX}^{\star}}{\sigma}\varepsilon_{n,\eta}.
\]
Substituting the definition of \(\varepsilon_{n,\eta}\) yields
\[
\max_{u\in\{1,\ldots,k\}}
|\widehat\tau_{\rho(u)}-\tau(u)|
\le
C
\frac{\kappa_B\kappa_{ZX}^{\star}L_{\max}}{\sigma}
\sqrt{\frac{\log(8(d_z+d_x)/\eta)}{n\pi}}.
\]

It remains to prove the lifted left-eigenvector perturbation bound. Fix
\(u\) with \(\delta_u>0\), and assume
\[
4\rho_{n,\eta}<\delta_u,
\]
which is again enforced by increasing the sample-size constant in the theorem
statement. Lemma~\ref{lem:left_eig_diag}, applied to
\(\mathbf\Lambda+\mathbf F\), gives a left eigenvector
\(\widehat{\mathbf y}_u^\top\) associated with the matched empirical
eigenvalue and satisfying
\[
\sin\theta(\widehat{\mathbf y}_u^\top,\mathbf e_u^\top)
\le
C\frac{\|\mathbf F\|_2}{\delta_u}
\le
C
\frac{\kappa_B\kappa_{ZX}^{\star}}{\delta_u\sigma}
\varepsilon_{n,\eta}.
\]
Define the corresponding compressed left eigenvectors by
\[
\widehat{\mathbf w}_u^\top\triangleq \widehat{\mathbf y}_u^\top\mathbf S,
\qquad
\mathbf w_u^\top\mathbf H\triangleq \mathbf e_u^\top\mathbf S.
\]
Then \(\widehat{\mathbf w}_u^\top\) is a left eigenvector of
\(\Delta\widehat{\widetilde{\mathbf Q}}\), while
\(\mathbf w_u^\top\mathbf H\) is the \(\mathbf H\)-aligned population left
eigenvector. Lemma~\ref{lem:angle_linear_map} gives
\[
\begin{aligned}
\sin\theta(\widehat{\mathbf w}_u^\top,\mathbf w_u^\top\mathbf H)
&=
\sin\theta(\widehat{\mathbf y}_u^\top\mathbf S,\mathbf e_u^\top\mathbf S)\\
&\le
\kappa(\mathbf S)
\sin\theta(\widehat{\mathbf y}_u^\top,\mathbf e_u^\top)\\
&\le
C
\frac{\kappa_B^2\kappa_{ZX}^{\star}}{\delta_u\sigma}
\varepsilon_{n,\eta}.
\end{aligned}
\]

Now lift from compressed coordinates back to the proxy row space. Choose unit
representatives of the one-dimensional eigenspaces so that
\[
\|\widehat{\mathbf w}_u\|_2=\|\mathbf w_u\mathbf H\|_2=1
\]
and their phase/sign is aligned. Then
\[
\|\widehat{\mathbf w}_u^\top-\mathbf w_u^\top\mathbf H\|_2
\le
\sqrt2\,
\sin\theta(\widehat{\mathbf w}_u^\top,\mathbf w_u^\top\mathbf H).
\]
Define the lifted rows
\[
\widehat{\mathbf b}_u^\top
\triangleq
\widehat{\mathbf w}_u^\top\widehat{\mathbf V}_k^\top,
\qquad
\mathbf b_u^\top
\triangleq
\mathbf w_u^\top\mathbf V_k^\top.
\]
Because \(\widehat{\mathbf V}_k\) and \(\mathbf V_k\mathbf H\) have orthonormal
columns, right multiplication by their transposes is an isometry on
\(\mathbb C^k\). Hence the lifted representatives also have unit norm. Insert
the aligned term:
\[
(\mathbf w_u^\top\mathbf H)(\mathbf V_k\mathbf H)^\top
=
\mathbf w_u^\top\mathbf H\mathbf H^\top\mathbf V_k^\top
=
\mathbf w_u^\top\mathbf V_k^\top
=
\mathbf b_u^\top.
\]
Therefore
\[
\begin{aligned}
\|\widehat{\mathbf b}_u^\top-\mathbf b_u^\top\|_2
&=
\left\|
\widehat{\mathbf w}_u^\top\widehat{\mathbf V}_k^\top
-
(\mathbf w_u^\top\mathbf H)(\mathbf V_k\mathbf H)^\top
\right\|_2\\
&\le
\|(\widehat{\mathbf w}_u^\top-\mathbf w_u^\top\mathbf H)
\widehat{\mathbf V}_k^\top\|_2
+
\|(\mathbf w_u^\top\mathbf H)
(\widehat{\mathbf V}_k^\top-(\mathbf V_k\mathbf H)^\top)\|_2\\
&\le
\|\widehat{\mathbf w}_u^\top-\mathbf w_u^\top\mathbf H\|_2
+
\|\mathbf w_u^\top\mathbf H\|_2
\|\widehat{\mathbf V}_k-\mathbf V_k\mathbf H\|_2\\
&\le
\sqrt2\,
\sin\theta(\widehat{\mathbf w}_u^\top,\mathbf w_u^\top\mathbf H)
+
C_{\mathrm W}\frac{\varepsilon_{n,\eta}}{\sigma}.
\end{aligned}
\]
Since both lifted representatives have unit norm,
\[
\sin\theta(\widehat{\mathbf b}_u^\top,\mathbf b_u^\top)
\le
\|\widehat{\mathbf b}_u^\top-\mathbf b_u^\top\|_2.
\]
Combining the last two displays gives
\[
\sin\theta(\widehat{\mathbf b}_u^\top,\mathbf b_u^\top)
\le
C
\left(
\frac{\kappa_B^2\kappa_{ZX}^{\star}}{\delta_u\sigma}
+
\frac1{\sigma}
\right)
\varepsilon_{n,\eta}.
\]
Substituting \(\varepsilon_{n,\eta}\) yields
\[
\sin\theta(\widehat{\mathbf b}_u^\top,\mathbf b_u^\top)
\le
C L_{\max}
\left(
\frac{\kappa_B^2\kappa_{ZX}^{\star}}{\delta_u\sigma}
+
\frac1{\sigma}
\right)
\sqrt{\frac{\log(8(d_z+d_x)/\eta)}{n\pi}},
\]
which proves the claimed lifted-row perturbation bound.
\end{proof}

\subsection{H. Proof of Theorem \ref{thm:B_p_recovery}}

\begin{proof}
Apply Theorem~\ref{thm:sample_complexity} with failure probability \(\eta/2\), and apply Lemma~\ref{lem:mean_concentration} with failure probability \(\eta/2\). After increasing universal constants, both events hold simultaneously with probability at least \(1-\eta\), and all logarithmic factors are bounded by \(\Lambda_\eta^\star\).

On the event from Theorem~\ref{thm:sample_complexity}, the condition
\(r_{\tau,n,\eta}<\delta_{\min}/2\) gives a permutation \(\rho\in S_k\) such that
\[
\max_{1\le u\le k}
|\widehat\tau_{\rho(u)}-\tau(u)|
\le
r_{\tau,n,\eta}.
\]
Moreover, for each \(u\), the matched empirical eigenvalue lies in an isolation disc around the real population eigenvalue \(\tau(u)\). Since
\(\Delta\widehat{\widetilde{\mathbf Q}}\) is real, non-real eigenvalues occur in conjugate pairs. The isolation disc contains exactly one eigenvalue, so the matched empirical eigenvalue must be real. Hence the corresponding empirical left eigenvector can be chosen real.

Let
\[
\mathbf q_u^\top
\triangleq
\frac{\mathbf b_u^\top}{\|\mathbf b_u\|_2},
\]
where \(\mathbf b_u^\top\) is the \(u\)-th row of the true matrix
\(\mathbf B^\top\). Since \(b_{u1}=1\),
\[
|q_{u1}|
=
\frac{1}{\|\mathbf b_u\|_2}
\ge
\alpha_{\mathrm{anc}}.
\]
The lifted-row part of Theorem~\ref{thm:sample_complexity}, with
\(\delta_u\ge \delta_{\min}\), gives
\[
\sin\theta\left(
\widehat{\mathbf q}_{\rho(u)}^\top,
\mathbf q_u^\top
\right)
\le
r_{B,n,\eta}
\]
after choosing the empirical lifted row as a unit-norm representative. Align the real sign of
\(\widehat{\mathbf q}_{\rho(u)}^\top\) so that its inner product with
\(\mathbf q_u^\top\) is nonnegative. For unit vectors with aligned sign,
\[
\left\|
\widehat{\mathbf q}_{\rho(u)}^\top-\mathbf q_u^\top
\right\|_2
\le
\sqrt2\,
\sin\theta\left(
\widehat{\mathbf q}_{\rho(u)}^\top,
\mathbf q_u^\top
\right)
\le
\sqrt2\,r_{B,n,\eta}.
\]
The condition \(r_{B,n,\eta}\le c_0\alpha_{\mathrm{anc}}\), with \(c_0\) sufficiently small, ensures
\[
\left\|
\widehat{\mathbf q}_{\rho(u)}^\top-\mathbf q_u^\top
\right\|_2
\le
\frac{|q_{u1}|}{2}.
\]
Lemma~\ref{lem:anchor_normalization} then gives
\[
\begin{aligned}
\left\|
\widehat{\mathbf b}_{\rho(u)}^\top-\mathbf b_u^\top
\right\|_2
&=
\left\|
\frac{\widehat{\mathbf q}_{\rho(u)}^\top}
{\widehat q_{\rho(u),1}}
-
\frac{\mathbf q_u^\top}{q_{u1}}
\right\|_2  \\
&\le
4|q_{u1}|^{-2}
\left\|
\widehat{\mathbf q}_{\rho(u)}^\top-\mathbf q_u^\top
\right\|_2  \\
&\le
4\sqrt2\,\alpha_{\mathrm{anc}}^{-2}r_{B,n,\eta}.
\end{aligned}
\]
Choosing \(C_{\mathrm{anc}}\ge 4\sqrt2\) proves
\[
\max_{1\le u\le k}
\left\|
\widehat{\mathbf b}_{\rho(u)}^\top-\mathbf b_u^\top
\right\|_2
\le
\varepsilon_{B,n,\eta}.
\]

Let \(\widehat{\mathbf B}_\rho^\top\) denote the row-permuted estimator whose \(u\)-th row is
\(\widehat{\mathbf b}_{\rho(u)}^\top\). Then
\[
\|\widehat{\mathbf B}_\rho-\mathbf B\|_2
=
\|\widehat{\mathbf B}_\rho^\top-\mathbf B^\top\|_2
\le
\|\widehat{\mathbf B}_\rho^\top-\mathbf B^\top\|_F
\le
\sqrt{k}\,\varepsilon_{B,n,\eta}.
\]

It remains to control the mixture weights. By assumption,
\[
\sqrt{k}\,\varepsilon_{B,n,\eta}\le \sigma_B/2,
\]
so Weyl's inequality gives
\[
\sigma_k(\widehat{\mathbf B}_\rho)
\ge
\sigma_k(\mathbf B)-\|\widehat{\mathbf B}_\rho-\mathbf B\|_2
\ge
\sigma_B/2.
\]
Therefore
\[
\|\widehat{\mathbf B}_\rho^\dagger\|_2
\le
\frac{2}{\sigma_B}.
\]
Applying Lemma~\ref{lem:pinv_perturb} with
\[
\mathbf P=\mathbf B,
\qquad
\widehat{\mathbf P}=\widehat{\mathbf B}_\rho,
\qquad
\|\widehat{\mathbf B}_\rho-\mathbf B\|_2
\le
\sqrt{k}\,\varepsilon_{B,n,\eta},
\]
yields
\[
\|\widehat{\mathbf B}_\rho^\dagger-\mathbf B^\dagger\|_2
\le
\frac{6\sqrt{k}}{\sigma_B^2}
\varepsilon_{B,n,\eta}.
\]

Since \(\bm\mu_X=\mathbf B\bm p\) and \(\mathbf B\) has full column rank,
\[
\bm p=\mathbf B^\dagger\bm\mu_X.
\]
Hence
\[
\begin{aligned}
\widehat{\bm p}_{\rho,\mathrm{raw}}-\bm p
&=
\widehat{\mathbf B}_\rho^\dagger\widehat{\bm\mu}_X
-
\mathbf B^\dagger\bm\mu_X  \\
&=
\widehat{\mathbf B}_\rho^\dagger
(\widehat{\bm\mu}_X-\bm\mu_X)
+
(\widehat{\mathbf B}_\rho^\dagger-\mathbf B^\dagger)\bm\mu_X.
\end{aligned}
\]
Taking norms and using Lemma~\ref{lem:mean_concentration},
\[
\begin{aligned}
\|\widehat{\bm p}_{\rho,\mathrm{raw}}-\bm p\|_2
&\le
\|\widehat{\mathbf B}_\rho^\dagger\|_2
\|\widehat{\bm\mu}_X-\bm\mu_X\|_2
+
\|\widehat{\mathbf B}_\rho^\dagger-\mathbf B^\dagger\|_2
\|\bm\mu_X\|_2 \\
&\le
\frac{2}{\sigma_B}
\varepsilon_{\mu,n,\eta}
+
\frac{6M_\mu\sqrt{k}}{\sigma_B^2}
\varepsilon_{B,n,\eta}.
\end{aligned}
\]
Finally, because \(\bm p\in\Delta^{k-1}\), Lemma~\ref{lem:simplex_projection} gives
\[
\|\widehat{\bm p}_{\rho}-\bm p\|_2
=
\left\|
\Pi_{\Delta^{k-1}}(\widehat{\bm p}_{\rho,\mathrm{raw}})
-
\Pi_{\Delta^{k-1}}(\bm p)
\right\|_2
\le
\|\widehat{\bm p}_{\rho,\mathrm{raw}}-\bm p\|_2.
\]
Combining the last two displays proves the claimed mixture-weight bound.
\end{proof}

\section{Additional Experiments}
\label{app:no_truncation}

The base SPO moment chain of \citet{mazaheri2024synthetic} is constructed
around a square $k\times k$ proxy moment system. In our overcomplete regime
$d_z = d_x = k+3$, two natural inputs are available: restrict to the first $k$ proxy coordinates
and form $\widehat{\mathbf M}_{ZX|t}, \widehat{\mathbf M}_{ZXY|t}\in\mathbb{R}^{k\times k}$
(the baseline reported in Section \ref{sec:experiments}), or feed the full
$(k+3)\times(k+3)$ moment matrices to the algorithm without truncation. This appendix reports a
three-way comparison that includes this no-truncation variant.

\paragraph{Population equivalence, finite-sample divergence.}
At the population level the two inputs yield the same scalar moment sequence.
Writing $\Delta\mathbf{Q}_{\mathrm{amb}} = (\mathbf{B}^\top)^\dagger\mathbf{D}_\tau\mathbf{B}^\top$, the identity $\mathbf{B}^\top(\mathbf{B}^\top)^\dagger = \mathbf{I}_k$ gives $(\Delta\mathbf{Q}_{\mathrm{amb}})^\ell = (\mathbf{B}^\top)^\dagger\mathbf{D}_\tau^\ell\mathbf{B}^\top$ for $\ell\ge 1$, and hence $\mathbf{a}^\top(\Delta\mathbf{Q}_{\mathrm{amb}})^\ell\mathbf{c} = \mathbb{E}[\tau(U)^\ell]$, exactly as in the compressed setting
(Theorem \ref{thm:moment_equivalence}). In finite samples, however, the population
$\mathbf{M}_{ZX|t}$ has rank exactly $k$, so the empirical $(k+3)\times(k+3)$
matrix is full-rank only because of noise. Its $d-k = 3$ smallest singular
values are of order $\sigma_{\min}\sim 1/\sqrt{n_t}$, and inverting it
amplifies noise from those null directions by the same factor, contaminating
the moment sequence consumed by the Hankel pencil.

\paragraph{Empirical comparison.}
Figure \ref{fig:three_way_heatmaps} reports the three-way comparison on the same
$(k, N)$ grid used in Section \ref{sec:experiments_eigenvalues}, with all panels on
a shared logarithmic color scale. The no-truncation variant (right) is
uniformly the worst of the three, often by an order of magnitude over the
truncated baseline (center). At $k=3$, $N=25{,}000$, the spectral,
truncated, and no-truncation estimators achieve median absolute errors of
$0.026$, $0.45$, and $0.90$ respectively. Figure \ref{fig:three_way_convergence}
shows the convergence behavior at $k=3$: the spectral estimator follows the
predicted $\mathcal{O}(n^{-1/2})$ rate, the truncated baseline is noisy and
roughly flat at much higher error, and the no-truncation variant remains
higher still and non-monotone. The oscillations are driven by occasional
near-collinearity in the empirical $(k+3)\times(k+3)$ moment matrix, which
spikes the inversion error in particular trials rather than averaging out
with sample size. Figure \ref{fig:hist_base_full} extends the eigenspace
distributional view of Section \ref{sec:experiments_eigenspaces}: where the
truncated baseline of Figure \ref{fig:hist_base} retains weak shoulders near the
true effects $\{-2, 0, 2\}$, the no-truncation variant displays mass smeared
across the entire interval $[-4, 4]$. Table \ref{tab:three_way} reports the
underlying medians numerically; the Full\,/\,Spectral ratio at
$N = 25{,}000$ ranges from $6.2\times$ at $k = 2$ to $35.3\times$ at $k = 3$,
and stays between $12$ and $32\times$ for $k\in\{4, 5, 6\}$.

\paragraph{Interpretation.}
The picture isolates two distinct contributions of the spectral framework's
SVD step. First, it adaptively selects the $k$-dimensional signal subspace,
which separates it from a fixed first-$k$ truncation; this is the
comparison reported in the main paper. Second, by working with the projected
$k\times k$ matrices throughout, it sidesteps the ill-conditioned ambient
$d\times d$ inversion that drives the no-truncation variant. The first
effect is what separates the green panel from the red panel in
Figure \ref{fig:three_way_heatmaps}; the second is what separates the red panel
from the orange. Both effects vanish at the population level by
Theorem \ref{thm:moment_equivalence}, so the entire empirical gap is a
finite-sample story driven by how the noise along the $d-k$ null directions
of $\widehat{\mathbf M}_{ZX|t}$ propagates through each method. In
overcomplete proxy regimes, then, the truncation built into the original
method is best understood as a noise-control step rather than a structural
feature of the algorithm.

\begin{figure}[!htbp]
    \centering
    \includegraphics[width=1.0\textwidth]{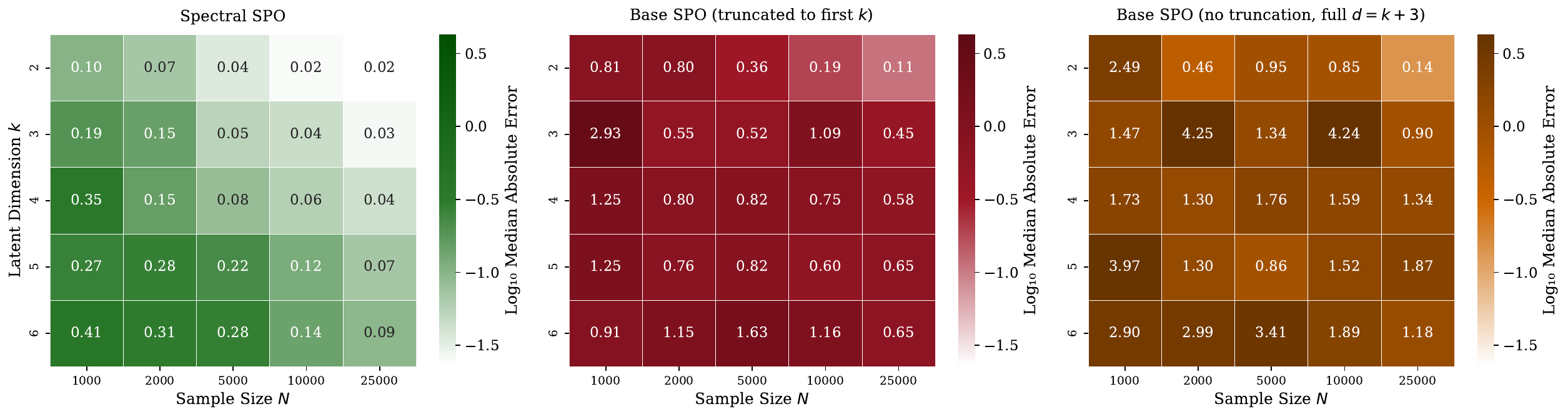}
    \caption{
    Three-way heatmap comparison on the $(k, N)$ grid of
    Section \ref{sec:experiments_eigenvalues}, with all panels on a shared
    logarithmic color scale. \textbf{Left:} Spectral SPO. \textbf{Center:}
    base SPO truncated to the first $k$ proxy coordinates. \textbf{Right:}
    base SPO applied to the full $d = k+3$ moment matrices without
    truncation. The no-truncation variant is uniformly worse than even the
    truncated baseline.
    }
    \label{fig:three_way_heatmaps}
\end{figure}

\begin{figure}[!htbp]
    \centering
    \includegraphics[width=1.0\textwidth]{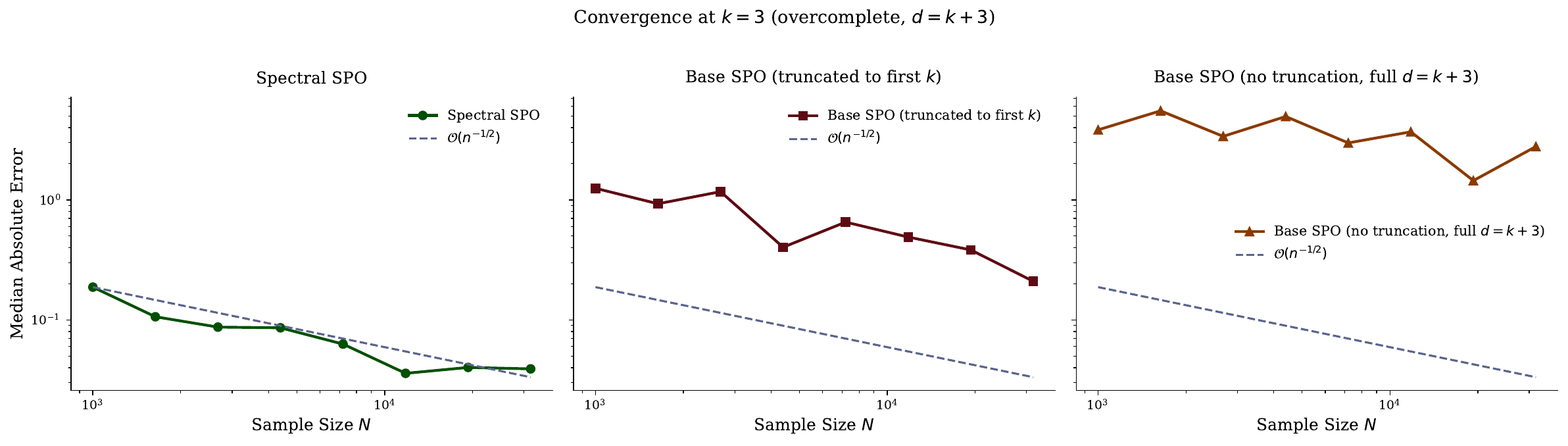}
    \caption{
    Convergence at $k = 3$ and $d_z = d_x = k + 3$. The spectral estimator
    follows the predicted $\mathcal{O}(n^{-1/2})$ rate; base SPO truncated to
    the first $k$ coordinates is noisy and roughly flat at much higher
    error; base SPO without truncation is higher still and non-monotone.
    }
    \label{fig:three_way_convergence}
\end{figure}

\begin{figure}[!htbp]
    \centering
    \includegraphics[width=1.0\textwidth]{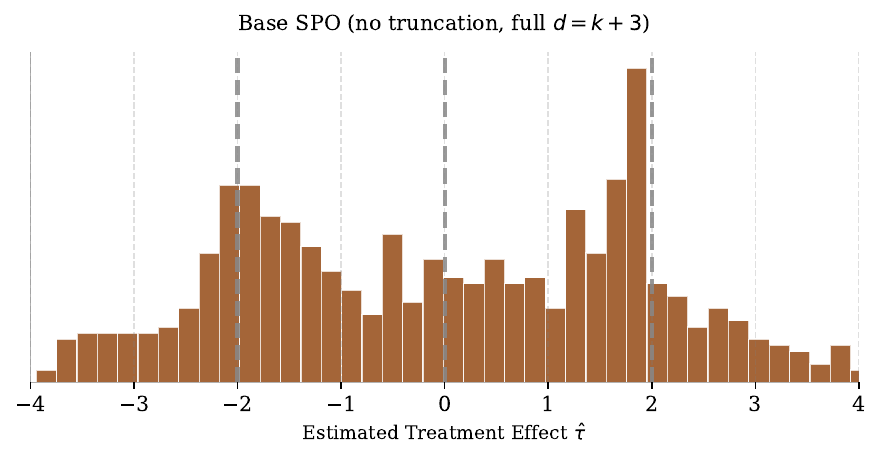}
    \caption{
    Base SPO without truncation, eigenvalue estimates across 300 trials
    with $k = 3$, $N = 5000$, $\tau \in \{-2, 0, 2\}$. The empirical
    distribution is smeared across $[-4, 4]$, in contrast to the truncated
    variant, which retains weak shoulders near the
    true support.
    }
    \label{fig:hist_base_full}
\end{figure}

\begin{figure}[!htbp]
    \centering
    \includegraphics[width=1.0\textwidth]{appendix_base_spo_histogram.pdf}
    \caption{
    Base SPO eigenvalue estimates across 300 trials. \textcolor{red}{As shown earlier in the main paper and reproduced here alongside Fig. \ref{fig:hist_base_full} for direct comparison}, truncation retains discernible shoulders near the true effects $\{-2,0,2\}$.}
    \label{fig:hist}
\end{figure}

\begin{table}[htbp]
  \centering
  \caption{
  Three-way median absolute eigenvalue error in the overcomplete proxy
  regime $d_z = d_x = k+3$. Lower is better.
  \emph{Spec} = Spectral estimator;
  \emph{Trunc} = Base SPO truncated to the first $k$ proxy coordinates;
  \emph{Full} = Base SPO applied to the full $(k+3)\times(k+3)$ moment
  matrices without truncation. Bold entries indicate the best method at
  each sample size; the final column gives the Full\,/\,Spectral ratio at
  $N = 25{,}000$.
  }
  \label{tab:three_way}
  \setlength{\tabcolsep}{4pt}
  \renewcommand{\arraystretch}{1.12}
  \begin{tabular}{c ccc ccc ccc c}
    \toprule
    & \multicolumn{3}{c}{$N=1{,}000$}
    & \multicolumn{3}{c}{$N=5{,}000$}
    & \multicolumn{3}{c}{$N=25{,}000$}
    & \cellcolor{gray!10}{} \\
    \cmidrule(lr){2-4}
    \cmidrule(lr){5-7}
    \cmidrule(lr){8-10}
    $k$
    & Spec & Trunc & Full
    & Spec & Trunc & Full
    & Spec & Trunc & Full
    & \cellcolor{gray!10}{Gain} \\
    \midrule
    \rowcolor{gray!3}
    $2$ & \textbf{0.102} & 0.810 & 2.486 & \textbf{0.035} & 0.357 & 0.951 & \textbf{0.023} & 0.110 & 0.140 & \cellcolor{green!9}{$6.2\times$} \\
    $3$ & \textbf{0.191} & 2.931 & 1.465 & \textbf{0.054} & 0.524 & 1.336 & \textbf{0.026} & 0.445 & 0.905 & \cellcolor{green!21}{$35.3\times$} \\
    \rowcolor{gray!3}
    $4$ & \textbf{0.348} & 1.248 & 1.731 & \textbf{0.083} & 0.815 & 1.759 & \textbf{0.042} & 0.584 & 1.340 & \cellcolor{green!20}{$31.7\times$} \\
    $5$ & \textbf{0.273} & 1.253 & 3.970 & \textbf{0.219} & 0.820 & 0.864 & \textbf{0.069} & 0.653 & 1.868 & \cellcolor{green!18}{$27.2\times$} \\
    \rowcolor{gray!3}
    $6$ & \textbf{0.408} & 0.907 & 2.896 & \textbf{0.283} & 1.631 & 3.414 & \textbf{0.094} & 0.654 & 1.184 & \cellcolor{green!12}{$12.6\times$} \\
    \bottomrule
  \end{tabular}
\end{table}

\end{document}